\newif\ifpnas
\newcommand{\appendixref}[2][]{\ifthenelse{\boolean{pnas}}{SI Appendix {#1}}{Appendix \ref{#2}}}
  \author[a, 1]{Vitor Hadad}
  \author[a]{David Hirshberg} 
  \author[b]{Ruohan Zhan} 
  \author[a]{Stefan Wager} 
  \author[a]{Susan Athey} 
  \affil[a]{
  Stanford Graduate School of Business, Stanford University, Stanford, CA 94305}
  \affil[b]{Institute for Computational and Mathematical Engineering, Stanford University, Stanford, CA 94305}
  \keywords{Adaptive experimentation $|$ Multi-armed bandits $|$ Policy Evaluation $|$ Central limit theorem $|$ Frequentist inference } 
  \author{
  Vitor Hadad \\ \texttt{vitorh@stanford.edu}
  \and David A. Hirshberg \\ \texttt{dhirshbe@stanford.edu}
  \and Ruohan Zhan \\ \texttt{rhzhan@stanford.edu}
  \and Stefan Wager \\ \texttt{swager@stanford.edu}
  \and Susan Athey \\ \texttt{athey@stanford.edu}}
  \date{Stanford University}
\theoremstyle{plain}
\newtheorem{prop}{Proposition}
\newtheorem{lemm}[prop]{Lemma}
\newtheorem{theo}[prop]{Theorem}
\theoremstyle{definition}
\newtheorem{assu}{Assumption}
\theoremstyle{remark}
\newcommand{\Wef}[2]{ \frac{ \ind{ W_{#1} = #2 } }{ e_t(#2)} }
\newcommand{\todist}{\xrightarrow[]{d}}
\newcommand{\toprob}{\xrightarrow[]{p}}
\newcommand{\plim}{\operatorname{plim}}
\newcommand{\hm}{\hat{m}}
\newcommand{\E}{\mathbb{E}}
\DeclarePairedDelimiter\norm{\lVert}{\rVert}
\DeclareMathOperator{\var}{Var}
\title{Confidence Intervals for Policy Evaluation in Adaptive Experiments}
  \renewcommand{\citet}{\cite}
\begin{document}
\maketitle

\begin{abstract}
  Adaptive experimental designs can dramatically improve efficiency in randomized trials. But with adaptively collected data, common estimators based on sample means and inverse propensity-weighted means can be biased or heavy-tailed. This poses statistical challenges, in particular when the experimenter would like to test hypotheses about parameters that were not targeted by the data-collection mechanism.
In this paper, we present a class of test statistics that can handle these challenges. Our approach is to adaptively reweight the terms of an augmented inverse propensity weighting estimator to control the contribution of each term to the estimator's variance. This scheme reduces overall variance and yields an asymptotically normal test statistic.
We validate the accuracy of the resulting estimates and their confidence intervals in numerical experiments and show our methods compare favorably to existing alternatives in terms of mean squared error, coverage, and confidence interval size.
\end{abstract}

\ifpnas
  \dates{This manuscript was compiled on \today}
  \doi{\url{www.pnas.org/cgi/doi/10.1073/pnas.XXXXXXXXXX}}
  \thispagestyle{firststyle}
  \ifthenelse{\boolean{shortarticle}}{\ifthenelse{\boolean{singlecolumn}}{\abscontentformatted}{\abscontent}}{}
\fi

\ifpnas
  \dropcap{A}adaptive
\else
  \section{Introduction}
  Adaptive
\fi
experimental designs can dramatically improve efficiency for particular objectives: maximizing welfare during the experiment \citep{lai1985asymptotically,auer2002nonstochastic} or after it \citep{bubeck2011pure, kasy2019adaptive}; quickly identifying the best treatment arm \citep{mannor2004sample,russo2020simple}; maximizing the power of a particular hypothesis \citep{robbins1952some, hu2006theory, van2008construction}; and so on. To achieve these efficiency gains, we adaptively choose assignments to 
resolve uncertainty about some aspects of the data generating process at the expense of learning little about others.
For example, welfare-maximizing designs tend to focus on differentiating optimal and near-optimal treatments, 
collecting relatively little data about suboptimal ones.

But once the experiment is over, we are often interested in using the adaptively-collected data to answer a variety of questions, not all of them necessarily targeted by the design. For example, a company experimenting with many types of web ads may use a bandit algorithm to maximize click-through rates during an experiment, but still want to quantify the effectiveness of each ad. 
At this stage, fundamental tensions between the experiment objective and statistical inference become apparent: extreme undersampling or non-convergence of the assignment probabilities make re-using this data challenging.

In this paper, we propose a method for constructing frequentist confidence intervals based on approximate normality,
even when challenges of adaptivity are severe, provided that the treatment assignment probabilities are known and satisfy certain conditions. To get a better sense of the challenges we face, we'll first consider 
an example in which traditional approaches to statistical testing fail. Suppose we run a two-stage, two-arm trial as follows. For the first $T/2$ time periods, we randomize assignments with probability 50\% for each arm. After $T/2$ time periods, we identify the arm with the higher sample mean, and for the next $T/2$ time periods we allocate treatment to the seemingly better arm 90\% of the time. Then, one estimator of the expected value $Q(w)$ for each arm $w \in \{1, \, 2\}$ is the sample mean at the end of the experiment,
\begin{equation}
\label{eq:avg}
\hQ^{\text{AVG}}(w) = \frac{1}{T_w}\sum_{\substack{t \le T \\ W_t = w}} Y_t, \quad  T_w := \sum_{\substack{t \le T \\ W_t = w}} 1,
\end{equation}
where $W_t$ denotes the arm pulled in the $t$-th time period and $Y_t$ denotes the observed outcome. Both arms have the same outcome distribution: $Y_t \cond W_t = w \sim \nn\p{0, \, 1}$ for all values of $t$ and $w$.

\begin{figure*}[t]
\begin{center}
\includegraphics[width=\textwidth]{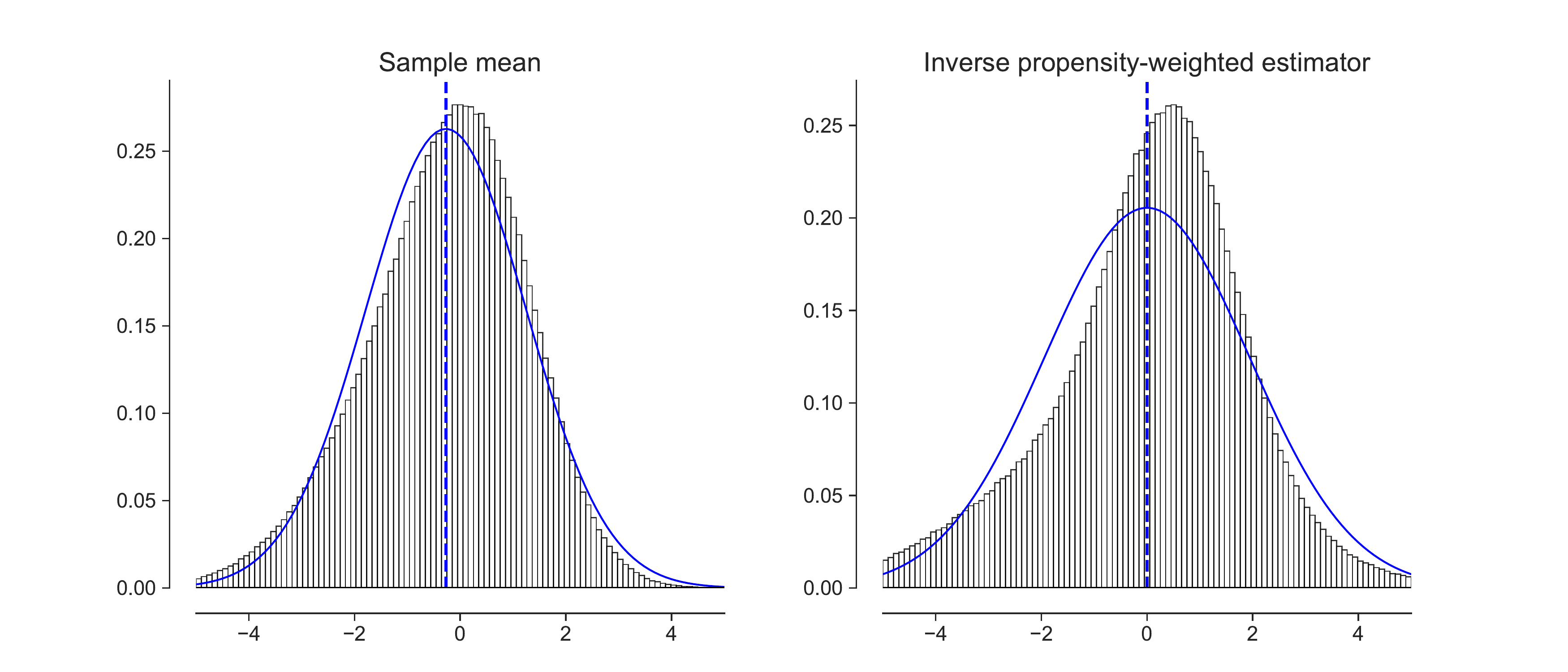} 
\caption{Distribution of the estimates \smash{$\hQ^{\text{AVG}}(1)$} and \smash{$\hQ^{\text{IPW}}(1)$} described in the introduction.
The plots depict the distribution of the estimators for $T = 10^6$, scaled by a factor $\sqrt{T}$ for visualization.
The distributions are overlaid with the normal curve that matches the first two moments of the distribution, along with a dashed
line that denotes the mean. All numbers are aggregated over 1,000,000 replications.}
\label{fig:example}
\end{center}
\end{figure*}

This example is relatively benign in that adaptivity is minimal. Yet, as the left panel of Figure~\ref{fig:example} shows, the estimate of the value of the first arm \smash{$\hQ^{\text{AVG}}(1)$} is biased downward. This is a well-known phenomenon; see e.g. \cite{nie2018adaptively, xu2013estimation, bowden2017unbiased, shin2019bias, shin2020conditional, deshpande2017accurate, deshpande2019online}. The downward bias occurs because arms in which we observe random upward fluctuations initially will be sampled more, while arms in which we observe random downward fluctuations initially will be sampled less. The upward fluctuations are corrected as estimates of arms that are sampled more regress to their mean, while the downward ones may not be corrected because of the reduction in sampling. Here we only show estimates for the first arm so there are no selection bias effects; the bias is a direct consequence of the adaptive data collection.

One often-discussed fix to this particular bias problem is to use the inverse-probability weighting estimator,
$\hQ^{\text{IPW}}(w) =T^{-1} \sum_{t = 1}^T \ind{W_t = w} Y_t / e_t(w)$ where $e_t(w)$ is the probability with
which our adaptive experiment drew arm $w$ in step $t$. This compensates for the outsize influence
of early downward fluctuations that reduce the probability of an arm being assigned by up-weighting later observations
within that arm when we see them. As seen in the right panel of Figure \ref{fig:example},
inverse-probability weighting fixes the bias problem but results in a non-normal asymptotic distribution. In fact, it can exacerbate the problem of inference: when the probability of assignment to the arm of interest tends to zero, the inverse probability weights increase, which in turn causes the tails of the distribution to become heavier. 

If adaptivity essentially vanishes over the experiment, then we are justified in using naive estimators that ignore adaptivity. 
In particular, if assignment probabilities quickly converge to 
constants, then in long-running experiments we can often treat the data as if treatments had always been assigned according to these limiting probabilities; see e.g., \cite{melfi2000estimation} or \cite{van2008construction}. It can be argued that most adaptive designs would eventually converge in this sense if run forever.
However, how quickly they converge, and therefore the number of samples we'd need for adaptivity to be ignorable, 
can vary considerably with the data-generating process. For example, we know that so long as some arm is best,
an $\epsilon$-greedy $K$-armed bandit algorithm will eventually assign to the best arm with probability $1-\epsilon + \epsilon/K$ and the others with probability $\epsilon/K$;
however, this happens only after the best arm is identified, which depends strongly on the unknown spacing between the arm values $Q(1), \ldots, Q(K)$. Moreover, in practice adaptive experiments are often used precisely when there is limited budget for experimentation, therefore substantial data collection after convergence is rare. As a result, if we try to exploit this convergence by using estimators that are only valid in convergent designs,
we get brittle estimators. If we want an estimator that is reliable, we must use one that is valid regardless of whether the assignment probabilities converge.

In this work, we propose a test statistic that is asymptotically unbiased and normal even when assignment probabilities converge to zero or do not converge at all.\footnote{In \appendixref[A.5]{sec:intro_example_revisited}, we revisit the example shown in the introduction and demonstrate how our method leads to an asymptotically normal test statistic for the arm value.} 
We believe this approach to be of practical interest because normal
confidence intervals are widely used in several fields including, e.g., medicine and economics. 
Moreover, though we focus on estimating the value of a pre-specified policy,
our estimates can also be used as input to procedures for testing 
adaptive hypotheses, which have as their starting point a vector of normal estimates~\citep[e.g.,][]{andrews2019inference}.

Other approaches to inference with adaptively collected data are available. One line of research eschews asymptotic normality in favor of developing finite-sample bounds using martingale concentration inequalities \citep[e.g.,][and references therein]{howard2021uniform, robbins1970statistical}. \cite{nie2018adaptively} considers approaches to debiasing value estimates using ideas from conditional inference \citep{fithian2014optimal}. 
And some avoid frequentist arguments altogether, preferring a purely Bayesian approach, although this can produce estimates that have poor frequentist properties \citep{dawid1994selection}. Section~\ref{sec:lit} further reviews papers on policy evaluation.

\section{Policy Evaluation with Adaptively Collected Data}
\label{sec:eval}

We start by establishing some definitions. Each observation in our data is represented by a tuple $(W_t, Y_t)$.
The random variables $W_t \in \mathcal{W}$ are called the arms, treatments or interventions. Arms are categorical. The reward or outcome $Y_t$ represents the individual's response to the treatment. The set of observations up to a certain time $H^T := \{(W_s, Y_s) \}_{s=1}^T$ is called a history.
The treatment assignment probabilities $e_t(w) := \mathbb{P}[W_t = w \cond H^{t-1}]$, also called propensity scores, are
time-varying and decided via some known algorithm, as it is the case with many popular bandit algorithms such as Thompson sampling \citep{thompson1933likelihood, russo2018tutorial}. 

We define causal effects using potential outcome notation \citep{imbens2015causal}. We denote by $Y_t(w)$ the random variable representing the outcome that would be observed if individual $t$ were assigned to a treatment $w$. In any given experiment, this individual can be assigned only one treatment $W_t$
from a set of options $\mathcal{W}$, so we observe only one realized outcome $Y_t = Y_t(W_t)$. We focus on the ``stationary'' setting where individuals, represented by a vector of potential outcomes $(Y_t(w))_{w \in \mathcal{W}}$, are independent and identically distributed. However, the observed outcomes $Y_t$
are in general neither independent nor identically distributed, because the distribution of the treatment assignment $W_t$ depends on the history of outcomes up to time $t$.

Given this setup, we are concerned with the problem of estimating and testing pre-specified hypotheses about the value  of an arm, denoted by $Q(w) := \EE{Y_t(w)}$, as well as differences between two such values, denoted by $\Delta(w, \, w') := \EE{Y_t(w)} - \EE{Y_t(w')}$. We would like to do that even in data-poor situations in which the data-collection mechanism did not target these estimands. 

We will provide consistent and asymptotically normal test statistics for $Q(w)$ and $\Delta(w, \,w')$. This is done in three steps. First, we start with a class of \emph{scoring rules}, which are transformations of the observed outcomes that can be used for unbiased arm evaluation, but whose sampling distribution can be non-normal and heavy-tailed due to adaptivity. Second, we average these objects with carefully chosen data-adaptive weights, obtaining a new estimator with controlled variance at the cost of some finite-sample small bias. Finally, by dividing these estimators by their standard error we obtain a test statistic that has a centered and standard normal limiting distribution.

\subsection{Unbiased scoring rules}
\label{sec:scoring}

A first step in developing methods for inference with adaptive data is to account for sampling bias.
The following construction provides a generic way of doing so. We say that \smash{$\hGamma_t(w)$}
is an unbiased scoring rule for $Q(w)$ if for all $w \in \mathcal{W}$ and $t = 1, \, ..., \, T$,
\begin{equation}
\label{eq:unb}
\EE{\hGamma_t(w) \cond H^{t-1}} = Q(w).
\end{equation}
Given this definition, we can readily verify that
a simple average of such a scoring rule,
\begin{equation}
\label{eq:unb_est}
\hQ_T(w) = \frac{1}{T} \sum_{i = 1}^T \hGamma_t(w),
\end{equation}
is unbiased for $Q(w)$ even though the $\hGamma_t(w)$ are correlated over time, as the next proposition shows.

\begin{prop}
\label{prop:unb}
Let $\cb{Y_t(w)}_{w \in \mathcal{W}}$ be an independent and identically distributed sequence of
potential outcomes for $t = 1, \, \ldots, \, T$, and let $H^t$  denote the observation
history up to time $t$ as described above. Then, any estimator of the form \eqref{eq:unb_est}
based on an unbiased scoring rule \eqref{eq:unb} satisfies \smash{$\mathbb{E}[\hQ_T(w)] = Q(w)$}.
\end{prop}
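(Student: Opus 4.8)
The plan is to reduce the claim to a term-by-term computation via linearity of expectation, and then to evaluate each term using the law of iterated expectations together with the defining property \eqref{eq:unb} of an unbiased scoring rule. First I would apply linearity of expectation to the finite sum in \eqref{eq:unb_est}, writing
\[
\EE{\hQ_T(w)} = \frac{1}{T}\sum_{t=1}^T \EE{\hGamma_t(w)}.
\]
This step is valid as long as each $\hGamma_t(w)$ is integrable, which I would note follows from the unbiasedness condition \eqref{eq:unb} ensuring that the relevant conditional expectation exists; I would state this integrability requirement explicitly rather than leave it implicit.

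Next, for each fixed $t$ I would condition on the history $H^{t-1}$ and apply the tower property. Since the inner conditional expectation is exactly the scoring-rule condition \eqref{eq:unb}, this gives
\[
\EE{\hGamma_t(w)} = \EE{\EE{\hGamma_t(w)\cond H^{t-1}}} = \EE{Q(w)} = Q(w),
\]
where the final equality holds because $Q(w) = \EE{Y_t(w)}$ is a deterministic constant. Substituting back and using that all $T$ summands share the same expectation yields $\EE{\hQ_T(w)} = \tfrac{1}{T}\cdot T\cdot Q(w) = Q(w)$, which is the claim.

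The one point worth emphasizing — and the only place where intuition might mislead — is that the $\hGamma_t(w)$ are correlated across $t$, as the proposition itself flags. I would stress that this correlation is irrelevant to the present claim: the argument never multiplies two scoring rules together, so no covariance terms ever appear, and the tower property applies to each $\EE{\hGamma_t(w)}$ in isolation regardless of how the terms depend on one another. In short, unbiasedness is preserved under averaging by linearity alone; the dependence structure among the $\hGamma_t(w)$ will only become consequential later, when controlling the variance of $\hQ_T(w)$ and establishing asymptotic normality. For that reason I expect no genuine obstacle here — the work is purely in being careful about integrability and in articulating clearly why the correlation does not enter.
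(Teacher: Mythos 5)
Your proof is correct and follows the same route as the paper's: linearity of expectation followed by the tower property and the defining condition \eqref{eq:unb}. Your added remarks on integrability and on why the cross-time correlation of the $\hGamma_t(w)$ is irrelevant are sound and consistent with the paper's own discussion of the deterministic normalization $1/T$.
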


One can easily verify Proposition \ref{prop:unb} by applying the law of iterated expectations and \eqref{eq:unb},
\begin{align*}
\EE{\hQ_T(w)} = \EE{\frac{1}{T} \sum_{t = 1}^T \EE{\hGamma_t(w) \cond H^{t-1}}} = Q(w).
\end{align*}
The key fact underlying this result is that the normalization factor $1/T$ used in \eqref{eq:unb_est}
is deterministic and so cannot be correlated with stochastic fluctuations in the \smash{$\hGamma_t(w)$}.
In particular, we note that the basic averaging estimator \eqref{eq:avg} is not of the form \eqref{eq:unb_est} and
instead has a random denominator $T_{w}$---and is thus not covered by Proposition \ref{prop:unb}.

Given Proposition \ref{prop:unb}, we can readily construct several unbiased estimators for $Q(w)$. One
straightforward option is to use an inverse propensity score weighted (IPW) estimator:
\begin{equation}
  \label{eq:ipw}
  \hQ_T^{IPW}(w) := \frac{1}{T}\sum_{t=1}^T\hGamma_t^{IPW}(w), \ \ \ \hGamma_t^{IPW}(w) := \Wef{t}{w} Y_t.
\end{equation}
This estimator is simple to implement, and one can directly check that the condition \eqref{eq:unb} holds because, by
construction, $\PP{W_t = w \cond H^{t-1}, \, Y_t(w)} = \PP{W_t = w \cond H^{t-1}} = e(w;H^{t-1})$.

The augmented inverse propensity weighted (AIPW) estimator generalizes this by incorporating regression adjustment \citep{robins1994estimation}:
\begin{equation}
  \label{eq:aipw}
  \begin{split}
  &\hQ_T^{AIPW}(w) := \frac{1}{T}\sum_{t=1}^T \hGamma_t^{AIPW}(w), \\
  &\hGamma_t^{AIPW}(w) := \Wef{t}{w} Y_t + \left( 1 - \Wef{t}{w} \right)\hat{m}_t(w).
  \end{split}
\end{equation}
The symbol $\hat{m}_t(w)$ denotes an estimator of the conditional mean function $m(w) = \E[Y_t(w)]$ based on the history $H^{t-1}$, but it need not be a good one---it could be biased or even inconsistent. The second term of $\hGamma_t^{AIPW}(w)$ acts as a control variate:
adding it preserves unbiasedness but can reduce variance, as it has mean zero conditional on $H^{t-1}$ and, 
if $\hat{m}_t(w)$ is a reasonable estimator of $m(w)$, is negatively correlated with the first term. When 
$\hat{m}_t(w)$ is identically zero, the AIPW estimator reduces to the IPW estimator.

\subsection{Asymptotically normal test statistics}

The estimators discussed in the previous section are unbiased by construction, but in general they are not guaranteed to have an asymptotically normal sampling distribution. The reason for this failure of normality was illustrated in the example in the introduction for the IPW estimator. When, purely by chance, an arm has a higher sample mean in the first half of the experiment, it is sampled often in the second half and the IPW estimator concentrates tightly. When the opposite happens, the arm is sampled less frequently and the IPW estimator is more spread out. What we see in Figure~\ref{fig:example} is a heavy-tailed distribution corresponding to the mixture of the two behaviors. Qualitatively, what we need for normality is for the variability of the estimator to be deterministic. Formally, what is required is that the sum of conditional variances of each term in the sequence converges in ratio to the unconditional variance of the estimator \citep[see~e.g.,][Theorem 3.5]{hall2014martingale}. Simple averages of unbiased scoring rules \ref{prop:unb} fail to satisfy this because, as we'll elaborate below, the conditional variances the terms in \eqref{eq:aipw} depend primarily on the behavior of the inverse assignment probabilities $1/e_t(w)$, which may diverge to infinity or fail to converge.

To address this difficulty, we consider a generalization of the AIPW estimator \eqref{eq:aipw} that non-uniformly
averages the unbiased scores $\hGamma^{AIPW}$ using a sequence of \emph{evaluation weights}
$h_t(w)$. The resulting estimator is the \emph{adaptively-weighted AIPW estimator}:
\begin{align}
  \label{eq:aw}
  \hQ^{h}_T(w) = \frac{\sum_{t=1}^T h_t(w) \hGamma_t^{AIPW}(w)}{\sum_{t=1}^T h_t(w)}.
\end{align}

Evaluation weights $h_t(w)$ provide an additional degree of flexibility in controlling the variance and tails of the sampling distribution. When chosen appropriately, these weights compensate for erratic trajectories of the assignment probabilities $e_t(w)$, stabilizing the variance of the estimator. With such weights, the adaptively-weighted AIPW estimator \eqref{eq:aw}, when normalized by an estimate of its standard deviation, has a centered and normal asymptotic distribution. Similar ``self-normalization'' schemes are often key to martingale central limit theorems \citep[see e.g.,][]{pena2008self}.

 Throughout, we will use evaluation weights $h_t(w)$ that are a function of the history $H^{t-1}$; we will call such functions \emph{history-adapted}. Note that if we used weights with sum equal to one, we would have a generalization of the unbiased scoring property \eqref{eq:unb_est}, $\E[h_t(w) \hGamma_t(w) \mid H^{t-1}] = h_t(w)Q(w)$, so the adaptively-weighted estimator \eqref{eq:aw} would be unbiased. In Section \ref{sec:weights} we will discuss weight heuristics that do not sum to one but that empirically seem to reduce variance and mean-squared error relative to alternatives. In that case, the estimator \eqref{eq:aw} will have some bias due to the random denominator $\smash{\sum_{t=1}^T h_t(w)}$. However, for the appropriate choices of evaluation weights $h_t(w)$ this bias disappears asymptotically.
 
The main conditions required by our weighting scheme are stated below. Assumption \ref{assu:inf} requires that the effective sample size after adaptive weighting, that is, the ratio \smash{$(\sum_{t=1}^T \E[ \alpha_t | H^{t-1} ])^2 / \E[\sum_{t=1}^T \alpha_t^2]$} where $\alpha_t := h_t(w) \ind{W_t=w}/e_t(w)$, goes to infinity. This implies that the estimator converges. Assumption \ref{assu:variance_convergence} is the more subtle condition that unbiased estimators such as \eqref{eq:unb_est} (i.e., estimators with \smash{$h_t(w) \equiv 1$}) often fail to satisfy. Assumption \ref{assu:lyapunov} is a Lyapunov-type regularity condition on the weights controlling higher moments of the distribution.

\begin{assu}[Infinite sampling]
\label{assu:inf}
The weights used in \eqref{eq:aw} satisfy
\begin{equation}
\label{eq:infinite_sampling}
\p{\sum_{t=1}^T h_t(w)}^2 \,\bigg/\,
      \EE{  \sum_{t=1}^T \frac{h^2_t(w)}{e_t(w)} }
      \xrightarrow[T \to \infty]{p} \infty.
\end{equation}
\end{assu}

\begin{assu}[Variance convergence]
\label{assu:variance_convergence}
The weights used in \eqref{eq:aw} satisfy, for some $p > 1$,
\begin{equation}
\label{eq:variance_convergence}
 \sum_{t=1}^T \frac{h^2_t(w)}{e_t(w) }  \,\bigg/\,
    \EE{  \sum_{t=1}^T \frac{h^2_t(w)}{e_t(w)} }
    \xrightarrow[T \to \infty]{L_p} 1.
\end{equation}
\end{assu}

\begin{assu}[Bounded moments]
\label{assu:lyapunov}
The weights used in \eqref{eq:aw} satisfy, for some $\delta > 0$,
\begin{equation}
\label{eq:lyapunov}
 \sum_{t=1}^T \frac{h^{2 + \delta}(w)}{e^{1 + \delta}(w) }  \,\bigg/\,
    \EE{  \sum_{t=1}^T \frac{h^2_t(w)}{e_t(w)} }^{1 + \delta/2}
    \xrightarrow[T \to \infty]{p} 0.
\end{equation}
\end{assu}

\begin{theo}
\label{theo:arm_value_clt}
Suppose that we observe arms $W_t$ and rewards $Y_t=Y_t(W_t)$, and that the underlying potential
outcomes $(Y_t(w))_{w \in \mathcal{W}}$ are independent and identically distributed with nonzero variance, and satisfy $\E|Y_{t}(w)|^{2+\delta} < \infty$ for some $\delta > 0$ and all $w$.
Suppose that the assignment probabilities $e_t(w)$ are strictly positive and 
let $\hat{m}_t(w)$ be any history-adapted estimator of $Q(w)$ that is bounded and that converges almost-surely
to some constant $m_{\infty}(w)$. Let $h_t(w)$ be non-negative history-adapted weights satisfying Assumptions \ref{assu:inf}, \ref{assu:variance_convergence} and \ref{assu:lyapunov}. Suppose that either $\hat{m}_t(w)$ is consistent or $e_t(w)$ has a limit 
$e_{\infty}(w) \in [0, \, 1]$, i.e., either
\begin{equation}
     \label{eq:e_mu_alternative}
      \hat{m}_t(w) \xrightarrow[t \to \infty]{a.s.} Q(w)  \quad \text{or}
      \quad e_t(w)  \xrightarrow[t \to \infty]{a.s.} e_{\infty}(w)
\end{equation}
Then, for any arm $w \in \mathcal{W}$, the adaptively-weighted estimator \eqref{eq:aw} is consistent for the arm value $Q(w)$, and the following studentized statistic is asymptotically normal:
\begin{equation}
  \begin{aligned}
    \label{eq:clt}
    &\frac{\hQ_T^{h}(w) - Q(w)}{\hV_T^{h}(w)^{\frac{1}{2}}} \todist \mathcal{N}(0, 1),
    \ \ \ \text{where} \\
    &\hV_T^{h}(w) := \frac{\sum_{t=1}^T h^2_t(w) \left( \hGamma_t(w) - \hQ_T(w) \right)^2}{\left( \sum_{t=1}^T h_t(w) \right)^2}.
  \end{aligned}
\end{equation}
\end{theo}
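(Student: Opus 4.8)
The plan is to recognize the centered numerator of $\hQ_T^h(w)$ as a martingale and to combine a martingale central limit theorem for it with a consistency result for the self-normalizing variance estimator. Write $\xi_t := h_t(w)\p{\hGamma_t^{AIPW}(w) - Q(w)}$. Since $h_t(w)$ is $H^{t-1}$-measurable and $\hGamma_t^{AIPW}(w)$ satisfies the unbiasedness property \eqref{eq:unb}, we have $\EE{\xi_t \cond H^{t-1}} = 0$, so $S_T := \sum_{t=1}^T \xi_t$ is a sum of martingale differences with $\hQ_T^h(w) - Q(w) = S_T/\sum_{t=1}^T h_t(w)$. Moreover the studentized statistic in \eqref{eq:clt} is exactly the self-normalized ratio $S_T/\hat{s}_T$ with $\hat{s}_T^2 := \sum_{t=1}^T h_t^2(w)\p{\hGamma_t^{AIPW}(w) - \hQ_T^h(w)}^2$, so it suffices to show $S_T/v_T \todist \mathcal N(0,1)$ for a deterministic normalization $v_T$ and then that $\hat{s}_T^2/v_T^2 \toprob 1$.

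First I would compute the predictable variance. Writing $a_t := \hat{m}_t(w) - Q(w)$ and $\sigma^2(w) := \var\p{Y_t(w)}$, a direct calculation using $\PP{W_t = w \cond H^{t-1}, Y_t(w)} = e_t(w)$ yields
\[
  \EE{\xi_t^2 \cond H^{t-1}} = h_t^2(w)\left(\frac{\sigma^2(w) + a_t^2}{e_t(w)} - a_t^2\right),
\]
so the total predictable variance is $s_T^2 := \sum_{t} \EE{\xi_t^2 \cond H^{t-1}} = \sigma^2(w) A_T + \sum_t h_t^2(w) a_t^2 \tfrac{1 - e_t(w)}{e_t(w)}$, with $A_T := \sum_t h_t^2(w)/e_t(w)$. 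The structural fact to establish is that $s_T^2$ is asymptotically a deterministic multiple of $\bar{A}_T := \EE{A_T}$, i.e.\ $s_T^2/v_T^2 \toprob 1$ with $v_T^2 := \var(S_T) = \EE{s_T^2}$. This is where the dichotomy \eqref{eq:e_mu_alternative} is used: if $\hat{m}_t(w) \to Q(w)$ almost surely then $a_t \to 0$ and the second sum is negligible next to $\sigma^2(w) A_T$ by a Toeplitz argument; if instead $e_t(w) \to e_\infty(w)$ then $a_t$ and $(1-e_t)/e_t$ converge and each summand's conditional variance becomes asymptotically a fixed multiple of $h_t^2(w)/e_t(w)$, so again $s_T^2/A_T \toprob c$ for a positive constant $c$. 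Assumption \ref{assu:variance_convergence} then supplies $A_T/\bar{A}_T \toprob 1$, and its $L_p$ strength ($p>1$) furnishes the uniform integrability needed to pass to $v_T^2/\bar{A}_T \to c$; together these give $s_T^2/v_T^2 \toprob 1$. It is precisely this convergence of the conditional variance to a deterministic limit that simple averages violate, and that yields an honest (non-mixed) normal limit.

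Next I would invoke the martingale central limit theorem \citep[see e.g.,][]{hall2014martingale} for the array $\xi_t/v_T$. The conditional-variance condition $\sum_t \EE{(\xi_t/v_T)^2 \cond H^{t-1}} = s_T^2/v_T^2 \toprob 1$ is the content of the previous step, and the conditional Lindeberg condition follows from its Lyapunov strengthening: bounding $\EE{|\xi_t|^{2+\delta}\cond H^{t-1}}$ by a constant times $h_t^{2+\delta}(w)/e_t^{1+\delta}(w)$ using $\E|Y_t(w)|^{2+\delta} < \infty$ and the boundedness of $\hat{m}_t(w)$, Assumption \ref{assu:lyapunov} gives $v_T^{-(2+\delta)}\sum_t \EE{|\xi_t|^{2+\delta}\cond H^{t-1}} \toprob 0$. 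This delivers $S_T/v_T \todist \mathcal N(0,1)$. For the denominator, expanding $\hGamma_t^{AIPW}(w) - \hQ_T^h(w) = \p{\hGamma_t^{AIPW}(w) - Q(w)} - \p{\hQ_T^h(w) - Q(w)}$ gives $\hat{s}_T^2 = \sum_t \xi_t^2 - 2\p{\hQ_T^h(w) - Q(w)}\sum_t h_t(w)\xi_t + \p{\hQ_T^h(w) - Q(w)}^2\sum_t h_t^2(w)$. A weak law for the martingale $\sum_t\p{\xi_t^2 - \EE{\xi_t^2\cond H^{t-1}}}$, controlled again by Assumption \ref{assu:lyapunov}, gives $\sum_t \xi_t^2/s_T^2 \toprob 1$; and since $\hQ_T^h(w) - Q(w) = S_T/\sum_t h_t(w) \toprob 0$, the two correction terms are $o_p(v_T^2)$ by Assumption \ref{assu:inf} (after Cauchy--Schwarz on the cross term). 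Hence $\hat{s}_T^2/v_T^2 \toprob 1$, and Slutsky's theorem combines this with $S_T/v_T \todist \mathcal N(0,1)$ to give \eqref{eq:clt}. Consistency of $\hQ_T^h(w)$ falls out along the way, since $\hQ_T^h(w) - Q(w) = O_p(v_T)/\sum_t h_t(w)$ and Assumption \ref{assu:inf} makes $\sum_t h_t(w)/v_T \to \infty$.

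The main obstacle is the denominator analysis, that is, showing the self-normalizing estimator is consistent for the predictable variance. Two pieces are delicate: pinning down that the limiting scale $c$ is deterministic---which fails in general because $1/e_t(w)$ may diverge or oscillate, and is exactly what forces the hypothesis \eqref{eq:e_mu_alternative}---and establishing the weak law $\sum_t\xi_t^2/s_T^2 \toprob 1$ for squared terms that are themselves heavy-tailed when $e_t(w)\to 0$, which needs the full $L_p$ concentration of Assumption \ref{assu:variance_convergence} for uniform integrability of the normalized quadratic variation together with the higher-moment control of Assumption \ref{assu:lyapunov}. By comparison, once the predictable variance is shown to concentrate on $v_T^2$, the central limit theorem for the numerator is comparatively routine.
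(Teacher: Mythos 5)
Your proposal is correct and follows essentially the same route as the paper's proof: a martingale CLT for the numerator normalized by the unconditional variance, with the dichotomy \eqref{eq:e_mu_alternative} used to show the predictable variance is asymptotically a deterministic multiple of $\sum_t h_t^2(w)/e_t(w)$, the Lyapunov condition supplied by Assumption \ref{assu:lyapunov}, convergence of the quadratic variation to the predictable variance for the self-normalizer, a linearization to pass from centering at $\hQ_T^h(w)$ to centering at $Q(w)$ with the corrections killed by Assumptions \ref{assu:inf} and \ref{assu:variance_convergence}, and Slutsky. The only difference is presentational: the paper proves a general version in terms of Riesz representers $\gamma_t$ and specializes to $\gamma_t = \ind{\cdot = w}/e_t(w)$, whereas you argue the arm-value case directly, and your conditional-variance formula and handling of the two branches of the dichotomy match the paper's Lemma on the score variance and its decomposition $Z_T = A_T + R_T$.
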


As Theorem \ref{theo:arm_value_clt} suggests, the asymptotic behavior of our estimator is largely determined by the behavior of the propensity scores $e_t(w)$ and evaluation weights $h_t(w)$. If the former's behavior is problematic, the latter can correct for that. For instance, the bounded moments condition \eqref{eq:lyapunov} implies that when $e_t(w)$ decays very fast, the evaluation weights must also decay at an appropriate rate so that the variance of the estimator does not explode. However, there are limits to what this approach can correct. For example, aggressive bandit procedures may assign some arms only finitely many times, and in that case it is impossible to estimate their values consistently. This scenario is ruled out by our infinite sampling condition \eqref{eq:infinite_sampling}, which would not be satisfied.

To build more intuition for the variance convergence condition \eqref{eq:variance_convergence}, pretend for the moment that the evaluation weights $h_t(w)$ sum to one and that 
$\hm_t(w)$ is consistent. Under these conditions, the variance of each AIPW score \smash{$\hGamma_t(w)$} conditional on the past can be shown to be \smash{$\var[Y_1(w)]/e_t(w)$} plus asymptotically negligible terms, so the sum of conditional variances is asymptotically equivalent to \smash{$\var [Y_1(w)]\smash{\sum_{t=1}^T h_t^2(w)/e_t(w)}$}. The variance convergence condition \eqref{eq:variance_convergence} says that this sum of conditional variances converges to its mean, which will be the unconditional variance of the estimator. If these simplifying assumptions really held, this argument would almost suffice to establish asymptotic normality, since variance convergence conditions like this are nearly sufficient for martingale central limit theorems~\citep[see~e.g.,][Theorem 3.5]{hall2014martingale}. 

When we use history-dependent weights $h_t(w)$ that do not sum to one, the normalized weights $\tilde h_t := \smash{h_t(w)/\sum_{s=1}^{T} h_s(w)}$ that scale our terms in \eqref{eq:aw} are not a function of past data alone. 
However, the argument above provides valuable intuition in our proof, and $\var[Y_1(w)]\sum_{t=1}^{T} \tilde h_t^2 /e_t(w)$ can be thought of as a reasonable proxy for the estimator's variance. Minimizing this variance proxy suggests the use of weights $h_t(w) \propto e_t(w)$. This heuristic, in combination with appropriate constraints, motivates an empirically successful weighting scheme that we'll further discuss in Section \ref{sec:weights}. 

If the weights $h_t(w)$ are not constructed appropriately, then $h^2_t(w)/e_t(w)$ may behave erratically and the variance convergence condition will fail to hold. This can happen, for example, in a bandit experiment in which there are multiple optimal arms and uniform weights $h_t(w) \equiv 1$ are used. In this setting, the bandit algorithm may spuriously choose one arm at random early on and assign the vast majority of observations to it, so that no run of the experiment will look like an ``average run'' and the ratio in \eqref{eq:variance_convergence} will not converge. Or it may switch between arms infinitely often, in which case the ratio will converge only if it switches quickly enough that the random order is ``forgotten'' in the average. This issue persists even when assignment probabilities are guaranteed to stay above a strictly positive lower bound. In the next section, we will construct weights so that ``variance convergence'' \eqref{eq:variance_convergence} is guaranteed to be satisfied not only asymptotically, but for all sample sizes $T$.

The simplifying assumption that $\hm_t(w)$ is consistent is not necessary: as stated in \eqref{eq:e_mu_alternative},
a variant of the argument above still goes through if the propensity score $e_t(w)$ converges.
In a non-adaptive experiment, the AIPW estimator will have optimal asymptotic variance if $\hm_t(w)$ is consistent;
if it is not, the excess asymptotic variance is a function of $\lim_{t \to \infty} e_t(w)$ 
and $\lim_{t \to \infty} \hm_t(w) - Q(w)$.

\subsection{Constructing adaptive weights}
\label{sec:weights}

A natural question is how to choose evaluation weights $h_t(w)$ 
for which the adaptively-weighted AIPW estimator \eqref{eq:aw} is asymptotically unbiased and normal with low variance, i.e., 
for which we get narrow and approximately valid confidence intervals.
To this end, we'll start by focusing on the variance convergence condition \eqref{eq:variance_convergence}. Once we have a recipe
for building weights that satisfy it, we'll consider how to satisfy the other conditions of
Theorem \ref{theo:arm_value_clt} and how to optimize for power.

The variance convergence condition \eqref{eq:variance_convergence} requires the sum $\sum_{t = 1}^T h_t^2/e_t$ to concentrate
around its expectation. A direct way to ensure this is to make the sum deterministic.
To do this, we choose weights via a recursively defined ``stick-breaking'' procedure,\footnote{For notational efficiency,
whenever it does not lead to confusion we will drop the
the dependence on arm and write, e.g., \smash{$\hQ_T$} simply to mean our adaptively-weighted
estimator \eqref{eq:aw}, $h_t$ for evaluation weights, $e_t$ for assignment probabilities, and so on.}
\begin{align}
  \label{eq:variance_stabilizing}
  \frac{h^2_t}{e_t} = \left( 1 - \sum_{s=1}^{t-1} \frac{h_s^2}{e_s} \right) \lambda_t,
\end{align}
where $\lambda_t$ satisfies $0 \leq \lambda_t < 1$ for all $1 \leq t \leq T-1$, and $\lambda_T = 1$. Because $\lambda_T = 1$, the definition above for $t=T$ directly implies 
that $\sum_{t = 1}^T h_t^2/e_t = 1$. This ensures that the variance convergence condition \eqref{eq:variance_convergence} is satisfied, so we call these \emph{variance-stabilizing} weights.

We call the function $\lambda_t$ an \emph{allocation rate} because it qualitatively captures the fraction of our remaining variance that we allocate to the upcoming observation. 
This is a useful class to consider because the analyst has substantial freedom in constructing weights by 
choosing different allocation rates $\lambda_t$, while ensuring that the resulting evaluation weights automatically satisfy the variance convergence assumption, and satisfy other assumption of Theorem \ref{theo:arm_value_clt} with some generality.
\begin{theo}
\label{theo:variance_stabilizing_clt}
In the setting of Theorem \ref{theo:arm_value_clt}, suppose that the treatment propensities satisfy
\begin{equation}
\label{eq:prop_LB}
e_t(w) \geq Ct^{-\alpha}
\end{equation}
for $\alpha \in [0, 1)$ and any positive constant $C$. Then the variance-stabilizing weights \eqref{eq:variance_stabilizing} 
defined by a history-adapted allocation rate $\lambda_t(w)$ are history-adapted and satisfy Assumptions \ref{assu:inf}, \ref{assu:variance_convergence}, and \ref{assu:lyapunov} 
if $\lambda_t(w) < 1$ for $t < T$, $\lambda_T(w) = 1$ and, for a finite positive constant $C'$,
\begin{align}
\label{eq:allocation_rate_bounds}
\frac{1}{T - t + 1} \leq \lambda_t(w) \leq C' \, \frac{e_t(w)}{t^{-\alpha} + T^{1-\alpha} - t^{1-\alpha}}. 
\end{align}
\end{theo}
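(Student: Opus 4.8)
The plan is to extract everything from the defining identity of the stick-breaking weights and then reduce all three assumptions to a single pathwise estimate. Write $a_t := h_t^2(w)/e_t(w)$ and $S_t := \sum_{s=1}^t a_s$. The recursion \eqref{eq:variance_stabilizing} reads $a_t = (1-S_{t-1})\lambda_t(w)$, so $1-S_t = (1-S_{t-1})(1-\lambda_t(w))$ and, telescoping from $S_0=0$, $1-S_{t-1} = \prod_{s=1}^{t-1}(1-\lambda_s(w))$. Because $\lambda_T(w)=1$ the last factor vanishes, giving the deterministic identity $\sum_{t=1}^T h_t^2(w)/e_t(w) = S_T = 1$ on every sample path. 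This does most of the work. First, it makes the ratio in Assumption \ref{assu:variance_convergence} identically $1$, so variance convergence holds in $L_p$ for every $p$. Second, since $\E[\sum_t h_t^2(w)/e_t(w)]=1$, the expected-value denominators in Assumptions \ref{assu:inf} and \ref{assu:lyapunov} both equal $1$, so those assumptions reduce to $(\sum_t h_t(w))^2 \toprob \infty$ and $\sum_t h_t^{2+\delta}(w)/e_t^{1+\delta}(w) \toprob 0$. I would also note at the outset that $h_t(w)$ is $H^{t-1}$-measurable, since $e_t(w)$, $\lambda_t(w)$ and $S_{t-1}$ are, so the weights are history-adapted as claimed.

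The core of the proof is the single pathwise bound $\max_{1\le t\le T} a_t/e_t(w) \le C'' T^{\alpha-1}$, which I obtain by using the two halves of \eqref{eq:allocation_rate_bounds} separately. The lower bound $\lambda_t(w)\ge 1/(T-t+1)$ caps the remaining budget: telescoping as above, $1-S_{t-1}\le \prod_{s=1}^{t-1}\frac{T-s}{T-s+1} = \frac{T-t+1}{T}$. The upper bound on $\lambda_t(w)$ then gives
\[
\frac{a_t}{e_t(w)} = \frac{\lambda_t(w)\,(1-S_{t-1})}{e_t(w)} \le C'\,\frac{T-t+1}{T\,(t^{-\alpha}+T^{1-\alpha}-t^{1-\alpha})},
\]
so it remains to show $(T-t+1)/(t^{-\alpha}+T^{1-\alpha}-t^{1-\alpha}) \le C''' T^{\alpha}$ uniformly in $t$. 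Writing $u=t/T$, the dominant part is the tangent-line (concavity) inequality $u^{1-\alpha}\le \alpha + (1-\alpha)u$, which rearranges to $T-t\le \frac{1}{1-\alpha}\,T^\alpha(t^{-\alpha}+T^{1-\alpha}-t^{1-\alpha})$; the leftover ``$+1$'' is absorbed using $T^\alpha(t^{-\alpha}+T^{1-\alpha}-t^{1-\alpha}) \ge (T/t)^\alpha \ge 1$. Since $\alpha<1$, this bound tends to $0$.

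Both remaining assumptions then follow immediately. For Assumption \ref{assu:lyapunov}, I factor the summand and use $\sum_t a_t = 1$:
\[
\sum_{t=1}^T \frac{h_t^{2+\delta}(w)}{e_t^{1+\delta}(w)} = \sum_{t=1}^T a_t\Big(\frac{a_t}{e_t(w)}\Big)^{\delta/2} \le \Big(\max_t \frac{a_t}{e_t(w)}\Big)^{\delta/2} \le (C'' T^{\alpha-1})^{\delta/2} \to 0,
\]
valid for every $\delta>0$, in particular the one from the moment hypothesis. For Assumption \ref{assu:inf}, I use $h_t(w)/e_t(w)=\sqrt{a_t/e_t(w)}$ to write $1 = \sum_t h_t^2(w)/e_t(w) = \sum_t h_t(w)\,(h_t(w)/e_t(w)) \le (\max_t a_t/e_t(w))^{1/2}\sum_t h_t(w)$, hence $\sum_t h_t(w) \ge (\max_t a_t/e_t(w))^{-1/2} \ge (C'')^{-1/2} T^{(1-\alpha)/2} \to \infty$. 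As all of these inequalities are pathwise, the convergences hold almost surely, hence in probability, completing the verification of the three assumptions.

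I expect the only genuinely delicate step to be the uniform bound on $(T-t+1)/(t^{-\alpha}+T^{1-\alpha}-t^{1-\alpha})$: in the bulk the denominator is of order $T^{1-\alpha}$, but near $t=T$ it collapses to order $t^{-\alpha}$, and the two regimes must be reconciled so that a single constant $C'''$ works. Everything else is routine bookkeeping around the identity $S_T=1$ and the elementary observation that controlling $\max_t a_t/e_t(w)$ simultaneously yields the lower bound on $\sum_t h_t(w)$ and the Lyapunov bound.
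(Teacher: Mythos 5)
Your proof is correct, and while it shares the paper's backbone --- the telescoping identity $\sum_{t=1}^T h_t^2/e_t = 1$ from $\lambda_T=1$, which disposes of variance convergence and normalizes the denominators of the other two assumptions, together with the bound $1-S_{t-1}\le (T-t+1)/T$ from the lower half of \eqref{eq:allocation_rate_bounds} --- you finish the remaining two conditions by a genuinely different and tighter route. The paper proves the general Riesz-representer version (its Theorem on variance-stabilizing weights): for the Lyapunov condition it bounds each summand by a constant times $T^{-(1-\alpha)(1+\delta/2)}$ and sums $T$ of them, which forces the restriction $\alpha<\delta/(2+\delta)$ and recovers $\alpha\in[0,1)$ only by noting that, in the arm-value specialization, the moment-ratio condition holds for arbitrarily large $\delta$; and for infinite sampling it introduces a stopping time $T_0=\min\{t:\sum_{s\le t}\lambda_s>1/2\}$ and argues two cases. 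You instead exploit the exact identities $\E_{t-1}[\gamma_t^2]=1/e_t$ and $\E_{t-1}[|\gamma_t|^{2+\delta}]=1/e_t^{1+\delta}$ to write the Lyapunov numerator as $\sum_t a_t(a_t/e_t)^{\delta/2}\le(\max_t a_t/e_t)^{\delta/2}$, and to get $\sum_t h_t\ge(\max_t a_t/e_t)^{-1/2}$ from $1=\sum_t h_t\sqrt{a_t/e_t}$, so that a single uniform bound $\max_t a_t/e_t\lesssim T^{\alpha-1}$ delivers both conditions for every $\delta>0$ at once. Your tangent-line inequality $u^{1-\alpha}\le\alpha+(1-\alpha)u$ also replaces the paper's calculus analysis of $f_\alpha(x,c)=(1-x+c)/(1-x^{1-\alpha}+cx^{-\alpha})$ (monotonicity in each argument plus L'H\^opital at $x\to1$) with an elementary one-line concavity argument yielding the same constant $(1-\alpha)^{-1}$. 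What you give up is generality: your Lyapunov and infinite-sampling shortcuts lean on the specific arm-value structure, whereas the paper's argument covers general functionals; what you gain is a cleaner, fully pathwise proof of the stated theorem that avoids the $\delta$-bookkeeping in the paper's specialization step.
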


The main requirement of Theorem \ref{theo:variance_stabilizing_clt} is \eqref{eq:prop_LB}, a limit on the rate
at which treatment assignment propensities $e_t$ decay. In a bandit setting, this constraint requires that suboptimal arms be pulled more often than implied by rate-optimal algorithms \citep[see e.g.,][Chapter 15]{tor2018bandit}, but still allows for sublinear regret. Given this constraint, the allocation rate bounds \eqref{eq:allocation_rate_bounds} are weak enough to allow us to construct variance-reducing heuristics, like \eqref{eq:two_point_allocation} below.

Given these simple sufficient conditions for our asymptotic normality result (Theorem~\ref{theo:arm_value_clt}) 
when we use variance-stabilizing weights, it remains to choose a specific allocation rate $\lambda_t$. This next step is what will allow us to be able to provide valid estimates even when the share of relevant data vanishes asymptotically. A simple choice of allocation rate is
\begin{equation}
\label{eq:const}
\lambda_t^{\text{const}} := \frac{1}{T - t + 1}.
\end{equation}
Given this choice, we can solve \eqref{eq:variance_stabilizing} in closed form and get $h_t = \sqrt{e_t / T}$. Weights of this type were proposed by
\cite{luedtke2016statistical} for the purpose of estimating the expected value of non-unique optimal policies that possibly depend on observable covariates. We call this method the \emph{constant allocation scheme}, because the variance contribution of each observation is constant (since $h_t^2/e_t \equiv 1/T$ for these weights).

The constant allocation scheme guarantees the variance-convergence condition \eqref{eq:variance_convergence} and ensures asymptotic normality of the test statistic \eqref{eq:clt},
but it does not result in a variance-optimal estimator. 
We propose an alternative scheme in which $\lambda_t$ adapts to past data and reweights observations 
to better control the estimator's variance.
To get some intuition, recall that from the discussion following Theorem \ref{theo:arm_value_clt} that the variance of \smash{$\hQ^h_T(w)$} essentially scales like
\smash{$\sum_t(h_t^2 / e_t) \,/\, (\sum_{t=1}^T h_t)^2$}. This implies that, in the absence of any constraints on how we choose the weights,
we would minimize variance by setting $h_t \propto e_t$; this can be accomplished using the allocation rate \smash{$\lambda_t = e_t / \sum_{s = t}^T e_s$}. If we use these weights and set $\hm_t \equiv 0$ in \eqref{eq:aw}, the result is an estimator that differs from the sample average \eqref{eq:avg} only in that it replaces the normalization $1 / T_w$ with $\smash{1/\sum_{t=1}^T e_t}$. Our results do not apply to this choice of allocation rate $\lambda_t$ because it depends on future treatment assignment probabilities, and Theorem \ref{theo:variance_stabilizing_clt} requires that $\lambda_t$ depend only on the history~$H^{t-1}$.

However, this form of allocation rate suggests a natural heuristic choice of allocation rate:
\begin{equation}
  \label{eq:lambda}
 \lambda_t = \hEE[t-1]{ \frac{ e_t(w) }{ \sum_{s=t}^T e_s(w) }},
\end{equation}
where \smash{$\widehat{\mathbb{E}}_{t-1}$} denotes an estimate of the future behavior of the propensity scores using information up to the beginning of the current period. It can be estimated via Monte Carlo methods. A high-quality approximation is unnecessary for valid inference. All that is required is that the allocation rate bounds \eqref{eq:allocation_rate_bounds} be satisfied, although better approximations likely lead to better statistical efficiency.

In practice, the need to compute these estimates renders the construction \eqref{eq:lambda} unwieldy.
Furthermore, the way the resulting weights depend on our model of the assignment mechanism is fairly opaque.
As an alternative, we consider a simple heuristic that exhibits similar behavior and can be used when assignment probabilities are decided via Bayesian methods such as Thompson sampling.

To derive our scheme, we consider two scenarios: one in which the assignment probabilities $e_t$ are currently high and will continue being so in the future, as is the case when a bandit algorithm deems $w$ to be an optimal arm; and a second scenario in which the assignment probabilities will asymptotically decay towards zero as fast as the lower bound \eqref{eq:prop_LB} permits it to. If the first scenario is true, then we could approximate the behavior of  \eqref{eq:lambda} by setting $e_s = 1$ for all periods. If the second scenario is true, then we could do so by setting $e_s = Cs^{-\alpha}$ for all periods. Letting $A$ be an indicator that we are in the first scenario, we can consider a heuristic approximation to \eqref{eq:lambda}:
\begin{equation}
    \label{eq:two_point_allocation_intuition}
    \lambda_t \approx A\frac{1}{\sum_{s=t}^{T} 1} + \p{1 - A} \frac{t^{-\alpha}}{\sum_{s=t}^{T} s^{-\alpha}}.
\end{equation}

Of course, we don't know which scenario we are in. However, when assigning treatment via Thompson sampling,
$e_t$ is the posterior probability at time $t$ that arm $w$ is optimal. This suggests the heuristic of averaging the two possibilities according to this posterior probability. Substituting, in addition, an integral approximation to $\smash{\sum_{s=t+1}^{T}s^{-\alpha}}$,
we get the following allocation rate:
\begin{equation}
    \label{eq:two_point_allocation}
    \lambda_t^{\text{two-point}} :=   e_t\frac{1}{T - t + 1} + \p{1 - e_t} \frac{t^{-a}}{t^{-\alpha} + \frac{T^{1-\alpha} - t^{1-\alpha}}{1-\alpha}}.
\end{equation}
We call this the \emph{two-point allocation scheme}. Both the constant \eqref{eq:const} and two-point \eqref{eq:two_point_allocation} allocation schemes
satisfy the allocation rate bounds \eqref{eq:allocation_rate_bounds} from Theorem \ref{theo:variance_stabilizing_clt}; see \appendixref[C.5]{sec:twopoint_allocation_bounds}.

\section{Estimating Treatment Effects}%
\label{sec:contrasts}

Our discussion so far has focused on estimating the value $Q(w)$ of a single arm $w$. In many applications,
however, we may seek to provide inference for a wider variety of estimands, starting with treatment effects
of the form $\Delta(w_1, \, w_2) = \EE{Y_t(w_1) - Y_t(w_2)}$. There are two natural ways to approach this
problem in our framework. The first involves re-visiting our discussion from Section \ref{sec:scoring}, and
directly defining unbiased scoring rules for $\Delta(w_1, \, w_2)$ that can then be used as the basis for an
adaptively-weighted estimator. The second is to re-use the value estimates derived above and set
\smash{$\hDelta(w_1, \, w_2) = \hQ(w_1) - \hQ(w_2)$}; the challenge then becomes how to provide uncertainty
quantification for \smash{$\hDelta(w_1, \, w_2)$}. We discuss both approaches below.

In the first approach, we use the difference in AIPW scores as the unbiased scoring rule for~$\Delta(w_1, \, w_2)$.
\begin{equation}
  \label{eq:AIPW_DELTA}
  \begin{aligned}
    &\hGamma_t(w_1, \, w_2) = \hGamma_t^{AIPW}(w_1) - \hGamma_t^{AIPW}(w_2), \\
    &\EE{\hGamma_t(w_1, \, w_2) \cond H^{t-1}} = \Delta(w_1, \, w_2).    
  \end{aligned}
\end{equation}
One can then construct asymptotically normal estimates of $\Delta(w_1, \, w_2)$ by adaptively weighting
the scores \smash{$\hGamma_t(w_1, \, w_2)$} as in \eqref{eq:aw}. In \appendixref[B]{sec:general_clt}, our main formal result
allows for adaptively-weighted estimation of general targets, such that both Theorem \ref{theo:arm_value_clt}
and adaptively-weighted estimation with scores \eqref{eq:AIPW_DELTA} are special cases of this
result.\footnote{\label{foot:average_derivative} Our result allows for considerably more generality than either of the cases discussed above,
and applies whenever our target admits a doubly robust estimator in the sense of \cite{chernozhukov2016locally}
whose Riesz representer is a function of the treatment assignment mechanism. For example, 
consider an adaptive clinical trial setting in which patients were given random doses of a continuous treatment $W_{t}$ drawn from a time-varying dosing policy $f_t(w)$, i.e., $W_t$ is a random variable with density $f_t(w)$, and write $m(w) = \EE{Y_t(w)}$. Now suppose that,
given a specific treatment assignment policy with density $f(w)$, we are interested in estimating $\psi(m) = \int m'(w)f(w)dw$, i.e.,
how patients' outcomes would change if they received doses in slightly larger amounts than those suggested by the
baseline policy $f(w)$. An unbiased scoring rule for this estimand is
\begin{equation}
  \label{eq:avgderiv}
  \hGamma_t = \gamma_t Y_t + \p{\int \hm'(w)f(w)dw - \gamma_t \hm(W_t)} \quad \text{where}\quad \gamma_t = -\frac{f'(W_t)}{f_t(W_t)},
\end{equation}
and our results apply to inference about $\psi(m)$ by adaptively-weighted aggregation of these \smash{$\hGamma_t$}. See \appendixref[B]{sec:general_clt} for further discussion.}

The second approach is conceptually straightforward; however, we still need to check that
the estimator \smash{$\hDelta(w_1, \, w_2) = \hQ(w_1) - \hQ(w_2)$} can be used for asymptotically
normal inference about $\Delta(w_1,\, w_2)$. Theorem \ref{theo:contrast_clt} provides such a result,
under a modified version of the conditions of Theorem \ref{theo:arm_value_clt} along with an extra
assumption \eqref{eq:evaluation_weight_convergence}.

\begin{theo}
\label{theo:contrast_clt}
In the setting of Theorem \ref{theo:arm_value_clt}, let $w_{1}, \, w_{2} \in \mathcal{W}$ denote a pair of arms,
and suppose that Assumptions \ref{assu:inf}, \ref{assu:variance_convergence} and \ref{assu:lyapunov} are satisfied for both arms. In addition, suppose that the variance estimates defined in \eqref{eq:clt} satisfy
\begin{equation}
 \label{eq:evaluation_weight_convergence}
 \hV_T^h(w_1) \,\big/\, \hV^{h}_T(w_2) \xrightarrow[T \to \infty]{p} r \in [0, \infty].
\end{equation}
and that for at least one $j \in \{1, 2\}$, either 
\begin{equation}
     \label{eq:e_mu_alternative_constrast}
      \hm_t(w_j) \xrightarrow[t \to \infty]{a.s.} Q(w_j) \quad \text{or}
      \quad e_t(w_j)  \xrightarrow[t \to \infty]{a.s.} 0.
\end{equation}

Then the vector of studentized statistics \eqref{eq:clt} for $w_{1}$ and $w_{2}$ is asymptotically jointly
normal with identity covariance matrix.  Moreover, \smash{$\hDelta_T(w_{1}, w_{2})$} below is a consistent
estimator of $\Delta(w_{1}, w_{2}) = \EE{Y_t(w_{1}) - Y_t(w_{2})}$, 
  \begin{align}
    \label{eq:delta}
    \hDelta_T(w_1, w_2)
      := 
       \frac{ \sum_{t=1}^T h_t(w_{1})\hGamma_t(w_{1}) }
            { \sum_{t=1}^Th_t(w_{1}) } 
            -
      \frac{ \sum_{t=1}^T h_t(w_{2})\hGamma_t(w_{2}) }
           { \sum_{t=1}^Th_t(w_{2}) },
  \end{align}
  and the following studentized statistic is asymptotically standard normal.
  \begin{align}
    \label{eq:studentized_contrast}
    \frac{ \hDelta_T(w_{1}, w_{2}) - \Delta(w_{1}, w_{2})}
          { \p{ \hV^h_T(w_{1}) + \hV^h_T(w_{2}) }^{1/2}  }
          \xrightarrow[T \to \infty]{d} \mathcal{N}(0, 1). 
  \end{align}
\end{theo}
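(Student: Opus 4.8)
The plan is to deduce all three claims from a single joint central limit theorem for the pair of studentized statistics in \eqref{eq:clt}, and then read off the contrast result by elementary manipulations. Theorem~\ref{theo:arm_value_clt} already supplies the marginal convergence of each $T_j := (\hQ_T^h(w_j) - Q(w_j))/\hV_T^h(w_j)^{1/2}$ to $\mathcal N(0,1)$ together with consistency of each $\hQ_T^h(w_j)$; the latter immediately gives consistency of $\hDelta_T(w_1,w_2)$ in \eqref{eq:delta}. The real work is to upgrade the two marginal limits to \emph{joint} asymptotic normality with \emph{identity} covariance, i.e.\ to show $T_1$ and $T_2$ are asymptotically independent. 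I would do this with the Cramér--Wold device: fix $(a_1,a_2)\in\mathbb R^2$ and prove $a_1 T_1 + a_2 T_2 \todist \mathcal N(0, a_1^2 + a_2^2)$.

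To set this up, write $\hQ_T^h(w_j)-Q(w_j)$, up to the asymptotically negligible random-denominator bias handled exactly as in Theorem~\ref{theo:arm_value_clt}, as $M_T(w_j)/\sum_t h_t(w_j)$ with the martingale $M_T(w_j)=\sum_t h_t(w_j)(\hGamma_t(w_j)-Q(w_j))$, and note $\hV_T^h(w_j)$ is exactly its self-normalizer, so $T_j = M_T(w_j)/\sqrt{[M(w_j)]_T}$ where $[M(w_j)]_T=\sum_t h_t^2(w_j)(\hGamma_t(w_j)-\hQ_T(w_j))^2$. Then $a_1 T_1 + a_2 T_2$ is itself a self-normalized martingale, to which I would apply the same martingale CLT used for Theorem~\ref{theo:arm_value_clt}. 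Its summed conditional variance splits into two diagonal terms and one cross term. The diagonal terms converge to $a_1^2$ and $a_2^2$ by the variance-convergence argument (Assumption~\ref{assu:variance_convergence}) already in place for each arm, and the Lyapunov condition for the combined increments reduces to Assumption~\ref{assu:lyapunov} for each arm via $(x+y)^{2+\delta}\lesssim x^{2+\delta}+y^{2+\delta}$. So everything hinges on the cross term.

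The cross term is governed by $c_t := \E[(\hGamma_t(w_1)-Q(w_1))(\hGamma_t(w_2)-Q(w_2)) \cond H^{t-1}]$. Writing each centered AIPW score as $(\hm_t(w)-Q(w)) + \frac{\ind{W_t=w}}{e_t(w)}(Y_t(w)-\hm_t(w))$ and expanding the product, the decisive observation is that the leading inverse-propensity cross term carries the factor $\ind{W_t=w_1}\ind{W_t=w_2}$, which vanishes identically because at most one arm is pulled at each time; the surviving pieces collapse to $c_t = -(\hm_t(w_1)-Q(w_1))(\hm_t(w_2)-Q(w_2))$. I would then bound the normalized cross term $\sum_t h_t(w_1)h_t(w_2)|c_t| / \sqrt{[M(w_1)]_T[M(w_2)]_T}$ by Cauchy--Schwarz into a product of two factors, the $j$-th being $\sum_t h_t^2(w_j)(\hm_t(w_j)-Q(w_j))^2 / [M(w_j)]_T$. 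Each $[M(w_j)]_T$ converges to a strictly positive constant (the arm's asymptotic variance scale), and $\sum_t h_t^2(w_j)(\hm_t(w_j)-Q(w_j))^2 = \sum_t \frac{h_t^2(w_j)}{e_t(w_j)}\, e_t(w_j)(\hm_t(w_j)-Q(w_j))^2$ is a convex combination (the weights $h_t^2/e_t$ sum to one) of the bounded quantities $e_t(w_j)(\hm_t(w_j)-Q(w_j))^2$. This is precisely where hypothesis \eqref{eq:e_mu_alternative_constrast} enters and is the crux of the argument: for the index $j$ satisfying it, either $\hm_t(w_j) \to Q(w_j)$ forces $(\hm_t(w_j)-Q(w_j))^2 \to 0$, or $e_t(w_j)\to 0$ forces $e_t(w_j)(\hm_t(w_j)-Q(w_j))^2 \to 0$; in both cases that factor tends to zero while the other stays bounded, so the product vanishes. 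This is why the condition is required for only one of the two arms, and I expect the most delicate point to be confirming that the convex combination actually tends to zero rather than being propped up by early terms, which I would settle by a dominated-convergence / uniform-integrability argument using boundedness of $\hm_t$ and Assumption~\ref{assu:variance_convergence}.

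With the cross term gone, the summed conditional variance of $a_1 T_1 + a_2 T_2$ converges to $a_1^2 + a_2^2$, the martingale CLT gives $\mathcal N(0, a_1^2+a_2^2)$, and Cramér--Wold yields joint convergence of $(T_1,T_2)$ to a standard bivariate normal. To finish, I would rewrite the studentized contrast \eqref{eq:studentized_contrast} as $c_{1,T} T_1 - c_{2,T} T_2$ with $c_{j,T} := \hV_T^h(w_j)^{1/2}/(\hV_T^h(w_1)+\hV_T^h(w_2))^{1/2}$, which satisfy $c_{1,T}^2 + c_{2,T}^2 = 1$ identically. Hypothesis \eqref{eq:evaluation_weight_convergence}, $\hV_T^h(w_1)/\hV_T^h(w_2)\toprob r$, makes $(c_{1,T},c_{2,T})$ converge in probability to deterministic limits $(c_1,c_2)$ with $c_1^2+c_2^2=1$, namely $(1,0)$ when $r=\infty$ and $(0,1)$ when $r=0$. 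Slutsky's theorem combined with the joint normality then gives $c_{1,T} T_1 - c_{2,T} T_2 \todist \mathcal N(0, c_1^2 + c_2^2) = \mathcal N(0,1)$, which is the desired statement.
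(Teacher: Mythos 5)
Your proposal is correct and follows essentially the same route as the paper's proof (via the general Theorem \ref{theo:general_contrast}): Cram\'er--Wold on the pair of auxiliary martingales, diagonal terms inherited from the single-arm analysis, the observation that $\ind{W_t = w_1}\ind{W_t = w_2} = 0$ collapses the conditional cross-covariance to $-(\hm_t(w_1)-Q(w_1))(\hm_t(w_2)-Q(w_2))$, Cauchy--Schwarz plus the one-arm condition \eqref{eq:e_mu_alternative_constrast} and a weighted-average-of-vanishing-sequences lemma to kill the cross term, and a Slutsky argument with \eqref{eq:evaluation_weight_convergence} to finish the contrast. The only cosmetic imprecisions are that $h_t^2(w)/e_t(w)$ sums exactly to one only for variance-stabilizing weights (the paper instead normalizes by $\E[\sum_t h_t^2(w)/e_t(w)]$ and invokes Assumption \ref{assu:variance_convergence}) and that your $[M(w_j)]_T$ converges to its mean in ratio rather than to a constant; neither affects the argument.
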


Both approaches to inference about $\Delta(w_{1}, \, w_{2})$ are of interest, and may be relevant
in different settings. In our experiments, we focus on the estimator
\smash{$\hDelta(w_1, \, w_2) = \hQ(w_1) - \hQ(w_2)$} studied in Theorem \ref{theo:contrast_clt}, as we found
it to have higher power---presumably because allowing separate weights $h_t(w)$ for different arms gives us more
control over the variance. However, adaptively-weighted estimators following \eqref{eq:AIPW_DELTA} that directly target
the difference $\Delta(w_{1}, \, w_{2})$ may also be of interest in some applications. In particular, they
render an assumption like \eqref{eq:evaluation_weight_convergence} unnecessary and, following
the line of argumentation in \cite{zhang2020inference}, they may be more robust to non-stationarity of the distribution
of the potential outcomes $Y_t(w)$.

\ifpnas
  \begin{figure*}[ht]
    \centering
    \includegraphics[width=\textwidth]{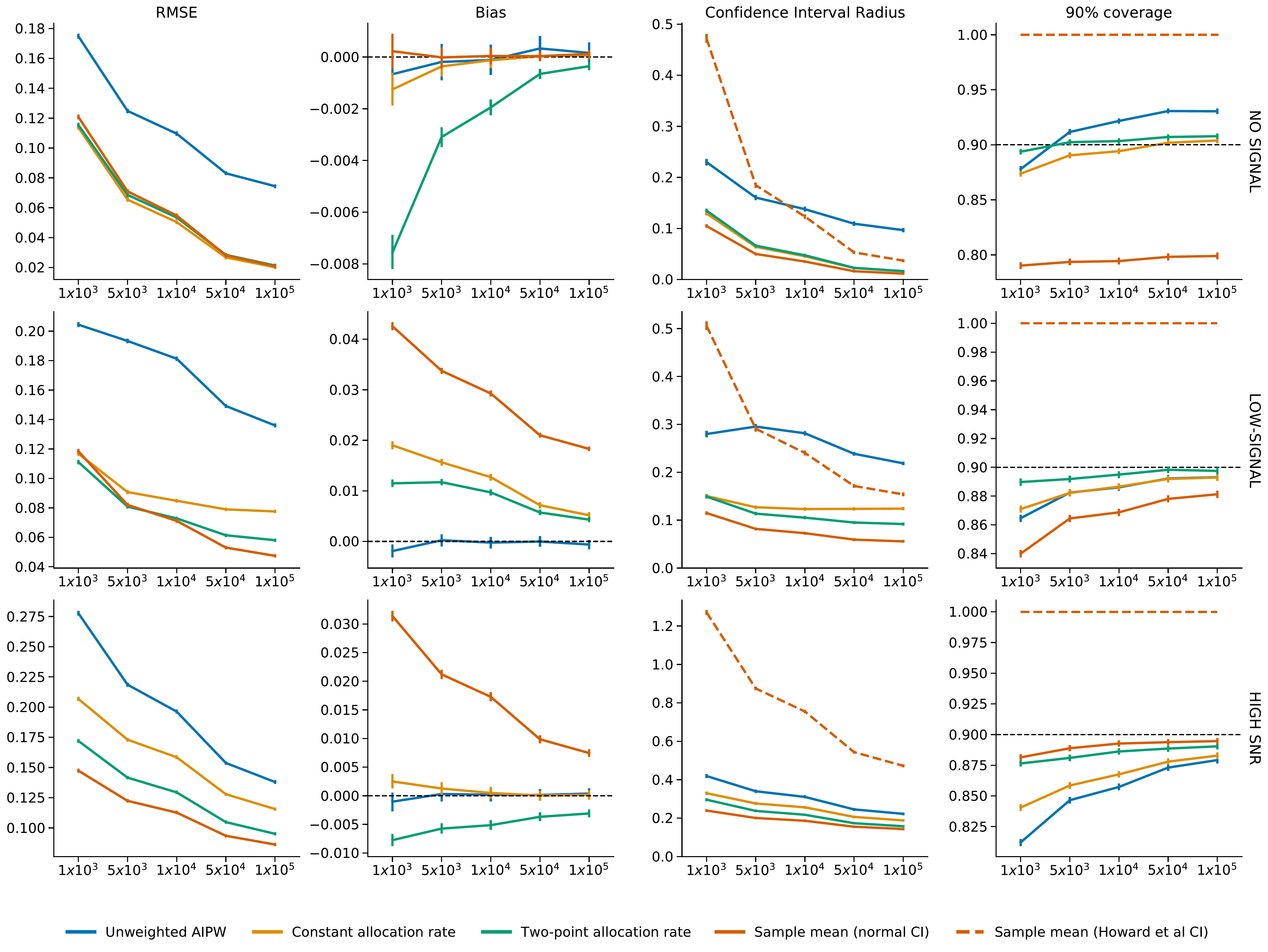}
    \caption{Evolution of estimates of $Q(3) - Q(1)$ across simulation settings for different experiment lengths. 
  	   Error bars are 95\% confidence intervals around averages across $10^5$ replications.}
    \label{fig:contrast}
  \end{figure*}

  \begin{figure*}[ht]
    \centering
    \includegraphics[width=.8\textwidth]{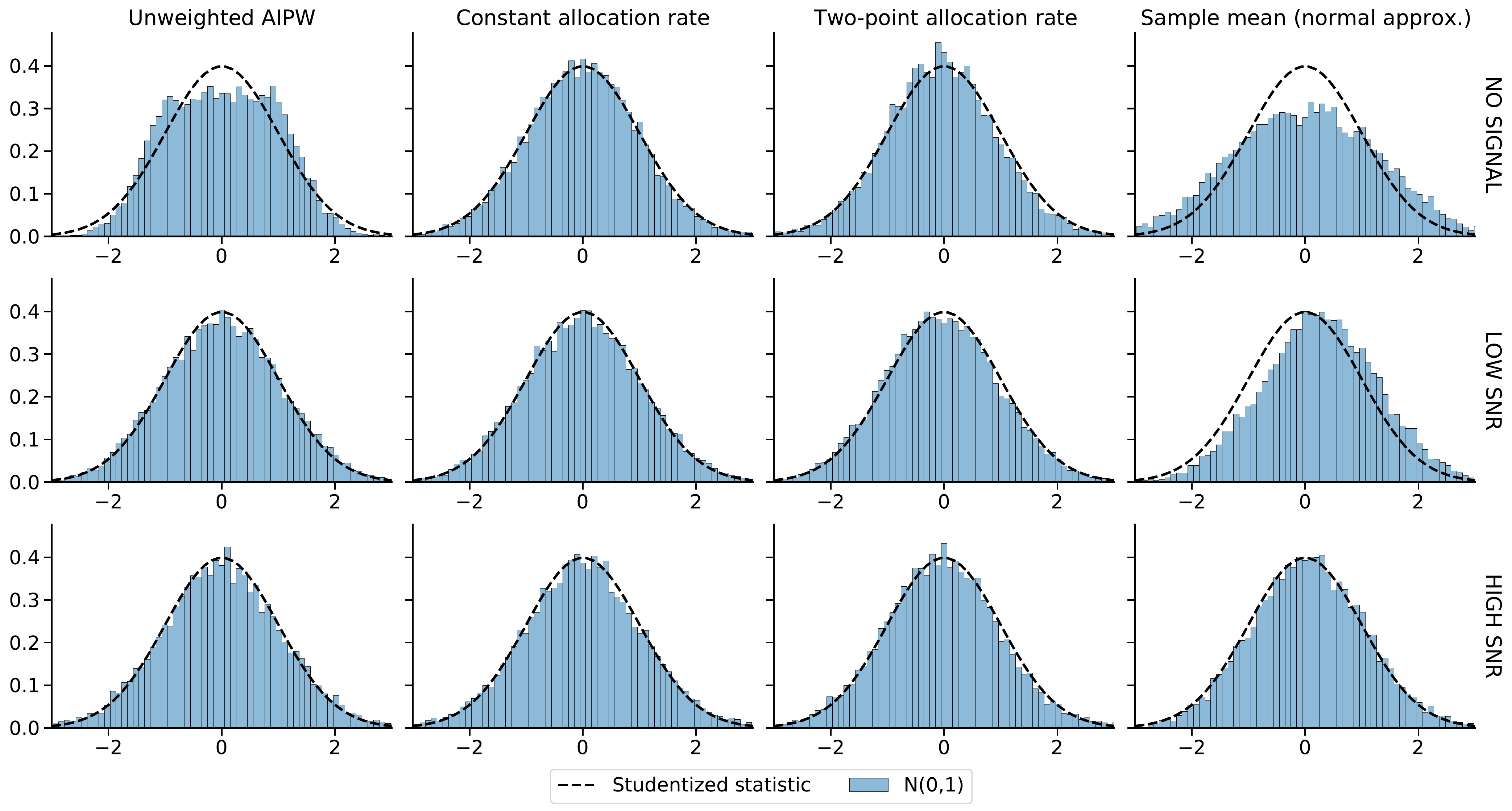}
    \caption{Histogram of studentized statistics of the form $(\hDelta_T - \Delta)/\hV_T^{1/2}$ for the difference in arm values $\Delta(3, 1) = \EE{Y_t(3) - Y_t(1)}$ at $T = 10^5$. Numbers aggregated across over $10^5$ replications.}
    \label{fig:histogram}
  \end{figure*}
\fi

\section{Numerical Experiments}
\label{sec:sims}

We compare methods for estimating of arm values $Q(w)$ and their differences $\Delta(w, w')$, as well as constructing confidence intervals around these estimates, under different data-generating processes. 

We consider four point estimators of arm values $Q(w)$: the sample mean \eqref{eq:avg}, the AIPW estimator \eqref{eq:aipw}, and the adaptively-weighted AIPW estimator \eqref{eq:aw} with constant \eqref{eq:const} and two-point \eqref{eq:two_point_allocation} allocation rates. Around each of these estimators, we construct confidence intervals $\smash{\hQ_T \pm z_{\alpha/2} \hV_T^{1/2}}$ based on the assumption of approximate normality. For the AIPW-based estimators\footnote{Recall that the AIPW estimator \eqref{eq:aipw} is an instance of the adaptively-weighted AIPW \eqref{eq:aw} with uniform weights $h_t \equiv 1$, so we may use the same formula for the variance \eqref{eq:clt}.} we use the sample mean of arm rewards up to period $t-1$ as the plug-in estimator $\hat{m}_t(w)$ and the estimate of the variance given in \eqref{eq:clt}. For the sample mean we use the usual variance estimate $\smash{\hV^{AVG}(w) := T_w^{-2} \sum_{t: W_t = w}^{T} (Y_t - \hQ^{AVG}_T(w))^2}$. Approximate normality is not theoretically justified for the unweighted AIPW estimator or for the sample mean. We also consider non-asymptotic confidence intervals for the sample mean, based on the method of time-uniform confidence sequences described in \cite{howard2021uniform}. See \appendixref[D.2]{sec:nonasymptic_ci} for details.

In addition, we consider four analogous estimators for the treatment effect $\Delta(w, w')$: the difference in sample means, the difference in AIPW estimators \eqref{eq:aipw}, and the adaptively-weighted AIPW estimator \eqref{eq:delta} with constant \eqref{eq:const} and two-point \eqref{eq:two_point_allocation} allocation rates. For the AIPW-based estimators we use $\hat{m}_t(w)$ as above for the plugin and the estimate of the variance given in \eqref{eq:studentized_contrast}. For the sample mean, we use the usual variance estimate $\smash{\hV^{AVG}(w) + \hV^{AVG}(w')}$. For the method based on \cite{howard2021uniform}, we construct confidence intervals for the treatment effect by using the union bound to combine intervals around each sample mean. 

We have three simulation settings, each with $K = 3$ arms yielding rewards that are observed with additive $\text{uniform}[-1, 1]$ noise. The settings differ in the size of the gap between the arm values. In the \emph{no-signal} case, we set arm values to $Q(w) = 1$ for all $w \in \{1, 2, 3\}$; in the \emph{low signal} case, we set $Q(w) = 0.9+ 0.1w$; and \emph{high signal} case we set $Q(w) = 0.5 + 0.5w$.  During the experiment, treatment is assigned by a modified Thompson sampling method \citep[see e.g.,][]{russo2018tutorial}: first, tentative assignment probabilities are computed via Thompson sampling with normal likelihood and normal prior; they are then adjusted to impose the lower bound $e_t(w) \geq (1/K)t^{-0.7}$. See \appendixref[D.1]{sec:thompson_sampling} for details.

As a short mnemonic, in what follows we call arms $1$ and $3$ the ``bad'' arm and ``good'' arm. 
As these labels are fixed, tests involving the value of the ``good'' arm are tests of a pre-specified hypothesis.
Figure \ref{fig:contrast} shows the evolution of estimates of the difference $\Delta(3, 1) = \EE{Y(3) - Y(1)}$ over time, Figure \ref{fig:histogram} shows the asymptotic distribution of studentized test statistics of the form $(\hDelta_T - \Delta)/\hV_T^{1/2}$ for each estimator at the end of a long ($T = 10^5$) experiment, and Figure \ref{fig:arm_values} shows arm value statistics. Additional results are shown in \appendixref[A]{sec:additional_figures}.\footnote{Reproduction code can be found at \url{https://github.com/gsbDBI/adaptive-confidence-intervals}.} 
 
Figures~\ref{fig:contrast} and~\ref{fig:arm_values} show that although the AIPW estimator with uniform weights (labeled as ``unweighted AIPW'') is unbiased, it performs very poorly in terms of RMSE and confidence interval width. In the low and high signal case, its problem is that it does not take into account the fact that the bad arm is undersampled, so its variance is high; in the no-signal case, it yields studentized statistics that are far from normal, as we see in Figure~\ref{fig:histogram}. 

Figures~\ref{fig:contrast} and~\ref{fig:arm_values} show that our adaptively weighted AIPW estimators perform relatively well and 
normal confidence intervals intervals around them have roughly correct coverage. We see that these estimators do have approximately normal studentized statistics
in Figure~\ref{fig:histogram}.
Note that even in our longest experiments, in high-signal settings the bad arm receives only around 50 observations, which suggests that normal approximation does not require an impractical number of observations.\footnote{For intuition about the number of observations we see in the bad arm in the high-signal case, consider that the assignment probabilities 
of suboptimal arms quickly hit their lower bound \smash{$e_t = (1/3)t^{-0.7}$; $(1/3)\sum_{t=1}^{10^5}t^{-0.7} \approx 35$.}}
 Of these two methods, \emph{two-point allocation} better controls the variance of bad arm estimates by more aggressively downweighting `unlikely' observations 
associated with large inverse propensity weights; this results in smaller RMSE and tighter confidence intervals. 

As mentioned in the introduction, the sample mean is downwardly-biased for arms that are undersampled. Figure \ref{fig:arm_values} shows this bias can be non-monotonic in signal strength. In the \emph{high-signal} case, the probability of pulling the bad arm decays so fast that very few observations are assigned to it. 
Most of these come from the beginning of the experiment when the algorithm is still exploring and sampling is less adaptive, resulting in smaller bias.
In the \emph{no-signal case}, the bias is small because the `good' and `bad' arms have the same value. 
In some simulations, one arm or another is discarded and its estimate is biased downward; in others, it is collected heavily and its estimate is nearly unbiased. Averaging over these scenarios results in the low bias we observe. Between these extremes, in the \emph{low-signal} case, the bad arm is usually collected for some period and then discarded, so its bias is larger in magnitude. 
For this estimator, naive confidence intervals based on the normal approximation are invalid, with severe under-coverage when there's little or no signal. 
On the other hand, the non-asymptotic confidence sequences of \cite{howard2021uniform} are conservative but often wide.

These simulations suggest that, in similar applications, the adaptively-weighted AIPW estimator with two-point allocation \eqref{eq:two_point_allocation} and 
the sample mean with confidence sequences based on \cite{howard2021uniform} should be preferred. These two methods have complementary advantages. 
In terms of mean-squared error, the sample mean often performs better, in particular in the presence of stronger signal. As for inference, normal intervals around the adaptively-weighted estimator with two-point allocation have asymptotically nominal coverage, while confidence sequences are often conservative and wider than those based on normal approximations; however, the former is valid only at a pre-specified horizon, while the latter is valid for all time periods and allows for arbitrary stopping times. Finally, in terms of assumptions, the adaptively-weighted estimator requires knowledge of the propensity scores, and its justification requires
that the propensity scores decay at a slow enough rate; the nonasymptotic confidence sequences for the sample mean require 
no restrictions on the assignment process and can be used even with deterministic algorithms such as UCB \citep{auer2002nonstochastic},\footnote{For deterministic algorithms such as UCB, the methods in \cite{dimakopoulou2017estimation} suggest inverse propensity weighting schemes whose weights are of the form $1/\max\{\hat{e}_t(w), \gamma \}$, where $\hat{e}_t(w)$ are estimates of assignment probabilities based, e.g., on the empirical distribution of past assignments, and $\gamma > 0$ is a lower bound. However, this heuristic is not guaranteed to produce asymptotically normal estimates.} 
but requires knowledge about other aspects of the distribution of potential outcomes, such as their support or an upper bound on their variance \citep[see][Section 3.2]{howard2020chernoff}.

\ifpnas\else
\begin{figure*}[ht]
  \centering
  \includegraphics[width=\textwidth]{figures/evolution_0-2.pdf}
  \caption{Evolution of estimates of $Q(3) - Q(1)$ across simulation settings for different experiment lengths. 
	   Error bars are 95\% confidence intervals around averages across $10^5$ replications.}
  \label{fig:contrast}
\end{figure*}

\begin{figure*}[ht]
  \centering
  \includegraphics[width=.7\textwidth]{figures/histogram_0-2.pdf}
  \caption{Histogram of studentized statistics of the form $(\hDelta_T - \Delta)/\hV_T^{1/2}$ for the difference in arm values $\Delta(3, 1) = \EE{Y_t(3) - Y_t(1)}$ at $T = 10^5$. Numbers aggregated across over $10^5$ replications.}
  \label{fig:histogram}
\end{figure*}
\fi

\begin{figure*}[ht]
  \centering
  \includegraphics[width=\textwidth]{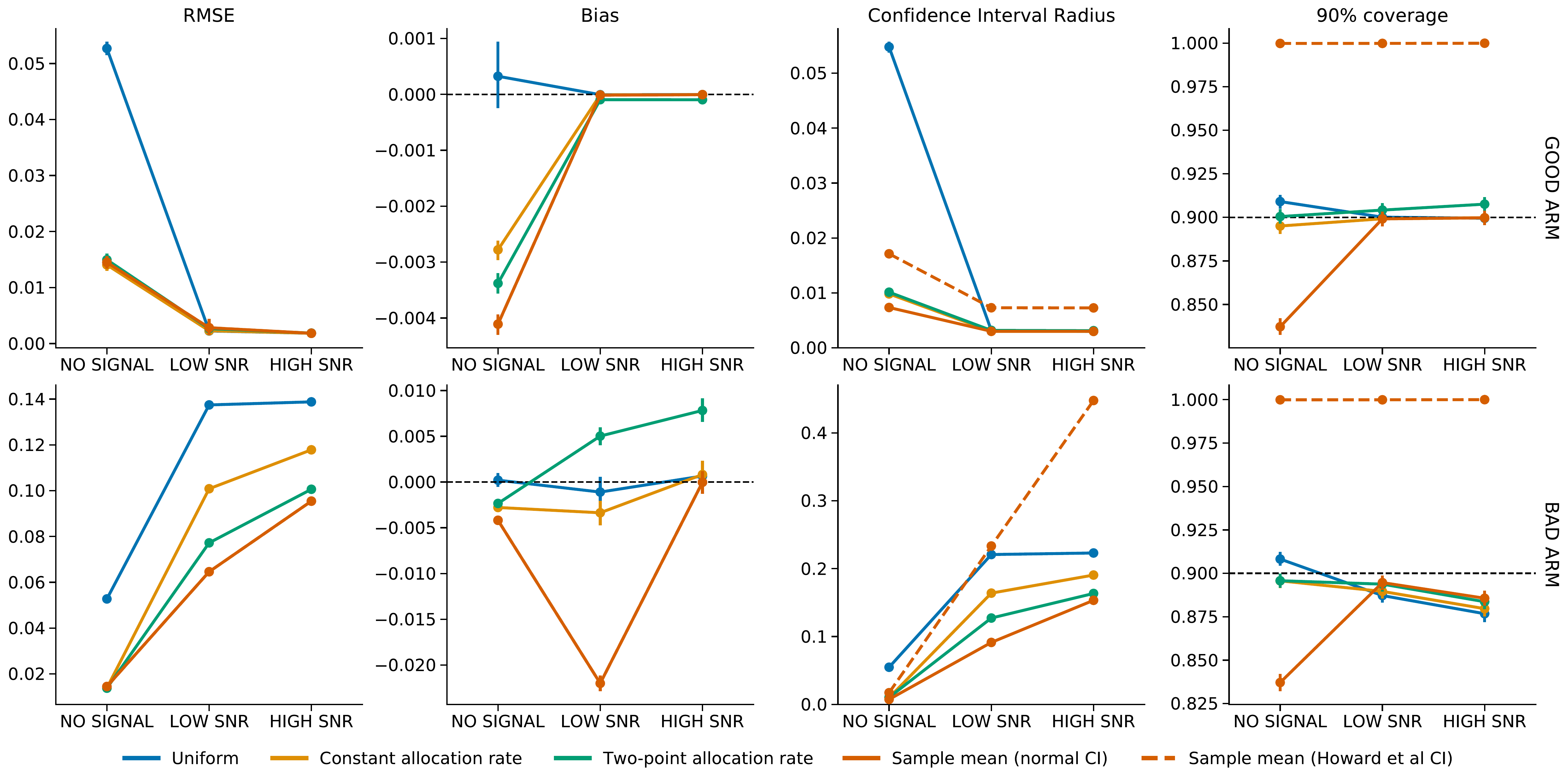}
  \caption{Estimates of the ``bad'' arm value $Q(1)$ and ``good'' arm value $Q(3)$ at $T = 10^5$. Error bars are 95\% confidence intervals around averages across $10^5$ replications.}
  \label{fig:arm_values}
\end{figure*}

\section{Related literature}
\label{sec:lit}

Much of the literature on policy evaluation with adaptively collected data focuses on learning or
estimating the value of an optimal policy. The classical literature \citep[e.g.,][]{rosenberger2015randomization} focuses on strategies for allocating treatment in clinical trials to optimize various criteria, such as determining whether a treatment is
helpful or harmful relative to control. \cite{van2008construction} generalizes this substantially, addressing
the problem of optimally allocating treatment to estimate or testing a hypothesis about a finite-dimensional parameter of the distribution of the data.
In optimal allocation problems, the undersampling issue we address by adaptive weighting does not arise,
as undersampling treatments relevant to the estimand or hypothesis of interest is suboptimal.

\citet{van2014online} considers the problem of policy evaluation when treatment is sequentially randomized but otherwise unrestricted. 
The estimator they propose in their Section~10.3, when specialized to the problem of estimating an arm value, reduces to the augmented inverse propensity weighting estimator \eqref{eq:aw}. They establish asymptotic
normality of their estimator under assumptions implying that a non-negligible proportion of participants is assigned the treatment of interest throughout the study. \cite{luedtke2016statistical} proposes a stabilized variant of this estimator which, similarly specialized, reduces to the adaptively-weighted estimator with constant allocation rates \eqref{eq:const}. The applicability of this approach to bandit problems is mentioned in \cite{luedtke2018parametric}. 
\cite{zhang2020inference} consider a similar refinement of an analogous weighted average estimator (see \eqref{eq:weighted-avg} below) for batched adaptive designs.

Drawing on the tradition of debiasing in high-dimensional statistics, \cite{deshpande2017accurate} propose a method that can be used to estimate policy values in non-contextual and linear-contextual bandits. Their approach, W-decorrelation, yields consistent and asymptotically normal estimates of linear regression coefficients when the covariates have strong serial correlation. For multi-armed bandits, where the arm indicators are used as the covariates, their estimates of arm values take the form
\begin{equation*}
  \begin{aligned}
    &\hQ_T^{WD}(w) := \bar{Y}_{w, T} + \sum_{t=1}^T a_{t, w}(Y_t - \bar{Y}_{w, T}) 
    \quad \text{where}\quad \\
    &a_{t, w} := \frac{1}{1 + \lambda}\p{\frac{\lambda}{1 + \lambda}}^{N_{W_t, t-1}}\mathbb{I}\{ W_t = w \},  
  \end{aligned}
\end{equation*}
where $N_{w, t}$ is the number of times arm $w$ was selected up to period $t$, $\bar{Y}_{w, T}$ is the sample average of its rewards at $T$, and $\lambda$ is a tuning parameter. We include a numerical evaluation of W-decorrelation in \appendixref[A]{sec:additional_figures}, finding it to produce arm value estimates with high variance.

\section{Discussion}

Adaptive experiments such as multi-armed bandits are often a more efficient way of collecting data than traditional randomized controlled trials. However, they bring about several new challenges for inference. Is it possible to use bandit-collected data to estimate parameters that were not targeted by the experiment? Will the resulting estimates have asymptotically normal distributions, allowing for our usual frequentist confidence intervals? This paper provided sufficient conditions for these questions to be answered in the affirmative and proposed an estimator that satisfies these assumptions by construction. Our approach relies on constructing averaging estimators where the weights are carefully adapted so that the resulting asymptotic distribution is normal with low variance. In empirical applications, we have shown that our method outperforms existing alternatives, in terms of both mean squared error and coverage.

We believe this work represents an important step towards a broader research agenda for policy learning and evaluation in adaptive experiments. Natural questions left open include the following.

{\it What other estimators can be used for normal inference with adaptively collected data?}
In this paper, we have focused on estimators derived via the adaptively-weighted AIPW construction
\eqref{eq:aw}. However, this is not the only way to obtain normal confidence intervals. For example,
in the setting of Theorem \ref{theo:arm_value_clt}, one could also consider the weighted average estimator
\begin{equation}
  \label{eq:weighted-avg}
  \begin{split}
  \hQ^{h\text{-avg}}_T(
  w) = \sum_{t=1}^T h_t(w) \frac{ \ind{W_t = w} }{e_t(w)} \, Y_t\, \bigg /\,   \sum_{t=1}^T h_t(w) \frac{\ind{W_t = w} }{e_t(w)}. 
\end{split}
\end{equation}
 Asymptotic normality of this estimator essentially follows from Theorem \ref{theo:arm_value_clt} (see \appendixref[C.4]{sec:alternative_estimators}). 
In our numerical experiments, we found its performance to be essentially indistinguishable from that of the adaptively-weighted estimator defined as in \eqref{eq:aw}. We've focused on \eqref{eq:aw} because it readily allows formal study in a more general setting; however, the simpler estimator \eqref{eq:weighted-avg} is appealing in the special case of evaluating a single arm. For a more general discussion of the relationship between augmented estimators like \smash{$\hQ^{h}$} and variants like \smash{$\hQ^{h\text{-avg}}$} in adaptive experiments, see \cite{van2014online}.

{\it What should an optimality theory look like?}
Our result in Theorem \ref{theo:arm_value_clt} provides one recipe for building confidence
intervals using an adaptive data collection algorithm like Thompson sampling for which we know
the treatment assignment probabilities. Here, however, we have no optimality guarantees on
the width of these confidence intervals. It would be of interest to characterize, e.g.,
the minimum worst-case expected width of normal confidence intervals that can be built using such data.

{\it How do our results generalize to more complex sampling designs?}
In many application areas, there's a need for methods for policy evaluation and inference that work with more general
designs such as contextual bandits, and in settings with non-stationarity or random stopping.

\section*{Acknowledgments}

We are grateful for the generous financial support provided by
the Sloan Foundation,
Office of Naval Research grant N00014-17-1-2131,
National Science Foundation grant DMS-1916163,
Schmidt Futures,
Golub Capital Social Impact Lab,
and the Stanford Institute for Human-Centered Artificial Intelligence. 
Ruohan Zhan acknowledges generous support from the Total Innovation graduate fellowship. In addition, we thank Steve Howard, Sylvia Klosin, Sanath Kumar Krishnamurthy and Aaditya Ramdas for helpful conversations.

\bibliographystyle{apalike}
\bibliography{references}

\ifpnas\else
  \newpage
  \appendix
  
\section{Additional Figures}
\label{sec:additional_figures}

Here we present additional numerical results. Sections \ref{sec:evolution_arm}-\ref{sec:lambdapath} show further results for the simulation setting described in Section \ref{sec:sims}. Section \ref{sec:evolution_arm} presents the evolution of estimates of arm values over time, and \ref{sec:normality_arm} presents their distribution at $T = 10^5$. Section \ref{sec:wdecorr} compares the adaptively-weighted AIPW estimator with two-point allocation with the W-decorrelation method of \cite{deshpande2017accurate}. Section \ref{sec:lambdapath} discusses the behavior of the two-point allocation over time. In addition, in Section \ref{sec:intro_example_revisited} we apply an adaptively-weighted AIPW estimator to the example described in the introduction of the main section of the paper.
 
As in the main section of the paper, all numbers below are aggregated over at least 10,000 simulations, and all error bars are 95\%-confidence intervals.

\subsection{Behavior of arm value estimators over time}
\label{sec:evolution_arm}

Figures \ref{fig:evolution_arm_values2}-\ref{fig:evolution_arm_values0} show the behavior of different estimators of arm values $Q(w)$ over time. Qualitatively, these figures are similar to Figure \ref{fig:contrast} discussed in the main section of the paper. Figure \ref{fig:evolution_arm_values2} demonstrates that the we are able to estimate the good arm value $Q(3)$ with considerable ease in high- and low-signal settings, in that all methods produce point estimates with negligible bias and small root mean-squared error, and moreover attain roughly correct coverage for large $T$. This suggests that the poor performance discussed in the main section of the paper when discussing estimates of the difference $\Delta(3, 1)$ are mostly due to the fact that estimating the bad arm value $Q(3)$ is hard. This is confirmed in Figure~\ref{fig:evolution_arm_values2}.

\begin{figure}[H]
  \centering
  \includegraphics[width=\textwidth]{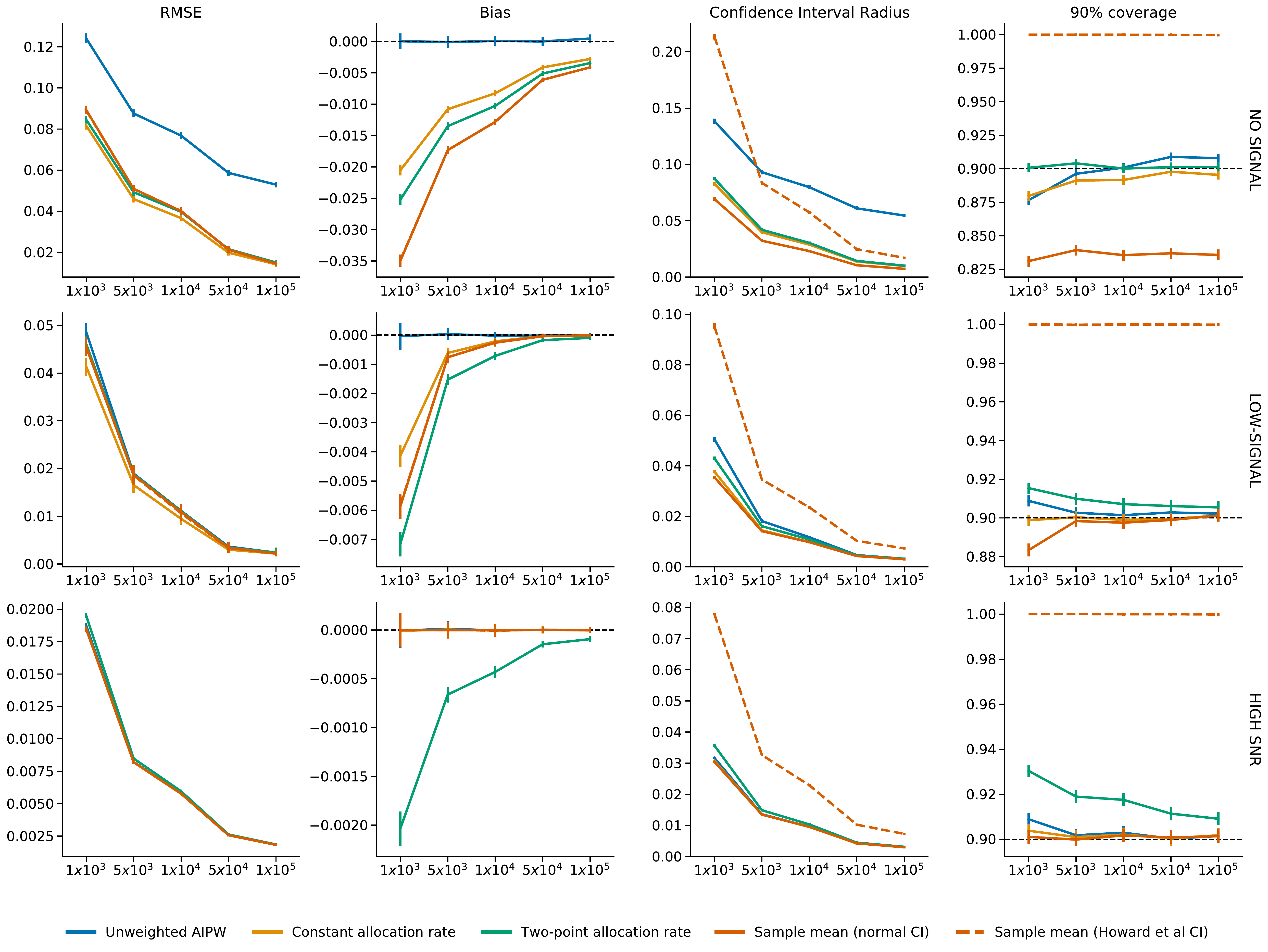}
  \caption{Estimates of the ``good'' arm value $Q(3)$ for different experiment lengths.}
  \label{fig:evolution_arm_values2}
\end{figure}

\begin{figure}[H]
  \centering
  \includegraphics[width=\textwidth]{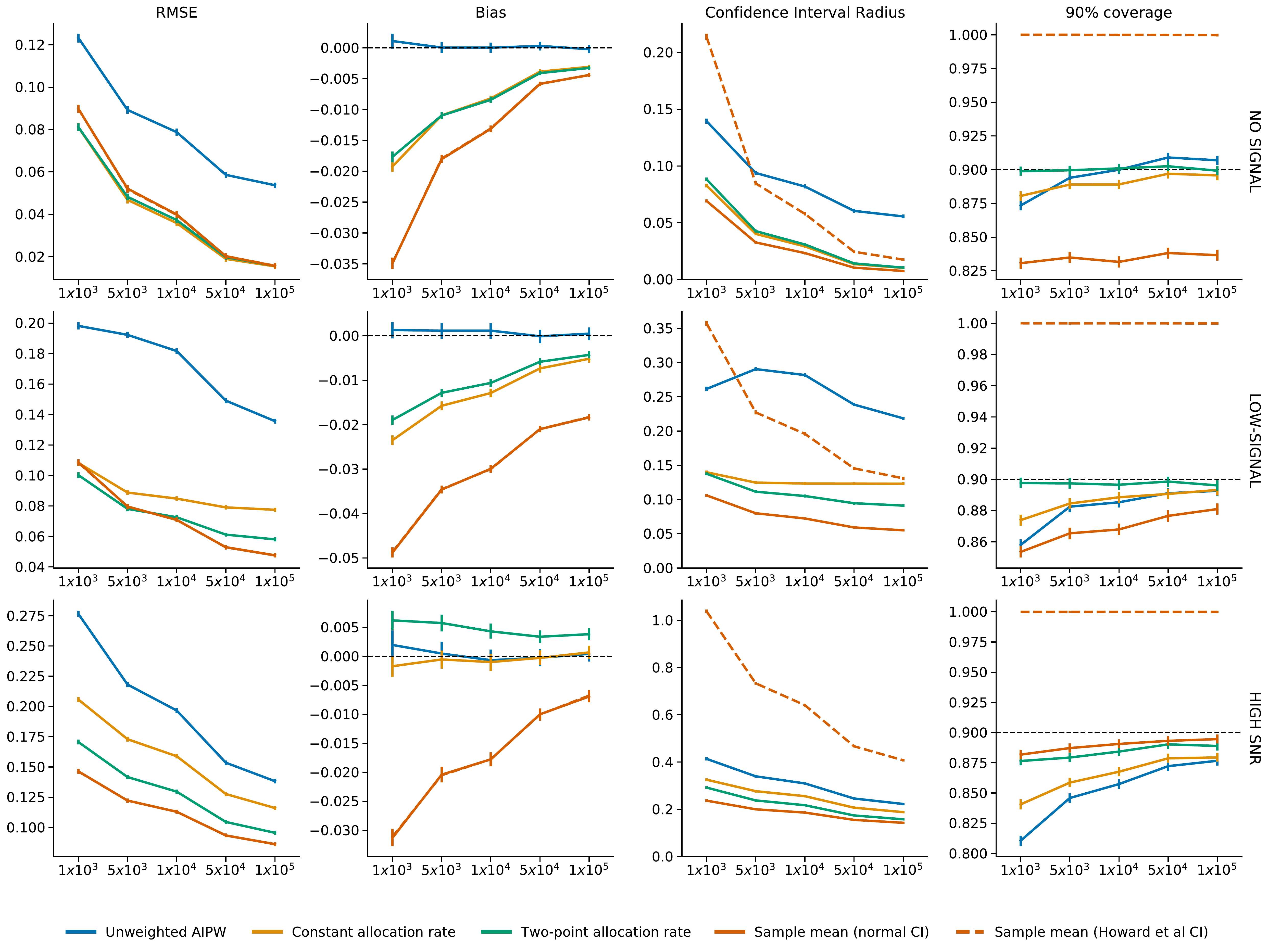}
  \caption{Estimates of the middle arm value $Q(2)$ across for different experiment lengths.}
  \label{fig:evolution_arm_values1}
\end{figure}

\begin{figure}[H]
  \centering
  \includegraphics[width=\textwidth]{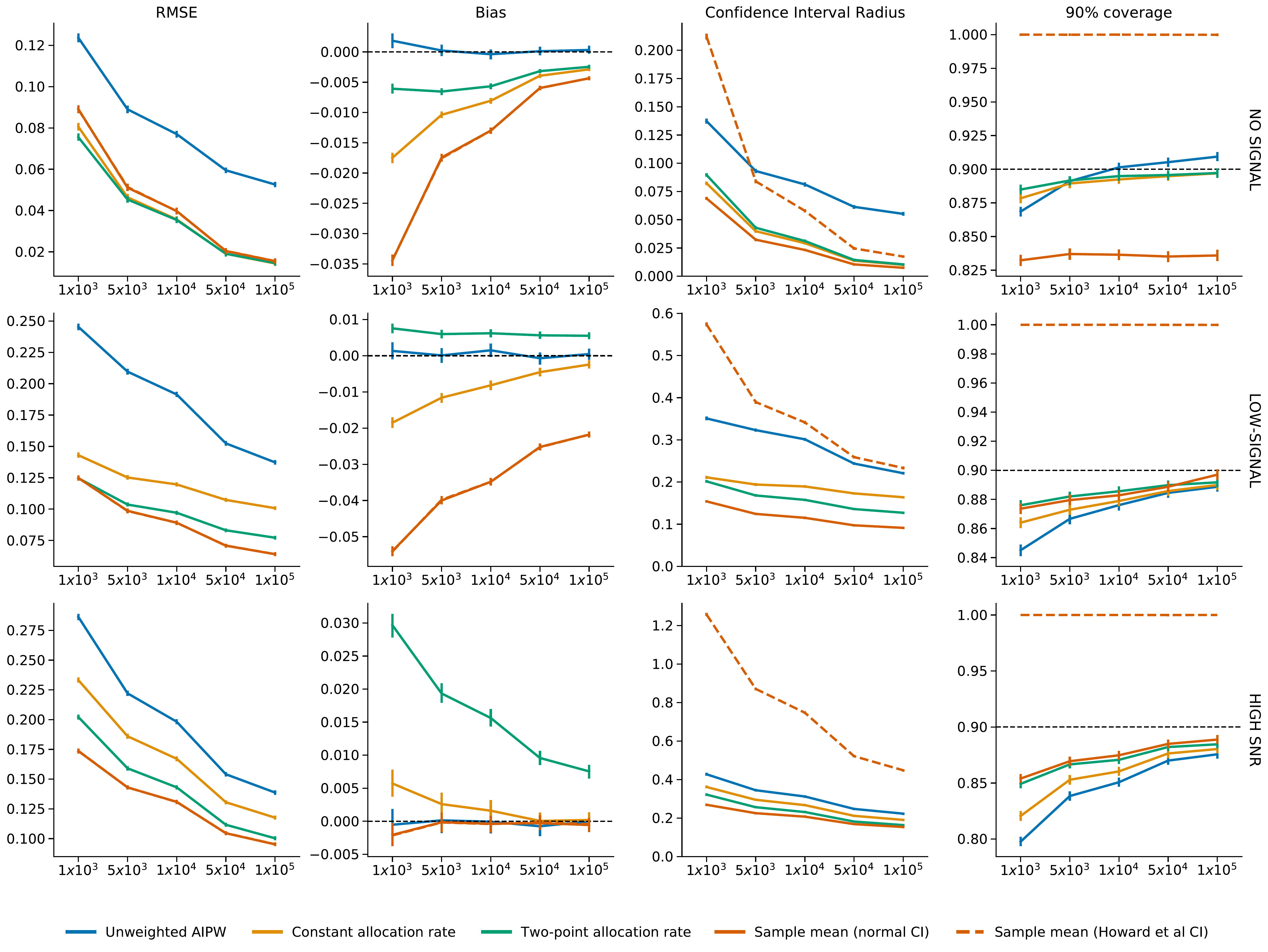}
  \caption{Estimates of the ``bad'' arm value $Q(1)$ across for different experiment lengths.}
  \label{fig:evolution_arm_values0}
\end{figure}

\subsection{Asymptotic normality of arm value estimators} 
\label{sec:normality_arm}

In Figures \ref{fig:histogram_2}-\ref{fig:histogram_0}, the first three columns show the asymptotic distribution of the studentized statistic \eqref{eq:clt} associated with the adaptively-weighted estimator for different weighting schemes; the last column shows the difference between sample mean estimate and true value divided by standard error of the mean. As discussed the main section of the paper, when there is no signal neither the unweighted AIPW estimator or the sample mean have an approximately normal distribution.

\begin{figure}[H]
  \centering
  \includegraphics[width=.8\textwidth]{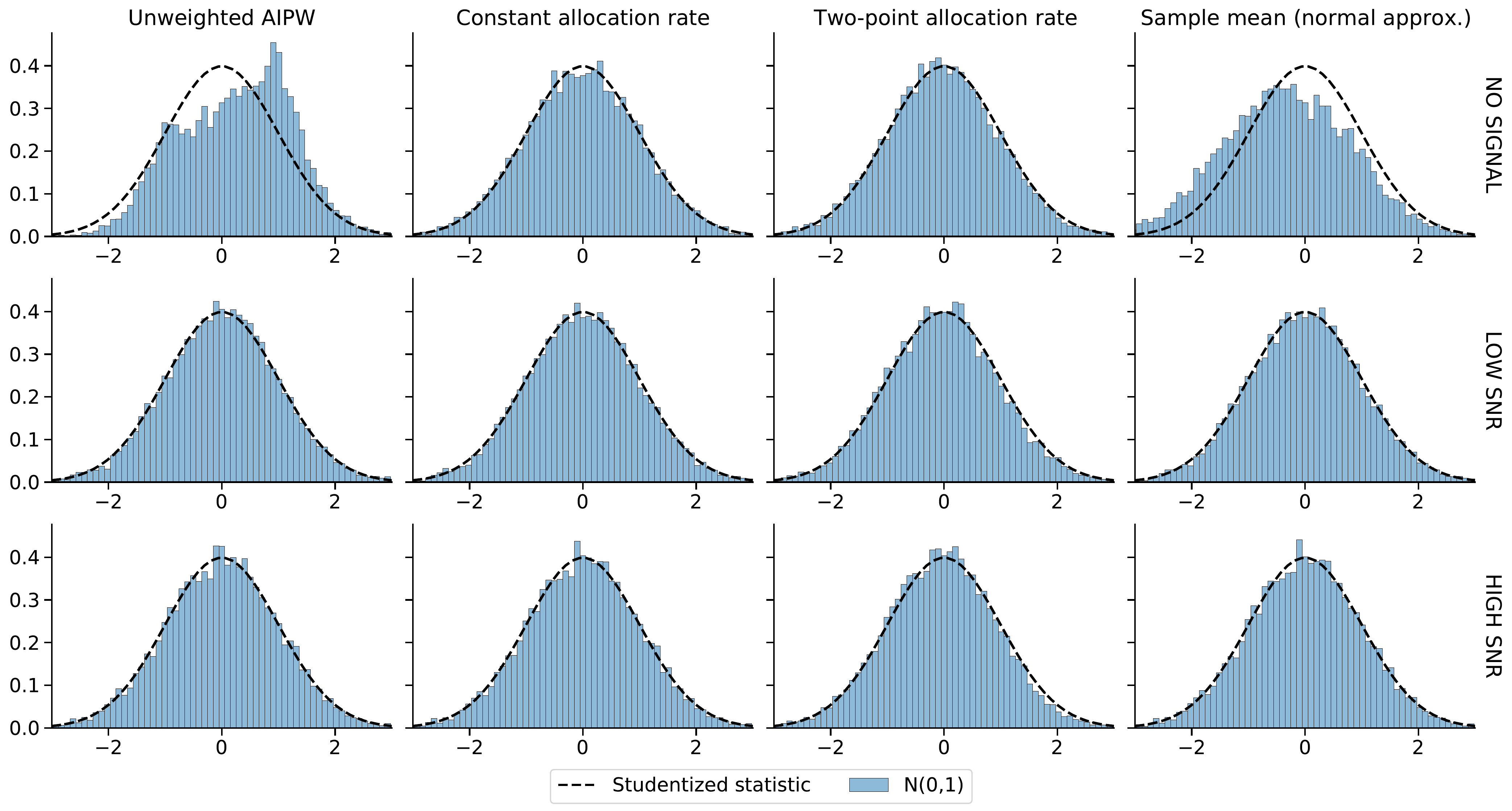}
  \caption{Distribution of studentized statistics \eqref{eq:clt} of the value of the ``good'' arm $Q(3)$ at~$T = 10^5$.}
  \label{fig:histogram_2}
\end{figure}

\begin{figure}[H]
  \centering
  \includegraphics[width=.8\textwidth]{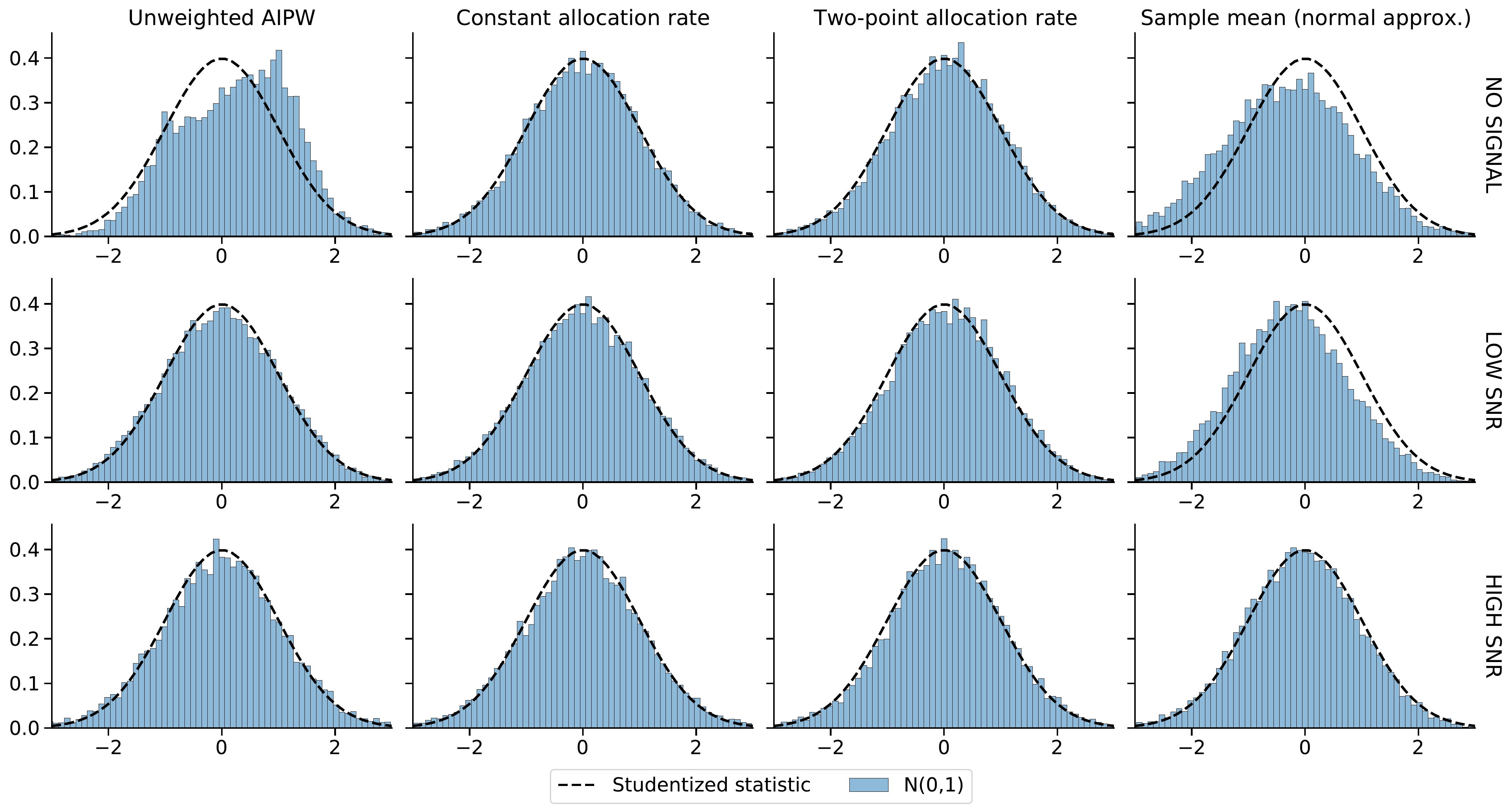}
  \caption{Distribution of studentized statistics \eqref{eq:clt} of $Q(2)$ at $T = 10^5$.}
  \label{fig:histogram_1}
\end{figure}

\begin{figure}[H]
  \centering
  \includegraphics[width=.8\textwidth]{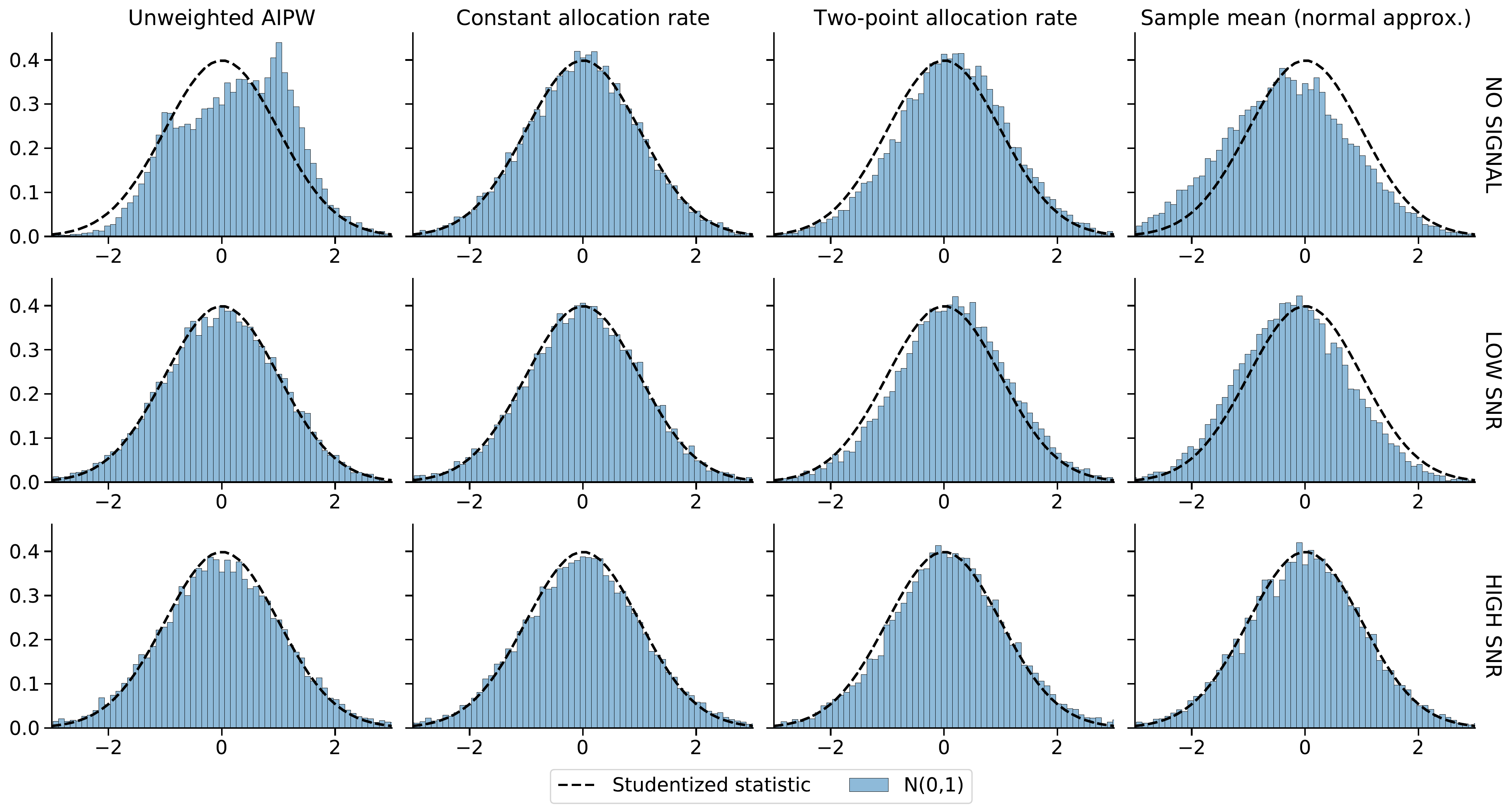}
  \caption{Distribution of studentized statistics \eqref{eq:clt} of value of the ``bad'' arm $Q(1)$ at $T = 10^5$.}
  \label{fig:histogram_0}
\end{figure}

\subsection{Comparison to W-decorrelation}
\label{sec:wdecorr}

Figure \ref{fig:compare_W} compares our ``two-point allocation method'' for variance stabilization to the W-decorrelation method of \cite{deshpande2017accurate}. We see that although both methods attain the correct coverage, W-decorrelation requires much more variance, resulting in an estimator that typically has much higher mean squared error.

\begin{figure}[H]
  \centering
  \includegraphics[width=\textwidth]{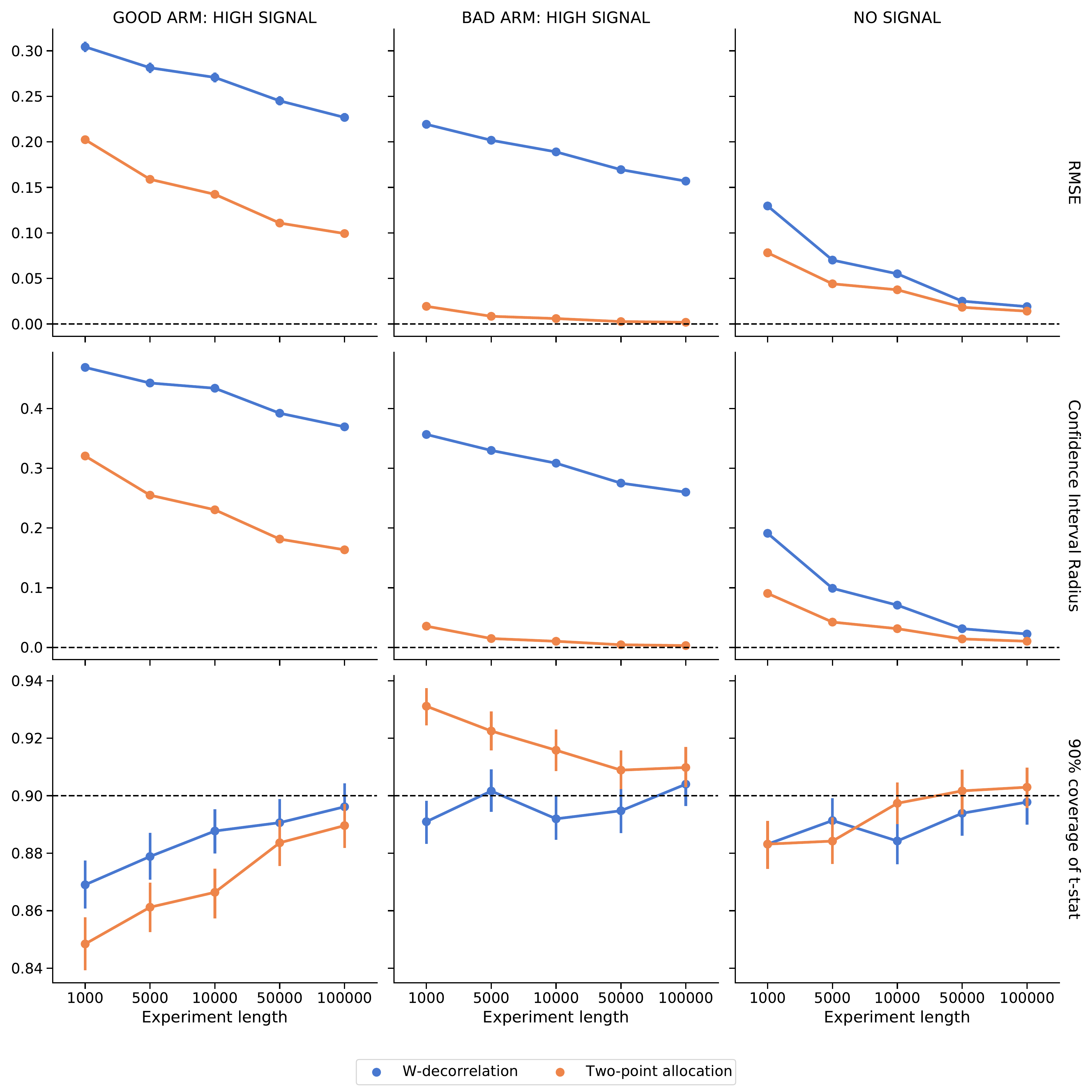}
  \caption{Comparison between the adaptively weighted estimator \eqref{eq:aw} with two-point allocation evaluation weights \eqref{eq:two_point_allocation} against W-decorrelation.}
  \label{fig:compare_W}
\end{figure}

\subsection{$\lambda_t$ path}
\label{sec:lambdapath}

Figure \ref{fig:lambda_20000} shows how the fraction of variance allocated to the current observation changes for the good and bad arms across simulation designs by our \emph{two-point allocation scheme}. On the top row, we see that the value of $\lambda_t$ decays over time, indicating the earlier observations receive more weight. On the bottom row, we see that once it becomes clearer that the arm is optimal, the allocation rate becomes roughly constant.

\begin{figure}[H]
  \centering
  \includegraphics[width=0.7\textwidth]{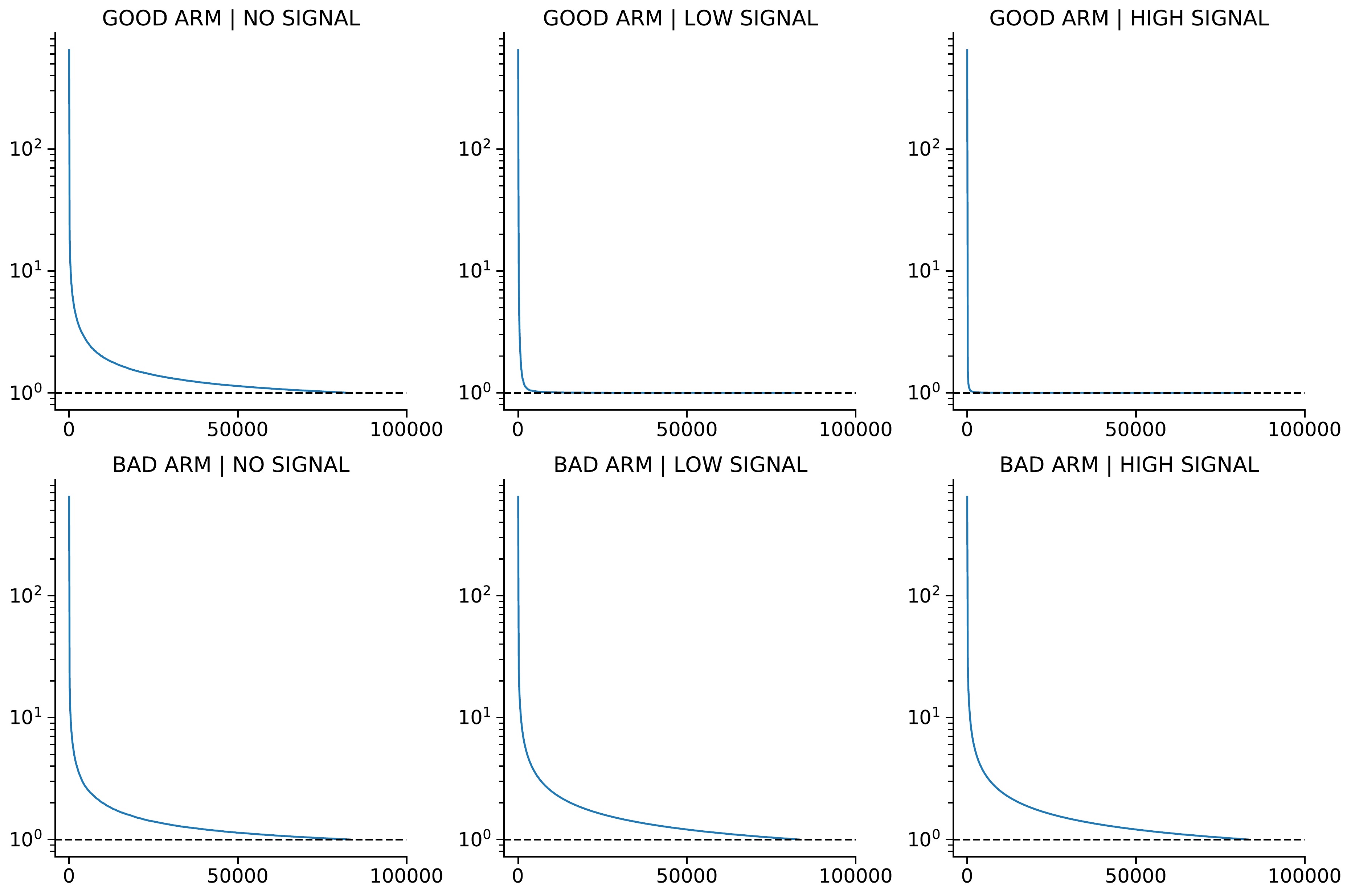}
  \caption{Values of $(T - t)\lambda_t$ under the two-point allocation specification \eqref{eq:two_point_allocation}.}
  \label{fig:lambda_20000}
\end{figure}

\subsection{Introduction example, revisited.}
\label{sec:intro_example_revisited} 

Here we return to the example described in the introduction to confirm our method yields an asymptotically normal test statistic in that setting. To recap, we have two identical arms with unit normal rewards. In the first half of the experiment, we pull both arms with equal probability; in the second half, we pull the arm with highest sample mean at $T/2$ with 90\% probability.

The top row of Figure \ref{fig:intro_example_complete} augments Figure \ref{fig:example} by including the adaptively-weighted estimator \eqref{eq:aw} with constant allocation weights \eqref{eq:const}. Overall, the adaptively-weighted estimator is more thin-tailed than the inverse propensity-weighted estimator and less biased than the sample average. 

The bottom row shows the studentized distribution of the same set of estimators. That is, each estimate is centered at the true value (zero) and divided by the square-root of its variance estimate. For the sample mean and adaptively-weighted estimator, the variance estimates are the same as used in Section \ref{sec:sims}; for the inverse propensity-weighted estimator, we use the estimate $\smash{\hV^{IPW}(w) := T^{-2} \sum_{t=1}^T (\hGamma_t^{IPW}(w) - \hQ^{IPW}_T(w))^2}$. As guaranteed by Theorem \ref{theo:arm_value_clt}, the adaptively-weighted estimator has a centered and approximately normal distribution. The sample mean and the inverse-propensity weighted estimator do not have normal asymptotic distributions, although the deviation from normality is relatively small. These deviations suggest that self-normalization (i.e., dividing by an estimate of the standard deviation) by itself is not sufficient to establish asymptotic normality, and our evaluation weights are indeed playing a role. 

\begin{figure}[H]
  \centering
  \includegraphics[width=\textwidth]{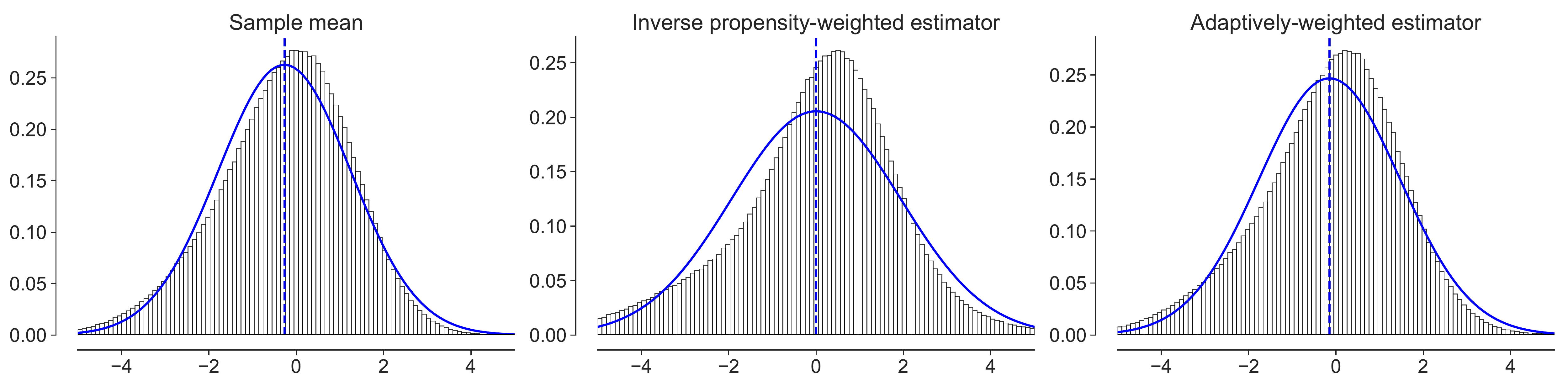}
  \includegraphics[width=\textwidth]{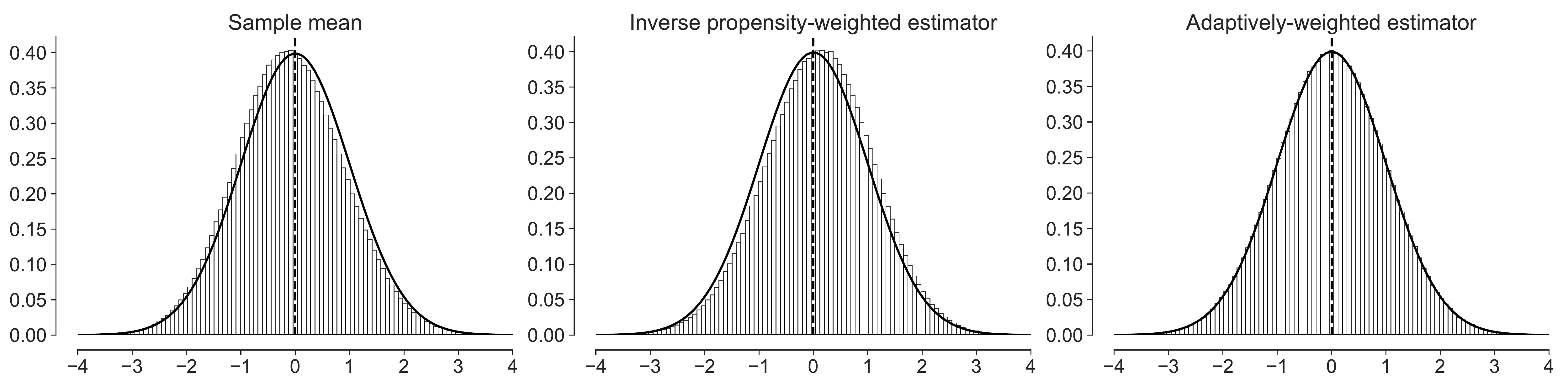}
  \caption{Distribution of the estimates of the sample mean \eqref{eq:avg}, inverse-propensity weighted mean \eqref{eq:ipw}, and adaptively-weighted estimator \eqref{eq:aw} with constant-allocation weights \eqref{eq:const}. Data-generating process was described in the introduction. All numbers are aggregated over 1,000,000 replications. \textit{Top}: Distribution of estimators for $T = 10^6$ across simulations, scaled by $\sqrt{T}$ for visualization. Solid blue line is the normal curve that matches the first two moments of the distribution. Dashed blue line denotes the empirical mean across simulations. \textit{Bottom}: Distribution of the ``studentized'' counterparts of each estimator in the top row. Solid line is the standard normal distribution. Dashed line is empirical average of each estimator (not of their studentized statistic) across simulations.}
  \label{fig:intro_example_complete}
\end{figure}

\clearpage

\section{Limit theorems for adaptively weighted unbiased scores}
\label{sec:general_clt}

In this section, we will prove more general versions of Theorems~\ref{theo:arm_value_clt}-\ref{theo:contrast_clt} 
from the body of the paper. 

\subsection{Setting} 

In Section \ref{sec:scoring}, we alluded to the fact that scoring rules can be used to construct estimators for a wide variety of estimands. 
We consider a setting in which there is a sequence of independent and identically distributed real-valued potential outcomes $(Y_t(w))_{w \in \mathcal W}$ 
for treatments in an arbitrary set $\mathcal W$. At each time-step we observe a realized treatment $W_t$ and the corresponding outcome $Y_t := Y_t(W_t)$, where the
distribution of $W_t$ is a known function of the history $H^{t-1} := \{ (Y_s,W_s) : s < t \}$. 
Letting $m(w) := \E[Y(w)]$ be the mean potential outcome at a given level of treatment, our goal is to estimate $\psi(m)$ where $\psi$ is a linear functional that is
continuous on the space $L_2(P_{t-1})$ of $H^{t-1}$-conditionally square integrable functions for all $t$.
This continuity property implies the existence of a unique \emph{Riesz representer} of $\psi$ on each space, 
i.e., a function $\gamma(\cdot; H^{t-1}) \in L_2(P_{t-1})$ satisfying
\begin{equation}
\label{eq:riesz}
\E[ \gamma(W_t; H^{t-1}) f(W_t) \mid H^{t-1}] = \psi(f) \ \text{ for all }\ f \in L_2(P_{t-1}).
\end{equation}
Using this Riesz representer, we define an unbiased scoring rule
\begin{equation}
 \label{eq:general_scoring_rule}
 \hGamma_t = \psi(\hat{m}) + \gamma(W_t; H^{t-1})\p{ Y_t - \hat{m}(W_t; H^{t-1})}
\end{equation}
in terms of an estimate $\hat m(\cdot; H^{t-1})$ of $m$ based on the history.
This framework generalizes the one used in \cite{chernozhukov2016locally}, \cite{hirshberg2017augmented}, and references therein
to the non-iid setting we consider here, in which \smash{$\norm{f}_{L_2(P_{t-1})}^2 = \E[f(W_t)^2 \mid H^{t-1}]$}, the space \smash{$L_2(P_{t-1})$}
is the set of functions with \smash{$\norm{f}_{L_2(P_{t-1})} < \infty$}, and continuity is the existence of a constant $c_{t-1}$
for which \smash{$\abs{\psi(f)} \le c_{t-1} \norm{f}_{L_2(P_{t-1})}$} for all functions $f \in L_2(P_{t-1})$.

Theorems \ref{theo:general_clt}, \ref{theo:general_variance_stabilizing_clt} and  \ref{theo:general_contrast} below concern this general setting. Theorems \ref{theo:arm_value_clt}, \ref{theo:variance_stabilizing_clt} and \ref{theo:contrast_clt} in the main section of the paper are special cases of these results.  In Section \ref{sec:cases-of-general} below, we explain in more detail how to map the general results back to the cases of interest in the paper.

As we did before, we economize on notation by writing subscripts instead of explicitly conditioning on the history so that $\gamma_t(\cdot) \equiv \gamma(\cdot; H^{t-1})$, $m_t \equiv m(\cdot; H^{t-1})$, and so on. Expectations of a random variable $X$ conditional on the history up to (and including) some period $t-1$ will
be denoted by $\EE[t-1]{X}$. Also,  whenever it does not lead to confusion we will write $\EE[t-1]{f}$ to mean $\EE[t-1]{f(W_t)}$ for functions of the treatment. Finally, we avoid some parens clutter by writing $\EE[t-1]{X}^p$ to mean $(\EE[t-1]{X})^p$.

\begin{assu}[Infinite sampling]
\label{assu:general_infinite_sampling}
The evaluation weights $h_t$ satisfy
\begin{align}
  \label{eq:general_infinite_sampling}
  \frac
    { \p{ \sum_{t=1}^T h_t }^2}
    { \EE{\sum_{t=1}^T h_t^2 \gamma_t^2}  } 
  &\xrightarrow[T \to \infty]{p} \infty.
\end{align}
\end{assu}

\begin{assu}[Variance convergence]
\label{assu:general_variance_convergence}
The evaluation weights $h_t$ satisfy, for some $p > 1$,
\begin{align}
  \label{eq:general_variance_convergence}
  \frac
    {\sum_{t=1}^T h_t^2 \EE[t-1]{\gamma_t^2} }   
    {\EE{ \sum_{t=1}^T h_t^2 \gamma_t^2 } }   
  &\xrightarrow[T \to \infty]{L_p} 1 \text{ for } p > 1.
\end{align}
\end{assu}

\begin{assu}[Bounded moments]
\label{assu:general_lyapunov}
The evaluation weights $h_t$ satisfy, for some $\delta > 0$,
\begin{align}
\label{eq:general_lyapunov}
\frac
  {\sum_{t=1}^T h_t^{2 + \delta} \EE[t-1]{|\gamma_t|^{2+\delta}} }   
  {\EE{ \sum_{t=1}^T h_t^2 \gamma_t^2}^{1 + \delta/2} }
&\xrightarrow[T \to \infty]{p} 0.
\end{align}
\end{assu}

\begin{theo}[General CLT for adaptively weighted scores]
\label{theo:general_clt}
In the setting described above, suppose that the variance of $Y_t(w)$ is bounded from above and away from zero and that $\EE{|Y_t(w)|^{2+\delta}}$ 
is bounded from above for some $\delta > 0$, with all bounds uniform in $w$; that $\gamma_t$ satisfies $\E_{t-1}\gamma_t^2 > b$ for every $t$ for some $b > 0$;
and that $\hm_t$ is a uniformly bounded sequence of estimators of the conditional mean function $m$ 
that converges to some function $m_\infty$ in the sense that \smash{$||\hm_t - m_\infty||_{L_\infty(P_{t-1})} \xrightarrow[t \to \infty]{a.s} 0$.}
Let $h_t$ denote \smash{$H^{t-1}$}-measurable non-negative weights satisfying Assumptions \ref{assu:general_infinite_sampling}, \ref{assu:general_variance_convergence} and \ref{assu:general_lyapunov}, and suppose that either
\begin{equation}
  \label{eq:general_mu_alternative}
  ||\hm - m||_{L_\infty(P_{t-1})} \xrightarrow[t \to \infty]{a.s} 0 
   \quad \text{or}  \quad \EE[t-1]{\gamma_t^2}  
  \xrightarrow[t \to \infty]{a.s.} \bar{\gamma}^2_{\infty} \in (0, \infty].
\end{equation}

Then for $\hGamma_t$ defined in \eqref{eq:general_scoring_rule}, the adaptively weighted estimator 
\begin{align}
  \label{eq:psihat}
  \hpsi_T = \frac{\sum_{t=1}^T h_t (\hGamma_t - \psi(m)) }{ \sum_{t=1}^T h_t }
\end{align}
converges to $\psi(m)$ in probability and the following studentized statistic is asymptotically normal:
  \begin{align}
    \label{eq:studentized_psihat}
    \frac{\hpsi_T - \psi(m)}{\hV_T^{\frac{1}{2}}} \xrightarrow[T \to \infty]{d} \mathcal{N}(0, 1),
    \quad \text{where} \quad
    \hV_T := \frac{\sum_{t=1}^T h_t^2 \left( \hGamma_t - \hpsi_T \right)^2}{\left( \sum_{t=1}^T h_t \right)^2}.
  \end{align}
\end{theo}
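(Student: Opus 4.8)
The plan is to recognize $\hpsi_T - \psi(m)$ as a self-normalized martingale and to apply a martingale central limit theorem, after establishing that the studentizing variance $\hV_T$ is asymptotically equivalent to the sum of conditional variances of the numerator. First I would verify the martingale structure: applying the defining property \eqref{eq:riesz} of the Riesz representer to both $m$ and $\hm$ gives $\E[\hGamma_t \mid H^{t-1}] = \psi(\hm) + (\psi(m) - \psi(\hm)) = \psi(m)$, so that $D_t := h_t(\hGamma_t - \psi(m))$ is a martingale difference sequence with respect to $\{H^{t-1}\}$, each $h_t$ being history-adapted. Writing $S_T := \sum_{t=1}^T D_t$, the studentized statistic \eqref{eq:studentized_psihat} equals $S_T / \hat\Sigma_T^{1/2}$ with $\hat\Sigma_T := \sum_{t=1}^T h_t^2(\hGamma_t - \hpsi_T)^2$, since the factors $\sum_t h_t$ cancel.

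Next I would analyze the conditional variance $v_t := \var(\hGamma_t \mid H^{t-1})$. Decomposing $Y_t - \hm(W_t) = (Y_t - m(W_t)) + (m - \hm)(W_t)$ and using that the noise term $\gamma_t(Y_t - m(W_t))$ is conditionally mean-zero and conditionally uncorrelated with the bias term $\gamma_t(m - \hm)(W_t)$, one obtains the clean split $v_t = \E[\gamma_t^2\,\var(Y_t \mid W_t) \mid H^{t-1}] + \big(\E[\gamma_t^2 (m-\hm)^2 \mid H^{t-1}] - (\psi(m)-\psi(\hm))^2\big)$. The first, irreducible term is comparable to $\E[\gamma_t^2 \mid H^{t-1}]$ because $\var(Y_t(w))$ is bounded above and away from zero. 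The second, bias term is where the dichotomy \eqref{eq:general_mu_alternative} enters: when $\hm \to m$ it is $o(\E[\gamma_t^2 \mid H^{t-1}])$ and hence negligible, while when $\E[\gamma_t^2 \mid H^{t-1}]$ converges it contributes a stable excess variance. In either case, combining this with the variance-convergence Assumption \ref{assu:general_variance_convergence}, which forces $\sum_t h_t^2 \E[\gamma_t^2 \mid H^{t-1}]$ to concentrate in $L_p$ around the deterministic $B_T := \E[\sum_t h_t^2\gamma_t^2]$, shows that the conditional variance $\Sigma_T := \sum_t h_t^2 v_t$ concentrates around a deterministic proxy. Crucially, the self-normalization by $\hat\Sigma_T$ absorbs the unknown variance factor, so I need not track $\var(Y_t(w))$ explicitly.

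I would then apply a martingale CLT (e.g. \citet[Theorem 3.2]{hall2014martingale}) to $S_T / \Sigma_T^{1/2}$. The conditional Lindeberg condition follows from the bounded-moments Assumption \ref{assu:general_lyapunov}: using $\E|Y_t(w)|^{2+\delta} < \infty$ and the boundedness of $\hm$, the conditional $(2+\delta)$-th moment of $\hGamma_t - \psi(m)$ is controlled by $\E[|\gamma_t|^{2+\delta}\mid H^{t-1}]$, and \eqref{eq:general_lyapunov} makes the normalized Lyapunov sum vanish. Together with the normalized conditional variance tending to one, this yields $S_T/\Sigma_T^{1/2} \todist \mathcal{N}(0,1)$. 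Consistency of $\hpsi_T$ follows separately from the infinite-sampling Assumption \ref{assu:general_infinite_sampling}, which makes $(\sum_t h_t)^2/B_T \to \infty$ and hence drives the variance of $\hpsi_T$ to zero.

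The remaining and hardest step is to show $\hat\Sigma_T / \Sigma_T \toprob 1$, so that Slutsky's theorem gives $S_T/\hat\Sigma_T^{1/2} = (S_T/\Sigma_T^{1/2})/(\hat\Sigma_T/\Sigma_T)^{1/2} \todist \mathcal{N}(0,1)$. Writing $\hGamma_t - \hpsi_T = (\hGamma_t - \psi(m)) - (\hpsi_T - \psi(m))$, the leading term of $\hat\Sigma_T$ is $\sum_t h_t^2(\hGamma_t - \psi(m))^2$, the cross and quadratic corrections being negligible once $\hpsi_T - \psi(m)\toprob 0$ at the rate supplied by infinite sampling. The main difficulty is that $\sum_t h_t^2(\hGamma_t - \psi(m))^2$ must agree in \emph{ratio}, not merely additively, with $\Sigma_T = \sum_t h_t^2 v_t$: the differences $h_t^2[(\hGamma_t - \psi(m))^2 - v_t]$ form a martingale-difference array whose fluctuations must be controlled relative to the possibly diverging, heteroskedastic, and non-converging scale set by $\gamma_t$. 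This is precisely where the $L_p$ strength of Assumption \ref{assu:general_variance_convergence} and the $(2+\delta)$-moment control of Assumption \ref{assu:general_lyapunov} are needed, and where reconciling the two branches of \eqref{eq:general_mu_alternative} into a single self-normalized limit requires the most care.
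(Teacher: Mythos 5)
Your plan follows essentially the same route as the paper's proof: the auxiliary martingale-difference array $\xi_{T,t} = h_t(\hGamma_t - \psi(m))\big/\EE{\sum_s h_s^2(\hGamma_s-\psi(m))^2}^{1/2}$, a Lyapunov-type martingale CLT, the same conditional-variance identity (your formula for $v_t$ is exactly the paper's Lemma~\ref{lemm:score_variance}), the same use of the dichotomy \eqref{eq:general_mu_alternative} to stabilize the bias contribution, and consistency from the infinite-sampling condition. The one place where the proposal is genuinely incomplete is the step you yourself single out as hardest: showing that the empirical quadratic variation $\sum_t h_t^2(\hGamma_t-\psi(m))^2$ agrees \emph{in ratio} with the sum of conditional variances. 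You correctly note that the differences $h_t^2[(\hGamma_t-\psi(m))^2 - v_t]$ form a martingale-difference array, but you do not say how their fluctuations are controlled, and you attribute the resolution to the $L_p$ strength of Assumption~\ref{assu:general_variance_convergence}. The paper closes this step with no new estimate at all: it invokes the quadratic-variation convergence theorem (Hall and Heyde, Theorem~2.23, stated as Proposition~\ref{prop:convergence_quadratic_variation}), which says that under the conditional Lyapunov condition plus tightness of the normalized conditional variances, $\sup_t\abs{\sum_{s\le t}\xi_{T,s}^2 - \sum_{s\le t}\E_{s-1}\xi_{T,s}^2}\toprob 0$. Both hypotheses are already in hand once the CLT step is done --- the Lyapunov condition is the one you verify there, and tightness is immediate because the normalized conditional variance converges to one. (The $L_p$, $p>1$, strength of Assumption~\ref{assu:general_variance_convergence} is in fact used elsewhere: in a H\"older argument showing that the expectation of the remainder in the conditional-variance decomposition is $o$ of the expectation of the main term.)

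A second, more minor under-justification: when replacing $\psi(m)$ by $\hpsi_T$ inside $\hV_T$, you assert the cross term is negligible ``once $\hpsi_T-\psi(m)\toprob 0$.'' Consistency alone does not give this; one must show the factor multiplying $\hpsi_T-\psi(m)$, namely $\sum_t h_t^2(\hGamma_t-\tilde\psi_T)\big/\EE{\sum_t h_t^2(\hGamma_t-\psi(m))^2}$, is $O_p(1)$. The paper does this by factoring out $\max_t h_t\big/\EE{\sum_t h_t^2(\hGamma_t-\psi(m))^2}^{1/2}$ --- which vanishes by a ``negligible weights'' lemma derived from Assumption~\ref{assu:general_lyapunov} --- against the bounded-in-probability normalized martingale sum. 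Both omissions are fillable with standard tools, but as written the argument for asymptotic normality of the \emph{studentized} statistic is not complete.
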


\begin{theo}[Construction of variance-stabilizing weights]
\label{theo:general_variance_stabilizing_clt}
In the setting of Theorem \ref{theo:general_clt}, suppose that the sequence of Riesz representers $\gamma_t$ satisfy
\begin{equation}
  \label{eq:variance_growth_rate}
  \frac{\EE[t-1]{|\gamma_t|^{2+\delta}}}
       {\EE[t-1]{\gamma_t^2}^{2+\delta}} \leq C 
  \qquad\text{and}\qquad
    \EE[t-1]{\gamma_t^2} \leq C't^{\alpha} \,\quad \text{for all }t
\end{equation}
for some positive $\delta$, nonnegative $\alpha \in [0, \delta/(2 + \delta))$, and positive constants $C, C'$. Then the variance-stabilizing weights defined by the recursion
\begin{equation}
  \label{eq:gamma}
  h_t^2 \EE[t-1]{\gamma_t^2} = \p{1 - \sum_{s=1}^{t-1} h_s^2 \EE[s-1]{\gamma_s^2} } \lambda_t
\end{equation}
will satisfy the infinite sampling \eqref{eq:general_infinite_sampling}, variance convergence \eqref{eq:general_variance_convergence} and Lyapunov \eqref{eq:general_lyapunov} conditions if the allocation rate satisfies $\lambda_t < 1$ for $t < T$, $\lambda_T = 1$, and for some positive constant~$C''$,
\begin{align}
\label{eq:general_allocation_rate_bounds}
  \frac{1}{1 + T - t} 
  \leq \lambda_t \leq C'' \, \frac{\EE[t-1]{\gamma_t^2}^{-1}}{t^{-\alpha} + T^{1-\alpha} - t^{1-\alpha}}. 
\end{align}
\end{theo}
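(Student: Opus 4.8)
The plan is to verify, for the weights produced by the recursion \eqref{eq:gamma}, the three conditions of Theorem~\ref{theo:general_clt}: infinite sampling \eqref{eq:general_infinite_sampling}, variance convergence \eqref{eq:general_variance_convergence}, and the Lyapunov bound \eqref{eq:general_lyapunov}. Writing $S_t := h_t^2\,\EE[t-1]{\gamma_t^2}$ and $R_t := 1-\sum_{s=1}^t S_s$, the recursion reads $S_t = R_{t-1}\lambda_t$ with $R_0=1$, so $R_t = R_{t-1}(1-\lambda_t) = \prod_{s=1}^t(1-\lambda_s)$. Since $\lambda_T=1$ we get $R_T=0$, i.e. $\sum_{t=1}^T S_t = 1$ identically (almost surely). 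This single identity does most of the work: the numerator of \eqref{eq:general_variance_convergence} is exactly $1$, and because $h_t$ is $H^{t-1}$-measurable, iterated expectations give $\EE{\sum_t h_t^2\gamma_t^2} = \EE{\sum_t S_t} = 1$, so the denominators of all three conditions equal $1$. Variance convergence is then immediate (the ratio is the constant $1$), and it remains to show $\sum_t h_t \xrightarrow[T\to\infty]{p}\infty$ for \eqref{eq:general_infinite_sampling} and $\sum_t h_t^{2+\delta}\EE[t-1]{|\gamma_t|^{2+\delta}}\xrightarrow[T\to\infty]{p}0$ for \eqref{eq:general_lyapunov}.

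Both remaining claims rest on two almost-sure deterministic estimates coming from the allocation-rate bounds \eqref{eq:general_allocation_rate_bounds}. From the lower bound $\lambda_t\ge 1/(1+T-t)$ I get, by telescoping, $R_t\le\prod_{s=1}^t\frac{T-s}{1+T-s} = \frac{T-t}{T}$; this prevents $S$ from concentrating late. From the upper bound on $\lambda_t$, multiplying \eqref{eq:gamma} through by $\EE[t-1]{\gamma_t^2}$ yields, with $D_t := t^{-\alpha}+T^{1-\alpha}-t^{1-\alpha}>0$, the key inequality $S_t\,\EE[t-1]{\gamma_t^2} = R_{t-1}\lambda_t\,\EE[t-1]{\gamma_t^2}\le C''R_{t-1}/D_t\le C''/D_t$. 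These two facts turn the stochastic sums into deterministic ones.

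For infinite sampling I would write $h_t = S_t/\sqrt{S_t\,\EE[t-1]{\gamma_t^2}}$ and apply the key inequality to get $h_t\ge S_t\sqrt{D_t/C''}$, so $\sum_t h_t\ge (C'')^{-1/2}\sum_t S_t\sqrt{D_t}$. Restricting to $t\le T/2$, where $D_t\ge T^{1-\alpha}-t^{1-\alpha}\ge T^{1-\alpha}(1-2^{-(1-\alpha)})\gtrsim T^{1-\alpha}$, and using $\sum_{t\le T/2}S_t = 1-R_{\lfloor T/2\rfloor}\ge 1/2$ from the $R$-bound above, gives $\sum_t h_t\gtrsim T^{(1-\alpha)/2}\to\infty$ since $\alpha<1$. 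Hence the ratio in \eqref{eq:general_infinite_sampling} diverges almost surely, and so in probability.

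For the Lyapunov condition I would use the moment-growth hypothesis \eqref{eq:variance_growth_rate} to bound $\EE[t-1]{|\gamma_t|^{2+\delta}}\le C\,\EE[t-1]{\gamma_t^2}^{2+\delta}$, together with the identity $h_t^{2+\delta}\,\EE[t-1]{\gamma_t^2}^{2+\delta} = (S_t\,\EE[t-1]{\gamma_t^2})^{\beta}$ where $\beta:=1+\delta/2$, reducing the numerator to $C\sum_t(S_t\,\EE[t-1]{\gamma_t^2})^{\beta}$. The key inequality and $R_{t-1}\le (T-t+1)/T$ then give the fully deterministic bound $C\,(C'')^{\beta}T^{-\beta}\sum_{t=1}^T(T-t+1)^{\beta}D_t^{-\beta}$. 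The crux, and the step I expect to be the main obstacle, is showing this sum vanishes and does so \emph{exactly} when $\alpha<\delta/(2+\delta)$. Estimating it by an integral via $t=Tx$ (so $T-t+1\approx T(1-x)$ and $D_t\approx T^{1-\alpha}(1-x^{1-\alpha})$) gives order $T^{1-\beta(1-\alpha)}\int_0^1\big((1-x)/(1-x^{1-\alpha})\big)^{\beta}\,dx$, whose integrand is bounded (near $x=1$, $1-x^{1-\alpha}\sim(1-\alpha)(1-x)$, so the ratio stays finite). Thus the expression is $\asymp T^{1-\beta(1-\alpha)}$, which tends to $0$ iff $\beta(1-\alpha)>1$, i.e. iff $\alpha<\delta/(2+\delta)$. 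The delicate region is $t$ near $T$, where $D_t$ shrinks to $T^{-\alpha}$ and the $t^{-\alpha}$ term can no longer be dropped; there I would match the discrete tail to the integral directly via $t=T-k$, $D_{T-k}\approx T^{-\alpha}(1+(1-\alpha)k)$, confirming the same order. This final step makes transparent that the lower allocation bound supplies the $(T-t+1)^{\beta}$ numerator and the upper bound the $D_t^{-\beta}$ denominator, and that the two balance precisely at the stated threshold on $\alpha$.
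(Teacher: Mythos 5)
Your proof is correct, and two of its three parts (variance convergence via $\sum_t S_t=1$, and the Lyapunov bound via the representation $S_t=R_{t-1}\lambda_t$, the telescoped product bound $R_{t-1}\le (T-t+1)/T$, the upper allocation bound, and the observation that $(1-x)/(1-x^{1-\alpha})$ stays bounded near $x=1$) follow essentially the same route as the paper, which packages the boundedness claim as a monotonicity argument for an explicit function $f_\alpha(x,c)=(1-x+c)/(1-x^{1-\alpha}+cx^{-\alpha})$ with supremum $(1-\alpha)^{-1}$ rather than your integral/tail-matching heuristic; you may want to adopt that cleaner uniform bound so the sum is dominated termwise by $T^{-(1-\alpha)(1+\delta/2)}$ without any $\approx$ steps. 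Where you genuinely diverge is infinite sampling. The paper introduces $T_0:=\min\{t:\sum_{s\le t}\lambda_s>1/2\}$, lower-bounds $p_{T_0-1}\ge 1/2$ via a constrained-minimization/corner-solution argument, and then splits into the cases $T_0\ge T/2$ and $T_0\le T/2$, using the lower allocation bound together with $\EE[t-1]{\gamma_t^2}\le C't^\alpha$ in the first case and the upper allocation bound in the second. Your argument replaces all of this with the single pointwise inequality $h_t=S_t/\sqrt{S_t\,\EE[t-1]{\gamma_t^2}}\ge S_t\sqrt{D_t/C''}$ combined with $\sum_{t\le T/2}S_t=1-R_{\lfloor T/2\rfloor}\ge 1/2$, yielding $\sum_t h_t\gtrsim T^{(1-\alpha)/2}$ with no case analysis and, notably, without invoking the hypothesis $\EE[t-1]{\gamma_t^2}\le C't^\alpha$ for this step at all. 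That is a real simplification: it makes transparent that the lower allocation bound controls how much of the unit variance budget is spent in the first half of the experiment while the upper bound converts budget into weight mass, whereas the paper's case split obscures this division of labor.
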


\begin{theo}[CLT for adaptively-weighted unbiased score differences] \label{theo:general_contrast}
In the setting above, let $\psi_1, \psi_2$ be two linear functionals satisfying the continuity property described above for every $t$. Let $\gamma_{t, 1}, \gamma_{t, 2}$ be sequences of Riesz representers for $\psi_1$ and $\psi_2$, with 
  $\EE[t-1]{\gamma_{t, j}^2} > b > 0$ for every $t$ and $j$. For $j \in \{1,2\}$, let $\hm_{t, j}$ be a sequence of uniformly bounded estimators of $m$ such that $||\hm_{t, j} - m_{\infty, j}||_{L_\infty(P_{t-1})} \xrightarrow[t \to \infty]{a.s} 0$ for a fixed function $m_{\infty, j}$. In addition, let $\hGamma_{t, j}$ be a sequence of scores of the form
  \begin{align}
    \hGamma_{t, j} = \psi_j(\hat{m}_{t, j}) + \gamma_{t, j} (Y_t - \hm_{t, j}(W_t))
  \end{align}
  and let $h_{t, j}$ denote $H^{t-1}$-measurable non-negative weights. Suppose that each pair of sequences $h_{t, j}, \gamma_{t, j}$ satisfies Assumptions \ref{assu:general_infinite_sampling}, \ref{assu:general_variance_convergence} and \ref{assu:general_lyapunov}. And suppose that, in terms of these sequences and $\hV_{T,j}$ defined below, the following conditions are satisfied:
  \begin{align}
   \label{eq:general_variance_estimate_convergence}
   \hV_{T, 1} \,\big/\, \hV_{T, 2} 
   &\xrightarrow[T \to \infty]{p} r \in [0, \infty] 
   \\ 
  \label{eq:general_covariance_convergence}
   \frac{ \sum_{t=1}^T h_{t,1} h_{t, 2} \EE[t-1]{\gamma_{t,1} \gamma_{t,2}}}
        { \EE{ \sum_{t=1}^T h_{t,1}^2\gamma_{t, 1}^2 }^{1/2} \EE{ \sum_{t=1}^T h_{t,2}^2\gamma_{t, 2}^2 }^{1/2} }
   &\xrightarrow[T \to \infty]{p} 0
   \\ 
    \label{eq:general_contrast_alternative}
   \text{Either }\  
    ||\hm_{t, j} - m||_{L_\infty(P_{t-1})} &\xrightarrow[t \to \infty]{a.s} 0
   \quad \text{or} \quad
      \EE[t-1]{\gamma_{t, j}^2} \xrightarrow[t \to \infty]{a.s.} \infty 
   \qquad \text{for at least one }j.
 \end{align}
  Then the vector of studentized statistics defined by \eqref{eq:studentized_psihat} for each parameter is asymptotically jointly normal with identity covariance matrix.  Moreover, the estimator $\hpsi_{T, 1} - \hpsi_{T, 2}$ defined below is a consistent
  estimator of $\psi_1(m) - \psi_2(m)$, 
    \begin{align}
      \label{eq:generalized_delta}
      \hpsi_{T,1} - \hpsi_{T,2}
        := 
         \frac{ \sum_{t=1}^T h_{t, 1} \hGamma_{t, 1} }
              { \sum_{t=1}^T h_{t, 1} } 
              -
        \frac{ \sum_{t=1}^T h_{t, 2} \hGamma_{t, 2} }
             { \sum_{t=1}^T h_{t, 2} },
    \end{align}
    and the following studentized statistic is asymptotically standard normal:
    \begin{align}
      \label{eq:generalized_studentized_contrast}
      \frac{ \hpsi_{T, 1} - \hpsi_{T, 2} - (\psi_1(m) - \psi_2(m))}
            { \p{ \hV_{T, 1} + \hV_{T, 2} }^{1/2}  }
            \xrightarrow[T \to \infty]{d} \mathcal{N}(0, 1)
            \quad \text{where} \quad
            \hV_{T, j} := \frac{\sum_{t=1}^T h_{t, j}^2 \left( \hGamma_{t, j} - \hpsi_{T, j} \right)^2}{\left( \sum_{t=1}^T h_{t, j} \right)^2}.
    \end{align}
\end{theo}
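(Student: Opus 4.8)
The plan is to leverage the marginal result of Theorem~\ref{theo:general_clt} and upgrade it to a joint statement. By that theorem applied to each functional separately, the marginal studentized statistics $U_{T,j}:=(\hpsi_{T,j}-\psi_j(m))/\hV_{T,j}^{1/2}$ are each asymptotically $\mathcal{N}(0,1)$, and each $\hpsi_{T,j}$ is consistent for $\psi_j(m)$; consistency of $\hpsi_{T,1}-\hpsi_{T,2}$ for $\psi_1(m)-\psi_2(m)$ is then immediate. The real work is to show that the vector $(U_{T,1},U_{T,2})$ is asymptotically jointly normal with identity covariance, i.e.\ that the two statistics become independent in the limit. By the Cram\'er--Wold device it suffices to prove $a_1 U_{T,1}+a_2 U_{T,2}\todist\mathcal{N}(0,a_1^2+a_2^2)$ for every $(a_1,a_2)$.

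First I would pass from the random self-normalizers $\hV_{T,j}^{1/2}$ to deterministic scales. From the proof of Theorem~\ref{theo:general_clt} we have $\hV_{T,j}\p{\sum_t h_{t,j}}^2/s_{T,j}^2\toprob1$ for the deterministic variance $s_{T,j}^2$ against which each marginal martingale is normalized, so by Slutsky it is enough to establish the joint CLT for the martingale $M_T=\sum_{t=1}^T\xi_t$ with differences $\xi_t=a_1 h_{t,1}(\hGamma_{t,1}-\psi_1(m))/s_{T,1}+a_2 h_{t,2}(\hGamma_{t,2}-\psi_2(m))/s_{T,2}$. Each summand is a martingale difference because $\EE[t-1]{\hGamma_{t,j}}=\psi_j(m)$, which follows from the Riesz-representer identity \eqref{eq:riesz} together with the independence of the potential outcomes from the history and the $H^{t-1}$-measurability of $h_{t,j}$. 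I would then invoke the same martingale central limit theorem used for Theorem~\ref{theo:general_clt}, whose hypotheses are: (i) the predictable quadratic variation $\sum_t\EE[t-1]{\xi_t^2}$ converges in probability to $a_1^2+a_2^2$, and (ii) a conditional Lyapunov condition. Condition (ii) follows from Assumption~\ref{assu:general_lyapunov} for each functional after applying $|x+y|^{2+\delta}\le 2^{1+\delta}(|x|^{2+\delta}+|y|^{2+\delta})$.

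The crux is condition (i), and it is the main obstacle. Expanding $\sum_t\EE[t-1]{\xi_t^2}$ produces the two diagonal terms $a_j^2\sum_t h_{t,j}^2\EE[t-1]{(\hGamma_{t,j}-\psi_j(m))^2}/s_{T,j}^2$, which converge to $a_j^2$ exactly as in the marginal proof (via the variance-convergence Assumption~\ref{assu:general_variance_convergence}), together with the off-diagonal term $2a_1a_2\sum_t h_{t,1}h_{t,2}\EE[t-1]{(\hGamma_{t,1}-\psi_1(m))(\hGamma_{t,2}-\psi_2(m))}/(s_{T,1}s_{T,2})$, which must vanish; this vanishing is the source of the asymptotic independence. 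Writing $\hGamma_{t,j}-\psi_j(m)=\gamma_{t,j}(Y_t-m(W_t))+B_{t,j}$, where $B_{t,j}$ collects the history-centered estimation-error contribution of $\hm_{t,j}$, the mixed products $\EE[t-1]{\gamma_{t,1}(Y_t-m(W_t))B_{t,2}}$ vanish by the tower property, since $Y_t-m(W_t)$ has conditional mean zero given $W_t$ and the history. What remains is a ``noise'' part $\EE[t-1]{\gamma_{t,1}\gamma_{t,2}(Y_t-m(W_t))^2}$, which I would bound using the uniform bound on $\var(Y_t(w))$ and control through the representer cross-moment condition \eqref{eq:general_covariance_convergence}, and an ``estimation-error'' part $\EE[t-1]{B_{t,1}B_{t,2}}$ governed by the product of the two functionals' estimation errors. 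Condition \eqref{eq:general_contrast_alternative} is exactly what renders the latter negligible: for at least one $j$, either $\hm_{t,j}$ is consistent, so its bias is zero, or $\EE[t-1]{\gamma_{t,j}^2}\to\infty$, so (that functional being sampled ever more rarely) its weighted contribution to the cross-sum is swamped by its own variance scale. As the cross-term is a product, controlling one factor suffices, which is why the condition is imposed for only one $j$.

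Having established $(U_{T,1},U_{T,2})\todist\mathcal{N}(0,I_2)$, I would assemble the contrast statistic. Write
\[
\frac{(\hpsi_{T,1}-\hpsi_{T,2})-(\psi_1(m)-\psi_2(m))}{\p{\hV_{T,1}+\hV_{T,2}}^{1/2}}=\sqrt{\rho_{T,1}}\,U_{T,1}-\sqrt{\rho_{T,2}}\,U_{T,2},\qquad \rho_{T,j}:=\frac{\hV_{T,j}}{\hV_{T,1}+\hV_{T,2}}.
\]
The variance-ratio condition \eqref{eq:general_variance_estimate_convergence}, $\hV_{T,1}/\hV_{T,2}\toprob r\in[0,\infty]$, gives $\rho_{T,1}\toprob r/(1+r)=:\rho$ and $\rho_{T,2}\toprob 1-\rho$. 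By Slutsky and the joint limit, the right-hand side converges to $\sqrt{\rho}\,N_1-\sqrt{1-\rho}\,N_2$ with $N_1,N_2$ independent standard normals, which has variance $\rho+(1-\rho)=1$ and is therefore $\mathcal{N}(0,1)$. The boundary cases $r\in\{0,\infty\}$ are handled by noting that each $U_{T,j}$ is tight, so a vanishing coefficient annihilates the corresponding term. Every step other than the off-diagonal covariance computation of the third paragraph is a routine application of the martingale CLT, Cram\'er--Wold, or Slutsky.
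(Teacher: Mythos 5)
Your proposal is correct and follows essentially the same route as the paper: Cram\'er--Wold on the deterministically normalized martingales, diagonal terms handled by the marginal argument, the cross term split into a noise piece killed by \eqref{eq:general_covariance_convergence} and an estimation-error piece whose $\psi_1(\hm_{t,1}-m)\psi_2(\hm_{t,2}-m)$ part is killed via Cauchy--Schwarz using \eqref{eq:general_contrast_alternative} for one $j$, then Slutsky with the variance-ratio condition to assemble the contrast (the paper phrases this last step through a Skorokhod representation, but your direct Slutsky argument is equivalent). The only bookkeeping slip is that $\EE[t-1]{B_{t,1}B_{t,2}}$ also contains the term $\EE[t-1]{\gamma_{t,1}\gamma_{t,2}(\hm_{t,1}-m)(\hm_{t,2}-m)}$, which is controlled by \eqref{eq:general_covariance_convergence} and boundedness of $\hm_{t,j}-m$ rather than by \eqref{eq:general_contrast_alternative}.
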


\subsection{Specializing general results} 
\label{sec:cases-of-general}

Let's see how to apply the theorems above to the estimands discussed in the main section.

\paragraph{Arm-specific value} For the arm-value estimand considered in Theorem~\ref{theo:arm_value_clt} the functional is $\psi(f) = f(w)$, which evaluates the conditional mean function of the outcome at a specific treatment level, i.e., $\psi(m) = m(w)$. Conditional on the history up to period $t-1$, the Riesz representer of this functional is $\gamma_t = \ind{\cdot = w}/e_t(w)$. To confirm this, note that for any square-integrable function $f$,
$$\EE[t-1]{\gamma_t(W_{t})f(W_{t})} = \EE[t-1]{ \frac{\ind{W_t = w}}{e_t(w)} f(w)} = \frac{f(w)}{e_t(w)}\EE[t-1]{\ind{W_t =w}} = f(w).$$ 
Plugging this functional and Riesz representer into the general scoring rule \eqref{eq:general_scoring_rule}, we get $\hGamma_t^{AIPW}$ that was defined in \eqref{eq:aipw}. 

Theorem \ref{theo:arm_value_clt} in the main section established sufficient conditions for consistency and asymptotic normality of the adaptively weighted estimator $\hQ_T^h$ that uses this scoring rule. This result is a special case of Theorem \ref{theo:general_clt}, since conditions \eqref{eq:general_infinite_sampling}, \eqref{eq:general_variance_convergence}, \eqref{eq:general_lyapunov}, \eqref{eq:general_lyapunov} specialize to conditions \eqref{eq:infinite_sampling}, \eqref{eq:variance_convergence}, \eqref{eq:lyapunov} and \eqref{eq:e_mu_alternative} by substituting $\EE[t-1]{\gamma_t^2} = 1/e_t(w)$ and $\bar{\gamma}_\infty^2 = 1/e_{\infty}$.

Theorem \ref{theo:variance_stabilizing_clt} in the main section defined a class of variance-stabilizing schemes $h_t$ that satisfy conditions for consistency and asymptotic normality of $\hQ_T^h$. This result was a special version of Theorem \ref{theo:general_variance_stabilizing_clt}. To see why, note that if we specialize $\EE[t-1]{\gamma_t^2} = 1/e_{t}(w)$ the definition of a general variance-stabilizing weight \eqref{eq:gamma} reduces to variance-stabilizing weights for the arm value \eqref{eq:variance_stabilizing}. Moreover, substituting $\EE[t-1]{|\gamma_t|^{2+\delta}} = 1/e_t^{1+\delta}(w)$, conditions \eqref{eq:prop_LB} and \eqref{eq:allocation_rate_bounds} correspond to 
the first half of \eqref{eq:variance_growth_rate} and condition \eqref{eq:general_allocation_rate_bounds}. 
The sole remaining assumption of the general theorem is the second part of condition \eqref{eq:variance_growth_rate},
which is satisfied for arbitrary $\delta > 1$:
  \begin{align*}
    \frac{\EE[t-1]{|\gamma_t|^{2+\delta}}}{\EE[t-1]{\gamma_t^2}^{2+\delta}} 
      = \frac{1/e_t^{1 + \delta}}{1/e_{t}^{2+\delta}} \leq 1.
  \end{align*}

\paragraph{Treatment effects} As we mentioned in Section \ref{sec:contrasts}, a straightforward approach for constructing an estimator of the average difference in values $\psi(m) := m(w_1) - m(w_2)$ is to use the score $\smash{\hGamma_{t}^{DIFF} := \hGamma_t(w_1) - \hGamma_t(w_2)}$. Naturally, since each scoring rule $\hGamma_t(w_j)$ is unbiased for its target arm value, their difference is unbiased for the difference in arm values. Theorem \ref{theo:general_clt} covers the conditions for consistency and asymptotic normality of this estimator. In this case, the Riesz representer of the functional is $\gamma_t = \ind{\cdot = w_1}/e_t(w_1) - \ind{\cdot = w_2}/e_t(w_2)$, and we note that $\EE[t-1]{\gamma_t^2} = 1/e_t(w_1) + 1/e_t(w_2)$.

We also proposed an alternative estimator \eqref{eq:delta} in which each unbiased scoring rule received its own sequence of evaluation weights. Asymptotic normality of this estimator was established in Theorem \ref{theo:contrast_clt}, which is a special instance of Theorem \ref{theo:general_contrast}. In this case, the functionals are $\psi_{1}(m) = \EE{Y_t(w_1)}$ and $\psi_{2}(m) = \EE{Y_t(w_2)}$, and their Riesz representers are $\gamma_{t,1} = \ind{\cdot = w_1}/e_t(w_1)$ and $\gamma_{t,2} = \ind{\cdot = w_2}/e_t(w_2)$. Substituting these specializations, conditions
\eqref{eq:general_variance_estimate_convergence} and \eqref{eq:general_contrast_alternative} in Theorem \ref{theo:general_contrast} simplify to their counterparts \eqref{eq:evaluation_weight_convergence} and \eqref{eq:e_mu_alternative_constrast} in Theorem \ref{theo:contrast_clt}. And condition \eqref{eq:general_covariance_convergence} is always satisfied:
$$\gamma_{t,1}\gamma_{t,2} = \frac{\ind{W_t = w_1}}{e_t(w_1)}\frac{\ind{W_t = w_2}}{e_t(w_2)} = 0.$$

\paragraph{Average Derivative} The average derivative example in footnote \ref{foot:average_derivative} is also covered by Theorem \ref{theo:general_clt}. 
In this case the linear functional is $\psi(m) = \int m'(w)f(w)dw$. To confirm the Riesz representer $\gamma_t(w) = -f'(w)/f_t(w)$, note that for any differentiable function $g$, 
\begin{equation}
\begin{aligned}
    \EE[t-1]{\gamma_t(W_t)g(W_t)} 
    &= \int \gamma_t(w)g(w) f_t(w) dw 
    &= -\int \frac{f'(w)}{f_t(w)} g(w) f_t(w) dw 
    &= \int g'(w) f(w) dw,
\end{aligned}
\end{equation}
where the last equality holds via by integration by parts assuming that the treatment assignment distribution $f(w)$ is zero at $\pm\infty$.

\section{Proofs}
\label{sec:proofs}

%
%
\subsection{Proof of Theorem \ref{theo:general_clt}}

\paragraph{Overview} We will follow a familiar proof technique that involves three steps. We will first prove a CLT for the following \emph{auxiliary martingale difference sequence} 
  \begin{align}
    \label{eq:auxiliary_mds}
    \xi_{T,t} := \frac{ h_t(\hGamma_t- \psi(m)) }{ \left( \sum_{t=1}^T \EE{ h_t^2(\hGamma_t- \psi(m))^2 } \right)^{\frac{1}{2}} }
  \end{align}

by proving that it satisfies the conditions in Proposition \ref{prop:martingale_clt} below. Next, we will show consistency of our estimator $\hpsi_T$ for the true value $\psi(m)$. Finally, we will show asymptotic normality of the studentized version of $\hpsi_T$.

\begin{prop}[Martingale CLT]
  \label{prop:martingale_clt}
  \cite[e.g.,][]{helland1982central}
  Let $\{\xi_{T,t}, \ff_{T,t} \}_{t=1}^T$ be a square-integrable martingale difference sequence triangular array. Then the sum $\sum_{t=1}^T \xi_{T,t}$ will be asymptotically normally distributed, $\sum_{t=1}^T \xi_{T,t} \xrightarrow[]{d} \nn\p{0, \, 1}$ if the following conditions hold.\footnote{The relevant reference within \cite{helland1982central} is Theorem 2.5(a), with condition conditional Lindeberg condition (2.5) replaced with the stronger Lyapunov condition (2.9).}
  \begin{align}
      \label{eq:theorem_lyapunov}
      \sum_{t=1}^T \EE[t-1]{ |\xi_{T,t}|^{2 + \delta} } &\xrightarrow[T \to \infty]{p} 0 \quad \text{for some $\delta > 0$} \qquad &&\text{[Conditional Lyapunov condition]}  \\
      \label{eq:theorem_variance_convergence}
      \sum_{t=1}^T \EE[t-1]{ \xi_{T,t}^2 } &\xrightarrow[T \to \infty]{p} 1 \qquad &&\text{[Variance convergence]}
  \end{align}
\end{prop}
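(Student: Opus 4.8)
The plan is to prove convergence of characteristic functions and invoke L\'evy's continuity theorem: writing $S_T = \sum_{t=1}^T \xi_{T,t}$, it suffices to show $\E[e^{i\lambda S_T}] \to e^{-\lambda^2/2}$ for every fixed $\lambda \in \mathbb{R}$. The quickest route is to observe that the stated result is essentially the cited Helland Theorem 2.5(a): the conditional Lyapunov condition \eqref{eq:theorem_lyapunov} implies the conditional Lindeberg condition used there, since for any $\epsilon > 0$ one has $\xi_{T,t}^2 \ind{|\xi_{T,t}| > \epsilon} \le \epsilon^{-\delta}|\xi_{T,t}|^{2+\delta}$, so that $\sum_{t=1}^T \E[\xi_{T,t}^2\ind{|\xi_{T,t}|>\epsilon}\mid\ff_{T,t-1}] \le \epsilon^{-\delta}\sum_{t=1}^T\E[|\xi_{T,t}|^{2+\delta}\mid\ff_{T,t-1}]\toprob 0$; the same bound also yields negligibility of the maximal summand. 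Paired with the variance-convergence condition \eqref{eq:theorem_variance_convergence}, these are exactly the hypotheses of the cited theorem.

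For a self-contained argument I would use McLeish's product device. Fix $\lambda$ and set $T_T := \prod_{t=1}^T (1 + i\lambda\xi_{T,t})$. Conditioning on $\ff_{T,t-1}$ and telescoping, the martingale-difference property $\E[\xi_{T,t}\mid\ff_{T,t-1}]=0$ gives $\E[T_T]=1$ exactly. Using the elementary identity $e^{i\lambda x} = (1+i\lambda x)\exp\!\big(-\tfrac{\lambda^2 x^2}{2}+r(x)\big)$ with $|r(x)| \le C|\lambda x|^3$ whenever $|\lambda x|\le 1/2$, we obtain, on the event where $\max_t|\lambda\xi_{T,t}|\le 1/2$,
\[
  e^{i\lambda S_T} = T_T\,\exp\!\Big(-\tfrac{\lambda^2}{2}\sum_{t=1}^T \xi_{T,t}^2 + R_T\Big), \qquad R_T = \sum_{t=1}^T r(\xi_{T,t}).
\]
It then remains to verify three limits: (i) $\sum_{t=1}^T \xi_{T,t}^2 \toprob 1$, which follows by combining \eqref{eq:theorem_variance_convergence} with $\sum_{t=1}^T (\xi_{T,t}^2 - \E[\xi_{T,t}^2\mid\ff_{T,t-1}]) \toprob 0$, the latter a standard consequence of the conditional Lindeberg condition; (ii) $\max_t|\xi_{T,t}|\toprob 0$, from the Lyapunov bound as above; and (iii) $R_T \toprob 0$, since $|R_T| \le C\lambda^3 (\max_t|\xi_{T,t}|)\sum_{t=1}^T \xi_{T,t}^2 \toprob 0$ by (i)--(ii). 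Together these give $e^{i\lambda S_T} - e^{-\lambda^2/2} T_T \toprob 0$.

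The main obstacle is upgrading this in-probability approximation to convergence of expectations, i.e.\ justifying $\E[e^{i\lambda S_T}] = e^{-\lambda^2/2}\E[T_T] + o(1) \to e^{-\lambda^2/2}$. Since $e^{i\lambda S_T}$ is bounded, the difficulty is concentrated in controlling the complex product $T_T$, and what is needed is its uniform integrability (equivalently, $L_1$-boundedness); note $|T_T|^2 = \prod_{t=1}^T (1+\lambda^2\xi_{T,t}^2) \le \exp(\lambda^2\sum_t \xi_{T,t}^2)$. The clean way to secure this is a localization argument: introduce the stopping time $\tau_T := \min\{k : \sum_{t=1}^{k}\E[\xi_{T,t}^2\mid\ff_{T,t-1}] > 2\}$, run the product only up to $\tau_T$, and note that $\{\tau_T > T\}$ has probability tending to one by \eqref{eq:theorem_variance_convergence}. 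On the stopped array the predictable quadratic variation---and hence $|T_T|$---is bounded in $L_1$, so dominated convergence applies, while the stopped and unstopped statistics agree with probability tending to one. A parallel truncation tied to the conditional Lyapunov bound handles the transfer from the predictable quadratic variation to $\sum_t\xi_{T,t}^2$ used in step (i). Assembling these pieces yields $\E[e^{i\lambda S_T}]\to e^{-\lambda^2/2}$ and the claim follows.
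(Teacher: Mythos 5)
Your first paragraph is exactly what the paper does: the proposition is not proved in the paper at all, but cited to Helland's Theorem 2.5(a), with a footnote noting that the conditional Lindeberg condition there is replaced by the stronger Lyapunov condition --- and your bound $\xi_{T,t}^2\,\mathbb{I}\{|\xi_{T,t}|>\epsilon\}\le \epsilon^{-\delta}|\xi_{T,t}|^{2+\delta}$ is precisely the (elementary, unstated) reason that substitution is legitimate. So on the comparison asked for: correct, and essentially the same approach.

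The additional self-contained sketch via McLeish's product device is sound and is, in effect, the standard proof of the cited theorem rather than a competing route. You correctly identify the one genuinely delicate point --- passing from $e^{i\lambda S_T}-e^{-\lambda^2/2}T_T\toprob 0$ to convergence of expectations, which requires uniform integrability of the complex product --- and the localization by stopping when the predictable quadratic variation exceeds a threshold is the right fix. Two details would need care in a full write-up: (i) stopping on the \emph{predictable} quadratic variation does not by itself bound $\sum_t\xi_{T,t}^2$ (which is what controls $|T_T|$), so one also needs the truncation of individual increments you allude to; and (ii) the step $\max_t|\xi_{T,t}|\toprob 0$ requires transferring the Lyapunov bound from conditional to unconditional sums (e.g.\ via Lenglart's inequality), which your sketch glosses over but acknowledges. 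Neither is a gap in the sense of a wrong idea; both are handled in the reference the paper leans on.
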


\begin{prop}[Convergence of quadratic variation]
  \label{prop:convergence_quadratic_variation}
  \cite[e.g][Theorem 2.23]{hall2014martingale}
  Let $\{\xi_{T,t}, \ff_{T,t} \}_{t=1}^T$ be a square-integrable martingale difference sequence triangular array. Define
  \begin{align}
      V_{T,t}^2 := \sum_{s=1}^t \EE[s-1]{\xi_{T,s}^2} \qquad
      U_{T,t}^2 := \sum_{s=1}^t \xi_{T,s}^2.
  \end{align}
  Suppose that the conditional Lyapunov condition \eqref{eq:theorem_lyapunov} holds, and also that the conditional variances are tight, that is $\sup_T \PP{ V_{T,T}^2 > \lambda } \to 0$ \text{as} $\lambda \to \infty$.
  Then
  \begin{align}
      \sup_t \abs{ V_{T,t}^2 - U_{T,t}^2 } \xrightarrow[T \to \infty]{p} 0.
  \end{align}
\end{prop}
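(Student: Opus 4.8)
The plan is to exploit that the object of interest is itself a martingale and to control it by localization plus truncation. Define the difference process
$$D_{T,t} := U_{T,t}^2 - V_{T,t}^2 = \sum_{s=1}^t \left( \xi_{T,s}^2 - \EE[s-1]{\xi_{T,s}^2} \right).$$
Each summand has conditional mean zero, so $\{D_{T,t}, \ff_{T,t}\}_{t=1}^T$ is an $L_1$ martingale for each fixed $T$, and the goal becomes $\sup_{t\le T}|D_{T,t}|\toprob 0$ (the sign in $\sup_t|V_{T,t}^2-U_{T,t}^2|$ being immaterial). First I would record two consequences of the conditional Lyapunov condition \eqref{eq:theorem_lyapunov}: for every fixed $\eta>0$ the conditional Lindeberg sum satisfies $L_T(\eta):=\sum_{s=1}^T\EE[s-1]{\xi_{T,s}^2\ind{|\xi_{T,s}|>\eta}}\le\eta^{-\delta}\sum_{s=1}^T\EE[s-1]{|\xi_{T,s}|^{2+\delta}}\toprob 0$, and, via conditional Jensen, $\max_{s\le T}\EE[s-1]{\xi_{T,s}^2}\toprob 0$.

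The main device is localization. Given a target accuracy, I would use the tightness hypothesis to choose $\lambda$ making $\sup_T\PP{V_{T,T}^2>\lambda}$ arbitrarily small, and define the predictable stopping time $\tau:=\inf\{t: V_{T,t}^2>\lambda\}\wedge T$, which is a stopping time because $V_{T,t}^2$ is $\ff_{T,t-1}$-measurable. On $\{V_{T,T}^2\le\lambda\}$ the stopped process $D_{T,t\wedge\tau}$ coincides with $D_{T,t}$, so it suffices to control the stopped process, and by construction $V_{T,\tau-1}^2\le\lambda$. Next, for a truncation level $\eta$ I would split each increment through $\bar\xi_{T,s}:=\xi_{T,s}\ind{|\xi_{T,s}|\le\eta}$, writing $U_{T,t}^2-V_{T,t}^2=\bar D_{T,t}+(R_{T,t}-\tilde R_{T,t})$, where $\bar D_{T,t}:=\sum_{s\le t}(\bar\xi_{T,s}^2-\EE[s-1]{\bar\xi_{T,s}^2})$ is a centered martingale with increments bounded by $\eta^2$, the process $R_{T,t}:=\sum_{s\le t}\xi_{T,s}^2\ind{|\xi_{T,s}|>\eta}$ is increasing, and $\tilde R_{T,t}$ is its predictable compensator, with $\tilde R_{T,T}=L_T(\eta)$.

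For the truncated centered piece I would apply Doob's $L_2$ maximal inequality to the stopped martingale. Its predictable bracket is deterministically controlled, $\sum_{s\le\tau}\EE[s-1]{\bar\xi_{T,s}^2}\le V_{T,\tau-1}^2+\eta^2\le\lambda+\eta^2$, and its increments are bounded by $\eta^2$, so $\EE{\sup_t \bar D_{T,t\wedge\tau}^2}\le 4\eta^2(\lambda+\eta^2)$; a Markov bound makes this piece negligible once $\eta$ is small relative to $\lambda$. For the tail, $\sup_t|R_{T,t}-\tilde R_{T,t}|\le R_{T,T}+\tilde R_{T,T}$, and $\tilde R_{T,T}=L_T(\eta)\toprob 0$ directly; for the increasing part $R$ I would invoke Lenglart's domination inequality for an adapted increasing process and its compensator, giving $\PP{\sup_t R_{T,t}\ge\epsilon}\le \eta'/\epsilon+\PP{L_T(\eta)\ge\eta'}$. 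Assembling the pieces in the right order — choose $\lambda$ from tightness, then $\eta$ to beat $\lambda$ in the Doob bound, then $\eta'$ and let $T\to\infty$ for the Lindeberg terms — yields $\sup_t|D_{T,t}|\toprob 0$.

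The main obstacle is the absence of any uniform integrability or higher-moment control: we only know $V_{T,T}^2$ is tight (bounded in probability), not bounded in $L_1$, and we have no fourth moments for the $\xi_{T,s}$, so a direct $L_2$ maximal inequality applied to $D_{T,t}$ itself is unavailable. The resolution is precisely the combination above — localizing at the predictable stopping time to convert tightness into a \emph{deterministic} bracket bound for the \emph{truncated} martingale, and handling the tail (which cannot be controlled in $L_1$) through Lenglart's inequality, which tolerates a compensator that vanishes only in probability. Getting the order of the quantifiers $\lambda,\eta,\eta',T$ right is where the argument must be set up with care.
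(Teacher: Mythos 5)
Your proof is correct, and the paper itself gives no proof of this proposition—it is quoted directly from Hall and Heyde (Theorem 2.23), whose argument has the same architecture as yours: truncate the increments, localize at the first time the predictable variance process exceeds a level $\lambda$ chosen via tightness, control the truncated centered part with Doob's $L_2$ maximal inequality (using that the stopped bracket is deterministically bounded by $\lambda + \eta^2$), and dispose of the tail terms via the conditional Lindeberg sums, which your Lyapunov-to-Lindeberg reduction handles correctly. In particular you got the one genuinely delicate point right—bounding $\sum_{s \le \tau} \EE[s-1]{\bar\xi_{T,s}^2}$ by $V_{T,\tau-1}^2 + \eta^2$ rather than by $V_{T,\tau}^2$, since the last untruncated conditional variance increment at $\tau$ is not controlled—so there is nothing to fix.
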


Our goal is to prove that the auxiliary martingale difference sequence defined in \eqref{eq:auxiliary_mds} satisfies the two martingale central limit conditions stated in Proposition \ref{prop:martingale_clt}. We begin by proving a few lemmas.

\begin{lemm}[Convergence of weighted arrays]
  \label{lemm:weighted_average}
  Let $a_{T,t}$ be a triangular sequence of nonnegative weight vectors satisfying $\plim_{T \to \infty} \max_{1 \leq t \leq T} a_{T,t} = 0$ and $\plim_{T \to \infty} \sum_{t=1}^T a_{T,t} \leq C$ for some constant $C$. Also, let $x_t$ be a sequence of random variables satisfying $x_t \xrightarrow[t \to \infty]{a.s} 0$ bounded by $B$. Then $\sum_{t=1}^T a_{T,t}x_{t} \xrightarrow[T \to \infty]{p} 0$.
\end{lemm}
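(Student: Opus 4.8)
The plan is to prove a probabilistic analogue of the classical Toeplitz lemma by truncating the sum at a \emph{deterministic} index and controlling the head and tail separately. Fix $\epsilon' > 0$ and $\eta' > 0$; the goal is to show $\PP{\abs{\sum_{t=1}^T a_{T,t} x_t} > \epsilon'} < \eta'$ for all sufficiently large $T$, which is exactly convergence in probability to $0$.

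First I would exploit the almost-sure convergence $x_t \to 0$. Since this implies $\sup_{t > n}\abs{x_t} \to 0$ almost surely as $n \to \infty$, and hence in probability, I can choose a deterministic index $n_0$ such that $\PP{\sup_{t > n_0} \abs{x_t} > \epsilon} < \eta'/3$, where I calibrate $\epsilon := \epsilon'/(2(C+1))$. I then split the sum as $\sum_{t=1}^{n_0} a_{T,t} x_t + \sum_{t=n_0+1}^T a_{T,t} x_t =: A_T + B_T$ and bound the two pieces using the two distinct hypotheses on the weights.

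For the head $A_T$, I would use the vanishing-maximum condition: $\abs{A_T} \le B \sum_{t=1}^{n_0} a_{T,t} \le B\, n_0 \max_{1 \le t \le T} a_{T,t}$. Because $n_0$ and $B$ are fixed while $\max_{t} a_{T,t} \toprob 0$, this term vanishes in probability, so $\PP{\abs{A_T} > \epsilon'/2} < \eta'/3$ for large $T$. For the tail $B_T$, I would use the bounded-mass condition: $\abs{B_T} \le \p{\sup_{t > n_0}\abs{x_t}} \sum_{t=1}^T a_{T,t}$. On the intersection of the event $\{\sup_{t > n_0}\abs{x_t} \le \epsilon\}$ with $\{\sum_{t=1}^T a_{T,t} \le C+1\}$---the latter having probability at least $1 - \eta'/3$ for large $T$ since $\plim_{T \to \infty}\sum_{t=1}^T a_{T,t} \le C$---we have $\abs{B_T} \le \epsilon(C+1) = \epsilon'/2$. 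A union bound over the three exceptional events then yields $\PP{\abs{A_T + B_T} > \epsilon'} < \eta'$, completing the argument.

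The main subtlety, and the step I would be most careful about, is the bookkeeping between the two modes of stochastic convergence: the weights $a_{T,t}$ are random and controlled only in probability, whereas the $x_t$ converge almost surely. This is precisely why the truncation index $n_0$ must be chosen deterministically---via the tail-supremum control on $\{x_t\}$---rather than as a random stopping time depending on the realized path; otherwise the head bound $B n_0 \max_t a_{T,t}$ could not be handled with a fixed $n_0$. Once the head/tail split is in place and $\epsilon$ is calibrated against $C$ so the tail contributes at most $\epsilon'/2$, the remainder is routine.
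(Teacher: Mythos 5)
Your proof is correct and follows essentially the same route as the paper's: split the sum at a deterministic truncation index chosen via the almost-sure tail control on $x_t$, bound the head by $B\,n_0\max_t a_{T,t}\toprob 0$, and bound the tail by $\epsilon$ times the total weight. If anything, you are slightly more careful than the paper, which after normalizing $C=1$ treats the bound $\sum_t a_{T,t}\le 1$ as deterministic rather than intersecting with a high-probability event as you do; this is a minor tightening, not a different argument.
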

\begin{proof}[Proof (Lemma \ref{lemm:weighted_average})]
  Without loss of generality we will prove this claim for $C = 1$. Fix some positive $\epsilon$ and $\delta$. Almost-sure convergence of $x_t$ to zero implies that we can find a large enough $t_{\epsilon}$ such that $|x_t| < \epsilon$ with probability at least $1 - \delta$. Let $A(\epsilon)$ denote the event in which this happens. Under that event,
  \begin{equation}
    \begin{aligned}
      \sum_{t=1}^T a_{T,t}x_t 
        &\leq B \sum_{t \leq t_\epsilon}a_{T, t} + \sum_{t > t_{\epsilon}} a_{T, t}\epsilon \\
        &\leq Bt_{\epsilon} \max_{1 \leq t \leq T} a_{T, t} + \epsilon.
    \end{aligned}    
  \end{equation}
  Therefore,
  \begin{equation}  
      \begin{aligned}
         \PP{ \sum_{t=1}^T a_{T,t} x_t > 2\epsilon}
         &=  \PP{ \left\{ \sum_{t=1}^T a_{T,t} x_t > 2\epsilon \right\} \cap A(\epsilon) }
          + \PP{ \left\{ \sum_{t=1}^T a_{T,t} x_t > 2\epsilon \right\} \cap A^{c}(\epsilon) } \\
         &\leq \PP{ Bt_{\epsilon} \max_{1 \leq t \leq T} a_{T, t} + \epsilon > 2\epsilon } 
              + \PP{ A^{c}(\epsilon)} \\
         &\leq \PP{ B t_{\epsilon} \max_{1 \leq t \leq T} a_{T, t} > \epsilon } + \delta 
      \end{aligned}
  \end{equation}
  
  Since by assumption $\max_{1 \leq t \leq T} a_{T,t} \toprob 0$ as $T \to \infty$ and $\delta$ is arbitrarily small, the claim follows.
\end{proof}

\begin{lemm}[Negligible weights] In the setting of Theorem \eqref{theo:general_clt}, if the Lyapunov \eqref{eq:general_lyapunov} assumption is satisfied,
  \label{lemm:negligible_weights}
  \begin{equation}
    \label{eq:negligible_weights}
    \frac{\max_{1 \leq t \leq T} h_t^2 \EE[t-1]{\gamma_t^2} }
         {\EE{\sum_{t=1}^T h_t^2 \gamma_t^2 } } \xrightarrow[T \to \infty]{p} 0
  \end{equation}
\end{lemm}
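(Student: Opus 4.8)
The plan is to control the maximum by the numerator of the Lyapunov ratio, turning a statement about $\max_t$ into a statement about $\sum_t$. Throughout, write $q := 1 + \delta/2 \ge 1$ and set $b_t := h_t^2 \EE[t-1]{\gamma_t^2}$, so the quantity of interest is $\max_{1 \le t \le T} b_t$ divided by the deterministic normalizer $\EE{\sum_{t=1}^T h_t^2 \gamma_t^2}$. The first step is the elementary bound $\max_{1 \le t \le T} b_t \le \p{\sum_{t=1}^T b_t^{q}}^{1/q}$, which holds because the $b_t$ are nonnegative and $q \ge 1$ (the maximum term is dominated by the $\ell^q$ norm of the sequence). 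This converts the max into a sum to which the Lyapunov assumption can be applied.

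The second step bounds $b_t^q$ by the conditional absolute moment that appears in Assumption \ref{assu:general_lyapunov}. Since $h_t$ is $H^{t-1}$-measurable we have $(h_t^2)^q = h_t^{2+\delta}$, and since $x \mapsto x^{q}$ is convex on $[0,\infty)$, conditional Jensen gives $\EE[t-1]{\gamma_t^2}^{q} \le \EE[t-1]{(\gamma_t^2)^{q}} = \EE[t-1]{|\gamma_t|^{2+\delta}}$. Combining these yields the pointwise inequality $b_t^{q} \le h_t^{2+\delta}\,\EE[t-1]{|\gamma_t|^{2+\delta}}$, so that $\sum_{t=1}^T b_t^q$ is dominated by exactly the numerator of \eqref{eq:general_lyapunov}.

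The third step assembles the pieces and matches exponents. Dividing the max bound by the normalizer and pulling the exponent $1/q$ outside gives
\begin{equation*}
\frac{\max_{1 \le t \le T} b_t}{\EE{\sum_{t=1}^T h_t^2 \gamma_t^2}}
\;\le\;
\p{ \frac{\sum_{t=1}^T h_t^{2+\delta}\,\EE[t-1]{|\gamma_t|^{2+\delta}}}{\EE{\sum_{t=1}^T h_t^2 \gamma_t^2}^{\,q}} }^{1/q},
\end{equation*}
where I have used that the power $q = 1+\delta/2$ on the denominator is precisely what is needed for the rearrangement $A^{1/q}/B = (A/B^{q})^{1/q}$. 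The bracketed quantity is exactly the Lyapunov ratio \eqref{eq:general_lyapunov}, which converges to $0$ in probability by Assumption \ref{assu:general_lyapunov}. Since $x \mapsto x^{1/q}$ is continuous with $0 \mapsto 0$, the continuous mapping theorem then gives the claimed convergence $\toprob 0$.

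There is essentially no analytical obstacle here; the only thing requiring care is the exponent bookkeeping, namely verifying that the $(1+\delta/2)$ power on the denominator of the Lyapunov condition lines up exactly with the $1/q$ extracted from the max-to-sum bound, and that the conditional Jensen step correctly accounts for the $H^{t-1}$-measurability of $h_t$. The conceptual content is simply that the Lyapunov condition already forces each individual conditional-variance contribution to be asymptotically negligible relative to the total, which is precisely what \eqref{eq:negligible_weights} asserts.
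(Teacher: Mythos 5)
Your proof is correct and is essentially the paper's own argument: the paper raises the ratio to the power $1+\delta/2$, bounds the max by the sum and applies conditional Jensen to reach the Lyapunov ratio, which is exactly your computation with the $1/q$-th root taken at the end instead. The exponent bookkeeping and the use of $H^{t-1}$-measurability of $h_t$ both check out.
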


\begin{proof}[Proof (Lemma \ref{lemm:negligible_weights})] We will show that the $2+\delta$ absolute power of \eqref{eq:negligible_weights} converges to zero. 
  \begin{equation}
      \abs{
      \frac
        { \max_{1 \leq t \leq T} h_t^2 \EE[t-1]{ \gamma_t^2 }}
        {  \EE{  \sum_{t=1}^T h_t^2\gamma_t^2 } }
      }^{1 + \delta/2}
      \leq
      \frac
        { \max_{1 \leq t \leq T} h_t^{2+\delta} |\EE[t-1]{ \gamma_t^2 }|^{1 + \delta/2} }
        { \EE{  \sum_{t=1}^T h_t^2\gamma_t^2 }^{1 + \delta/2} }
        \leq 
        \frac
          { \sum_{t=1}^T h_t^{2+\delta} \EE[t-1]{ |\gamma_t|^{2 + \delta} }}
          { \EE{ \sum_{t=1}^T  h_t^2\gamma_t^2 }^{1 + \delta/2} }.
  \end{equation}
  Exchanging the maximum with the mapping $f(x) = |x|^{1+\delta/2}$ in the first inequality is justified because the latter is an increasing function of $x$, 
  and the second inequality follows from Jensen's inequality because $f$ is a convex function. The right side converges to zero by \eqref{eq:general_lyapunov}. Because the absolute power function $f$ is continuous, convergence of the absolute power implies that the original sequence in \eqref{eq:negligible_weights} also converges to zero, completing the proof. 
\end{proof}

\begin{lemm}{Behavior of score variance}. \label{lemm:score_variance} The variance of the unbiased score $\hGamma_t$ conditional on past data simplifies to
  \begin{equation} 
    \EE[t-1]{(\hGamma_t - \psi(m))^2} = \Var[t-1]{\gamma_t (\hm_t - m)} + \EE[t-1]{\gamma_t^2 \sigma^2} \ \text{ where }\ \sigma^2(w) = \Var{Y_t(w)}.
  \end{equation}
  Moreover, there exists a positive constant $c$ such that
  \begin{equation}
       \frac{1}{c} \EE[t-1]{\gamma_t^2} \leq \EE[t-1]{(\hGamma_t - \psi(m))^2} \leq c \EE[t-1]{\gamma_t^2}.
  \end{equation}
\end{lemm}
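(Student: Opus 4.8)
The plan is to expand the centered score $\hGamma_t - \psi(m)$ into a conditionally mean-zero \emph{noise} term and a conditionally mean-zero \emph{bias} term, show these two pieces are conditionally uncorrelated, and then read the two summands of the conditional variance off directly. First I would use linearity of $\psi$ together with the Riesz property \eqref{eq:riesz} to write $\psi(\hm_t - m) = \EE[t-1]{\gamma_t(\hm_t - m)}$, and split $Y_t - \hm_t(W_t) = (Y_t - m(W_t)) + (m - \hm_t)(W_t)$. Writing $\epsilon_t := Y_t - m(W_t)$ for the outcome noise, this gives the decomposition
\begin{equation*}
\hGamma_t - \psi(m) = \gamma_t \epsilon_t - B_t, \qquad B_t := \gamma_t(\hm_t - m)(W_t) - \EE[t-1]{\gamma_t(\hm_t - m)},
\end{equation*}
where throughout $\gamma_t = \gamma_t(W_t)$. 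Both terms are conditionally centered: $B_t$ by construction, and $\gamma_t\epsilon_t$ because, conditioning further on $W_t$, the independence structure of the stationary setting (given $H^{t-1}$, the assignment $W_t$ is independent of the fresh potential-outcome vector $(Y_t(w))_w$) yields $\EE{\epsilon_t \cond H^{t-1}, W_t} = 0$.

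The heart of the identity is the vanishing of the cross term. Conditioning on $W_t$, both $\gamma_t$ and $B_t$ are $\sigma(H^{t-1}, W_t)$-measurable, so by the tower property $\EE[t-1]{\gamma_t \epsilon_t B_t} = \EE[t-1]{\gamma_t B_t \, \EE{\epsilon_t \cond H^{t-1}, W_t}} = 0$. The two remaining terms then give exactly the claimed decomposition: the noise term contributes $\EE[t-1]{\gamma_t^2 \epsilon_t^2} = \EE[t-1]{\gamma_t^2 \sigma^2}$, again by first conditioning on $W_t$ and using $\EE{\epsilon_t^2 \cond H^{t-1}, W_t} = \sigma^2(W_t)$, while the bias term contributes $\EE[t-1]{B_t^2} = \Var[t-1]{\gamma_t(\hm_t - m)}$ since $B_t$ is the conditionally centered version of $\gamma_t(\hm_t - m)(W_t)$.

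For the two-sided bound I would sandwich each piece against $\EE[t-1]{\gamma_t^2}$. The lower bound is immediate: dropping the nonnegative variance term leaves $\EE[t-1]{(\hGamma_t - \psi(m))^2} \geq \EE[t-1]{\gamma_t^2 \sigma^2} \geq \underline{\sigma}^2 \, \EE[t-1]{\gamma_t^2}$, using that $\sigma^2(w)$ is bounded away from zero uniformly in $w$. For the upper bound, boundedness of $\sigma^2$ controls the noise term, and uniform boundedness of $\hm_t$ together with boundedness of $m$ (which follows from the uniform moment bound on $Y_t(w)$) gives $\Var[t-1]{\gamma_t(\hm_t - m)} \leq \EE[t-1]{\gamma_t^2 (\hm_t - m)^2} \leq \bar{D}^2 \, \EE[t-1]{\gamma_t^2}$; taking $c$ to be the larger of the two resulting constants settles both inequalities.

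This argument is essentially a careful bookkeeping exercise rather than one with a genuine obstacle: the only point requiring attention is keeping the conditioning structure straight, so that $\gamma_t$, $\hm_t$, and $m(W_t)$ are all treated as fixed under $\sigma(H^{t-1}, W_t)$ while only the outcome noise $\epsilon_t$ averages out. Getting this iterated-conditioning step right is what makes the cross term vanish and cleanly separates the irreducible noise contribution $\EE[t-1]{\gamma_t^2\sigma^2}$ from the regression-adjustment remainder $\Var[t-1]{\gamma_t(\hm_t-m)}$.
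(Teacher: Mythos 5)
Your proof is correct and follows essentially the same route as the paper's: both add and subtract $\gamma_t m$, use the Riesz property to identify $\psi(\hm_t - m)$ with $\EE[t-1]{\gamma_t(\hm_t - m)}$, kill the cross terms by conditioning on $(H^{t-1}, W_t)$, and obtain the bounds from the uniform bounds on $\sigma^2$, $\hm_t$, and $m$. The only difference is organizational --- you pre-group the terms into an orthogonal noise-plus-centered-bias decomposition, so a single cross term vanishes, whereas the paper expands the trinomial square into six terms and simplifies each; the underlying facts used are identical.
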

\begin{proof}[Proof (Lemma \ref{lemm:score_variance})]
  Recalling that $\psi(m) = \E_{t-1} \hGamma_t$ and adding and subtracting $\gamma_t m$ from \eqref{eq:general_scoring_rule},
  \begin{equation}
    \label{eq:score_var}
    \begin{aligned}
      \EE[t-1]{ \p{ \hGamma_t - \E_{t-1} \hGamma_t}^2} &= \EE[t-1]{ \p{ \hGamma_t - \psi(m)} ^2} \\
        &= \EE[t-1]{ \p{ \psi(\hm_t - m) - \gamma_t(\hm_t - m) + \gamma_t(Y_t - m) }^2 } \\
        &= \EE[t-1]{ \psi(\hm_t - m)^2} + \EE[t-1]{ \gamma_t^2(\hm_t - m)^2 } + \EE[t-1]{ \gamma_t^2(Y_t - m)^2 } \\
        &- 2\EE[t-1]{ \psi(\hm_t - m) \gamma_t (\hm_t - m) } 
         + 2\EE[t-1]{ \psi(\hm_t - m) \gamma_t(Y_t - m) } \\
        &- 2\EE[t-1]{  \gamma_t^2 (\hm_t - m) (Y_t - m) }.
    \end{aligned}
  \end{equation}

  We can simplify this further. The first term is simply $\psi(\hm_t - m)$ as it is $H^{t-1}$-measurable. The third term equals $\EE[t-1]{\gamma_t^2 (Y_t - m)^2} = \EE[t-1]{\gamma_t^2 \sigma^2}$. In the fourth term, pull out the $H^{t-1}$-measurable factor $\psi(\hm_t - m)$, and apply the Riesz representation property \eqref{eq:riesz} to get 
 $\psi(\hm_t - m)\EE[t-1]{\gamma_t (\hm_t - m)} = \psi(\hm_t - m)^2$. Apply the law of iterated expectations on the fifth and sixth terms conditioning on both $H^{t-1}$ and $W_t$ to show that those terms vanish. We are left with
  \begin{equation}
    \begin{aligned}
      &= \psi(\hm_t - m)^2 + \EE[t-1]{\gamma_t^2 (\hm_t - m)^2} + \EE[t-1]{\gamma_t \sigma^2} - 2\psi(\hm_t - m)^2.
    \end{aligned}
  \end{equation}

  Using the Riesz representation property \eqref{eq:riesz} rewrite $\psi(\hm_t - m)^2$ as $\EE[t-1]{\gamma_t(\hm_t - m)}^2$, 
  \begin{equation}
    \begin{aligned}
      &= -\EE[t-1]{\gamma_t(\hm_t - m)}^2 + \EE[t-1]{\gamma_t^2 (\hm_t - m)^2} + \EE[t-1]{\gamma_t \sigma^2} \\
      &=  \Var[t-1]{\gamma_t (\hm_t - m)} + \EE[t-1]{\gamma_t^2 \sigma^2}. 
    \end{aligned}
  \end{equation}
  This completes the first part of the proof. The lower bound in second part is an obvious consequence, as by assumption $\sigma^2$ is bounded below.
  The upper bound follows similarly because $\sigma^2$, $\hm_t$, and $m$ are uniformly bounded.
\end{proof}

In possession of these results, we are ready to start the proof of Theorem \ref{theo:general_clt}.

\begin{proof}[Proof (Theorem \ref{theo:general_clt})] We must show that the Lyapunov and variance convergence conditions in \ref{prop:martingale_clt} are satisfied for the auxiliary martingale \eqref{eq:auxiliary_mds}.
  
\paragraph{Lyapunov} We would like to show that for some $\delta > 0$,
\begin{align}
  \label{eq:lyap}
  \sum_{t=1}^T \EE{ |\xi_{T,t}|^{2 + \delta} }
    &= \frac{ \sum_{t=1}^T  h_t^{2 + \delta}  \EE[t-1]{ |\hGamma_t - \psi(m)|^{2 + \delta}} }
            { \EE{ \sum_{t=1}^T h_t^2 (\hGamma_t - \psi(m))^2}^{1 + \delta/2} }  \xrightarrow[T \to \infty]{p} 0.
\end{align}

Let's begin by studying the numerator. By adding and subtracting $\gamma_t m$,
\begin{equation}
  \label{eq:lyap_numerator}
  \begin{aligned}
    || \hGamma_t - \psi(m) ||_{L_{2+\delta}(P_{t-1})}
      &=   \Norm{ \psi(\hm_t - m) - \gamma_t (\hm_t - m) + \gamma_t (Y_t - m) }_{L_{2+\delta}(P_{t-1})} \\
      &\leq   \Norm{ \psi(\hm_t - m) }_{L_{2+\delta}(P_{t-1})}   +
         \Norm{  \gamma_t (\hm_t - m) }_{L_{2+\delta}(P_{t-1})}   +
         \Norm{ \gamma_t (Y_t - m) }_{L_{2+\delta}(P_{t-1})},
  \end{aligned}
\end{equation}
where we used Minkowski's inequality in the second line. Let's show that all three terms in \eqref{eq:lyap_numerator} are bounded by a multiple of $\norm{\gamma_t}_{L_{2+\delta}(P_{t-1})}$. The first term is bounded by the second: via the Riesz representation property \eqref{eq:riesz}, it is 
equal to $\norm{\E_{t-1}[\gamma_t (\hm_t - m) ]}_{L_{2+\delta}(P_{t-1})}$, and by Jensen's inequality, this is bounded by $\norm{\gamma_t (\hm_t - m) }_{L_{2+\delta}(P_{t-1})}$.
We use H\"older's inequality to bound the second term: $\norm{ \gamma_t ( \hm_t - m) }_{L_{2+\delta}(P_{t-1})} \le \norm{\hm_t - m}_{L_{\infty}(P_{t-1})}\norm{ \gamma_t}_{L_{2+\delta}(P_{t-1})}$ and $\hm_t - m$ is uniformly bounded. Using the law of iterated expectation on the third term, $\E_{t-1}[|\gamma_t|^{2+\delta} \E_{t-1}[ |Y_t(W_t) - m(W_t)|^{2+\delta} | W_t]] \lesssim \E_{t-1} |\gamma_t|^{2+\delta}$ because $\E[\abs{Y_t(w)}^{2+\delta}]$ is uniformly bounded. Collecting these results we have that $|| \hGamma_t - \psi(m) ||_{L_{2+\delta}(P_{t-1})} \lesssim \norm{\gamma_t}_{L_{2+\delta}(P_{t-1})}$, or equivalently that $\E_{t-1}[|\hGamma_{t} - \psi(m)|^{2+\delta}] \lesssim \E_{t-1}[|\gamma_t|^{2+\delta}]$. Substituting this bound for the terms in the numerator of \eqref{eq:lyap} and bounding the terms in the denominator from below using Lemma \ref{lemm:score_variance},
\begin{align}
  \sum_{t=1}^T \EE{ |\xi_{T,t}|^{2 + \delta} }
    &\lesssim
        \frac{ \sum_{t=1}^T  h_t^{2 + \delta}  \EE[t-1]{ |\gamma_t|^{2 + \delta}} }
             { \EE{ \sum_{t=1}^T  h_t^2 \gamma_t^2}^{1 + \delta/2} },  
\end{align}
which goes to zero in probability by the Lyapunov assumption \eqref{eq:general_lyapunov}.

\paragraph{Variance convergence} 
Our goal is to show that the following sum of conditional variances converges to one.
\begin{align}
  \label{eq:varconv}
  \sum_{t=1}^T \EE{ \xi_{T,t}^2 }
    &= \frac{ \sum_{t=1}^T \EE[t-1]{ h_t^2 (\hGamma_t - \psi(m))^2} }
            { \EE{ \sum_{t=1}^T h_t^2 (\hGamma_t - \psi(m))^2} }  \toprob 1
\end{align}

The ratio in \eqref{eq:varconv} is $Z_T/\E Z_T$ for $Z_T = \sum_{t=1}^T \E_{t-1}{ h_t^2 (\hGamma_t - \psi(m))^2} $. We characterize the asymptotic behavior of $Z_T$ by decomposing it into $Z_T = A_T + R_T$. The dominant term is $A_T = \sum_{t=1}^T h_t^2 \EE[t-1]{\gamma_t^2 f}$ for a time-invariant function $f$ that is bounded from above and also bounded away from zero. The magnitude of the remainder term $R_T$ is bounded by $\sum_{t=1}^T h_t^2 \E_{t-1} [\gamma_t^2]x_{t}$, where $x_{t}$ is a bounded and almost surely vanishing sequence. Both $f$ and $x_t$ will be defined later. For now, let's assume that we have completed this characterization of $Z_T$, and see how this fact allows us to conclude the proof. 

First, let's show that $R_T = o_p(A_T)$. Because $f$ is bounded away from zero, $A_T \gtrsim \sum_t h_t^2 \E_{t-1}[\gamma_t^2]$. Therefore, 
the magnitude of the ratio $R_T/A_T$ is bounded by 
\begin{equation}
  \label{eq:vanishing_term}
  \frac{\sum_{t=1}^T h_t^2 \EE[t-1]{\gamma_t^{2}} x_t}
           {\sum_{t=1}^T h_t^2 \EE[t-1]{\gamma_t^2} }.
\end{equation}
This ratio \eqref{eq:vanishing_term} is a weighted average of a bounded sequence $x_{t}$ converging almost-surely to zero in the sense of Lemma~\ref{lemm:weighted_average}, with weights that sum to one deterministically and are individually negligible via Lemma \ref{lemm:negligible_weights} and Assumption~\ref{assu:general_variance_convergence}. Hence by Lemma \ref{lemm:weighted_average} it is $o_p(1)$.

Next, we will show that $\smash{\E R_T = o(\E A_T)}$. By H\"older's inequality,
\begin{equation}
  \label{eq:denominator_remainder}
   \frac{\E R_T}{\E A_T} = \EE{ \frac{R_T}{A_T} \frac{A_T}{\E A_T} } \le \norm*{\frac{R_T}{A_T}}_{L_{q}(P)} \norm*{\frac{A_T}{\E A_T}}_{L_{p}(P)}
\end{equation}
for some H\"older conjugates $p,q$. The second factor is bounded for some $p>1$ as shown in \eqref{eq:ratio_convergence} below, and the first converges to zero for any finite $q$ and therefore the conjugate of $p$, because $R_T/A_T$ converges to zero in probability and is uniformly bounded.

It follows from \eqref{eq:vanishing_term} and \eqref{eq:denominator_remainder} that
\begin{equation}
  \frac{Z_T}{\E Z_T} = \frac{(1+o_p(1))A_T}{(1+o(1))\E A_T} = \frac{ A_T }{ \E A_T } + o_p\p{\frac{A_T}{\E A_T}}.
\end{equation}
The main term $A_T/\E A_T$ converges to one in $p$-norm for some $p > 1$ and hence in probability, as 
\begin{align}
  \label{eq:ratio_convergence}
  \EE{ \abs{ \frac{\sum_{t=1}^T h_t^2 \EE[t-1]{\gamma_t^2 f}}{ \EE{\sum_{t=1}^T h_t^2 \gamma_t^2 f} }  - 1  }^p }
  \leq \abs{\frac{\sup_w f(w)}{\inf_w f(w)}}^{p} \EE{ \abs{
        \frac{\sum_{t=1}^T h_t^2 \EE[t-1]{\gamma_t^2} - \EE{\sum_{t=1}^T h_t^2 \gamma_t^2}}
             { \EE{\sum_{t=1}^T h_t^2 \gamma_t^2} }  
        }^p },
\end{align}
recalling that $f$ is bounded by above and away from zero, and by the variance condition \eqref{eq:general_variance_convergence} there exists some number $p > 1$ for which the second factor in \eqref{eq:ratio_convergence} converges to zero. We conclude that $Z_T/\E Z_T = 1 + o_p(1)$.

All that remains is to show that indeed $Z_T$ decomposes into $A_T$ and $R_T$ as claimed. Expanding the expression for the conditional variance of $\hGamma_t$ from Lemma \ref{lemm:score_variance},
\begin{equation}
  \begin{aligned}
    \label{eq:score_variance_expanded}
    &\EE[t-1]{(\hGamma_t - \psi(m))^2} \\
    &= \EE[t-1]{\gamma_t^2 \sigma^2} + \Var[t-1]{\gamma_t (\hm_t - m)} \\
    &= \EE[t-1]{\gamma_t^2 \sigma^2} \\
    &+ \EE[t-1]{\gamma_t^2 (m_\infty - m)^2}
    + 2 \EE[t-1]{\gamma_t^2 (\hm_t - m_\infty)(m_\infty - m)}
    + \EE[t-1]{\gamma_t^2 (\hm_t - m_\infty)^2} \\
    &- \EE[t-1]{\gamma_t (m_\infty - m)}^2 
    - 2 \EE[t-1]{\gamma_t (\hm_t - m_\infty)}\EE[t-1]{\gamma_t (m_\infty - m)}
    - \EE[t-1]{\gamma_t (\hm_t- m_\infty)}^2
    \end{aligned}  
\end{equation}

Consider the three terms in the left column of \eqref{eq:score_variance_expanded}, i.e., those that do not involve $\hm_t$. If $\hm_t$ is consistent, the sum of these three terms is $\EE[t-1]{\gamma_t f}$ for $f(w) = \sigma^2(w)$, as the terms involving $m_\infty - m$ in \eqref{eq:score_variance_expanded} are zero.
Thus, it suffices to show that the other terms make up the remainder $R_T$,  i.e., that the magnitude of their sum after multiplying by $h_t^2$ and summing across time periods is bounded by $\sum_{t=1}^T h_t^2 \E_{t-1} [\gamma_t^2]x_{t}$, where $x_{t}$ is a bounded and almost surely vanishing sequence. It suffices to show that each term has this property individually, as by the triangle inequality $x_t = \sum_{j=1}^k x_{t,j}$  will be such a sequence if $x_{t,1} \ldots x_{t,k}$ are.

Irrespective of whether $\hm_t$ is consistent, these other terms are remainder terms. 
By H\"older's inequality, the magnitude of the third and fourth terms are bounded 
by $\E_{t-1}[\gamma_t^2] \norm{\hm_t - m_\infty}_{L_\infty(P_{t-1})} \norm{m_\infty - m}_{L_\infty(P_{t-1})}$ 
and $\E_{t-1}[\gamma_t^2] \norm{\hm_t - m_\infty}_{L_\infty(P_{t-1})}^2$ respectively,
and $x_{t,1} := \norm{\hm_t - m_\infty}_{L_\infty(P_{t-1})} \norm{m_\infty - m}_{L_\infty(P_{t-1})}$
and $x_{t,2} := \norm{\hm_t - m_\infty}_{L_\infty(P_{t-1})}^2$ are bounded and go to zero almost surely given
our assumptions that $\hm_t$ converges and $\hm_t$ and $m$ are bounded.
Furthermore, the sum of the third term and the corresponding term on the line below it, and likewise
the fourth and the one below it, are smaller in magnitude than the third and fourth terms themselves --- these sums are conditional variances and covariances where the third and fourth terms themselves are conditional mean squares --- so they satisfy the same bounds.

Now suppose $\hm_t$ is not consistent. In this case, we take $f(w) = \sigma^2(w) +  (m_\infty(w) - m(w))^2 + \psi(m_{\infty} - m)^2(\bar{\gamma}^2_{\infty})^{-1}$, recalling that $\bar{\gamma}_{\infty}^2 > 0$ by assumption. The difference between the sum of the three terms in the left column of \eqref{eq:score_variance_expanded} and $\E_{t-1}[\gamma_t^2 f]$ is 
$\EE[t-1]{\gamma_t (m_\infty- m)}^2 -  \psi(m_{\infty} - m)^2(\bar{\gamma}^2_{\infty})^{-1}\E_{t-1}[\gamma_t^2]$. We must show that the contribution of this difference is another remainder term. Applying the Riesz representation \eqref{eq:riesz} to the first term, the difference simplifies to $\psi(m_{\infty} - m)^2 (1 -  (\bar{\gamma}^2_{\infty})^{-1} \E_{t-1}{\gamma_t^2})$. By assumption \eqref{eq:general_mu_alternative}, the ratio $(\bar{\gamma}^2_{\infty})^{-1} \E_{t-1}{\gamma_t^2}$ converges to one almost surely, so the difference converges to zero almost surely. To confirm this is a remainder term note that $1 = \EE[t-1]{\gamma_t^2} (\EE[t-1]{\gamma_t^2})^{-1} \lesssim \EE[t-1]{\gamma_t^2}$, since $\E_{t-1}[\gamma_t^2]$ is bounded away from zero, so our difference is bounded by $\E_{t-1}[\gamma_t^2] x_{t,3}$, for $x_{t,3} = \psi(m_\infty - m) (1 - (\bar{\gamma}^2_{\infty})^{-1} \E_{t-1}{\gamma_t^2})$.

This concludes the proof of asymptotic normality for the auxiliary martingale \eqref{eq:auxiliary_mds}.

\paragraph{Consistency of $\hpsi_T$}

Starting from the definition of $\hpsi_T$ in \eqref{eq:aw} and rearranging terms,
\begin{align}
  \label{eq:consistency_prod}
  \hpsi_T - \psi(m)
  =  \frac{ \sum_{t=1}^T h_t (\hGamma_t - \psi(m))}
          { \sum_{t=1}^T h_t } 
  =  \frac{ \sum_{t=1}^T h_t (\hGamma_t - \psi(m))}
          { \sum_{t=1}^T \EE{ h_t^2 (\hGamma_t - \psi(m))^2 }^{1/2} } 
     \frac{ \EE{ \sum_{t=1}^T  h_t^2 (\hGamma_t - \psi(m))^2 }^{1/2}}{ \sum_{t=1}^T h_t  } 
\end{align}

The first factor is simply $\sum_{t=1}^T \xi_{T, t}$, which have just shown to be asymptotically normal and hence bounded in probability. By Lemma \ref{lemm:score_variance}, the second factor is upper bounded by an expression that goes to zero by the infinite sampling assumption \eqref{eq:general_infinite_sampling},
\begin{align}
  \frac{ \EE{  \sum_{t=1}^T h_t^2 (\hGamma_t - \psi(m))^2 }^{1/2}}
       { \sum_{t=1}^T h_t  }
    \lesssim  \frac{  \EE{ \sum_{t=1}^T h_t^2 \gamma_t^2 }^{1/2}}
                   { \sum_{t=1}^T h_t  }.
\end{align}

 This implies that \eqref{eq:consistency_prod} is the product of a factor that is bounded in probability times one that converges to zero in probability, and therefore it must also vanish in probability.

\paragraph{Asymptotic normality}

Rewrite the studentized statistic \eqref{eq:clt} as
\begin{equation}
    \begin{aligned}
        \frac{\hpsi_T - \psi(m)}{ \hV_T^{\frac{1}{2}} }
        &=
            { \frac{\sum_{t=1}^T h_t(\hGamma_t - \psi(m))}{\sum_{t=1}^T h_t} } \,\bigg/\,
            { \left( \frac{\sum_{t=1}^T h_t^2(\hGamma_t - \hpsi_T)^2}{ (\sum_{t=1}^T h_t)^2 } \right)^{1/2} }  \\
        &= \frac
            { \sum_{t=1}^T h_t(\hGamma_t - \psi(m))}
            { \left(  \sum_{t=1}^T h_t^2(\hGamma_t - \hpsi_T)^2 \right)^{1/2} } \\
        &=
          \left( \sum_{t=1}^T  \xi_{T,t} \right)
          \times
          \left(
            \frac
              { \EE{  \sum_{t=1}^T h_t^2 (\hGamma_t - \psi(m))^2 } }
              { \sum_{t=1}^T h_t^2 \left(\hGamma_t- \hpsi_T \right)^2 }
        \right)^{1/2}.
    \end{aligned}
\end{equation}

Since we have previously proved that the factor on the left is asymptotically normal, the remainder of the proof will show that the factor on the right converges in probability to $1$. 
Inverting the ratio and linearizing it around $\psi(m)$,
\begin{equation}
    \label{eq:asymp_linear}
    \frac
        { \sum_{t=1}^T h_t^2 \left(\hGamma_t- \hpsi_T \right)^2 }
        { \EE{  \sum_{t=1}^T h_t^2 (\hGamma_t - \psi(m))^2 } }
    = \frac
        { \sum_{t=1}^T h_t^2 (\hGamma_t - \psi(m))^2 }
        {  \EE{  \sum_{t=1}^Th_t^2 (\hGamma_t - \psi(m))^2 } }
    - 2(\hpsi_T - \psi(m))  \frac
        { \sum_{t=1}^T h_t^2 \left(\hGamma_t- \tilde{\psi}_T \right) }
        { \EE{  \sum_{t=1}^T h_t^2 (\hGamma_t - \psi(m))^2 } },
\end{equation}
with $\tilde{\psi}_T$ between $\psi(m)$ and $\hpsi_T$ by the mean value theorem. As we'll show next, the first term converges to one in probability, and the remaining term on the right converges to zero  in probability.

Let's show that the first term in \eqref{eq:asymp_linear} converges to one. It is the \emph{quadratic variation} $\sum_{t=1} \xi_{T, t}^2$ of the auxiliary martingale difference sequence defined in \eqref{eq:auxiliary_mds}. We showed before that the conditional variance $\sum_{t=1}^T \E_{t-1}\xi_{T, t}^2$ converges to one in probability, and by Proposition \ref{prop:convergence_quadratic_variation} the conditional variance and the quadratic variation converge in probability, so it follows that the quadratic variation converges to one in probability.

To complete the proof let's show that the second term in \eqref{eq:asymp_linear} converges to zero. Because $\hpsi_T$ converges to $\psi(m)$ in probability, so does $\tilde{\psi}_T$. Hence we just need to show that the rightmost factor in \eqref{eq:asymp_linear} is bounded in probability. Adding and subtracting $\sum_{t=1}^T h_t^2 \psi(m)$ from its numerator, 
\begin{equation}
  \label{eq:asymp_linear2}  
  \begin{aligned}
    \frac
        { \sum_{t=1}^T h_t^2 \left(\hGamma_t- \tilde{\psi}_T \right) }
        {  \EE{ \sum_{t=1}^T h_t^2 (\hGamma_t - \psi(m))^2 } }
    &= \frac
        { \sum_{t=1}^T h_t^2 (\hGamma_t - \psi(m)) }
        { \EE{  \sum_{t=1}^T h_t^2 (\hGamma_t - \psi(m))^2 } }
    + \left(\psi(m)- \tilde{\psi}_T \right)
      \frac
        { \sum_{t=1}^T h_t^2 }
        { \EE{  \sum_{t=1}^T h_t^2 (\hGamma_t - \psi(m))^2 } }.
  \end{aligned}
\end{equation}
Via H\"older's inequality, the first term on the right-hand side of \eqref{eq:asymp_linear2} is bounded as
\begin{equation}
  \label{eq:asymp_linear3}
  \begin{aligned}
    \frac
        { \sum_{t=1}^T h_t^2 (\hGamma_t - \psi(m)) }
        {  \EE{ \sum_{t=1}^Th_t^2 (\hGamma_t - \psi(m))^2 } }
    &\leq
      \frac
        { \max_{1 \leq t \leq T} h_t}
        { \EE{  \sum_{t=1}^Th_t^2 (\hGamma_t - \psi(m))^2 }^{1``/2} }
      \abs{\frac
        { \sum_{t=1}^T h_t (\hGamma_t - \psi(m)) }
        { \EE{  \sum_{t=1}^T h_t^2 (\hGamma_t - \psi(m))^2 }^{1/2} }},
  \end{aligned}
\end{equation}
which is the product of two factors. By Lemma \ref{lemm:score_variance}, the square of the first factor is upper bounded~as
\begin{equation}
    \frac
      { \max_{1 \leq t \leq T} h_t^2 }
      {  \EE{ \sum_{t=1}^Th_t^2 (\hGamma_t - \psi(m))^2 } }
    \lesssim
    \frac
      { \max_{1 \leq t \leq T} h_t^2 \EE[t-1]{ \gamma_t^2 }}
      { \EE{  \sum_{t=1}^T h_t^2\gamma_t^2 }^{1 + \delta/2} }
\end{equation}
which goes to zero by Lemma \ref{lemm:negligible_weights}. The second factor in \eqref{eq:asymp_linear3} is the absolute value of something that we have proven to converge to a standard normal random variable and is therefore bounded in probability. Therefore, the first term on the right-hand side of \eqref{eq:asymp_linear2} converges to zero in probability.

The second term on the right-hand side of \eqref{eq:asymp_linear2} is also vanishing, as it is the product of
the vanishing factor $\tilde{\psi}_T-\psi(m)$ and a factor we can show to be no larger than constant order using our variance convergence assumption \eqref{eq:general_variance_convergence} and a uniform lower bound on $\EE[t-1]{\gamma_t^2}$,
\begin{align}
  \frac
    { \sum_{t=1}^T h_t^2 \EE[t-1]{\gamma_t^2}/\EE[t-1]{\gamma_t^2} }
    { \EE{ \sum_{t=1}^T  h_t^2 (\hGamma_t - \psi(m))^2  } }
  \lesssim
  \frac
    { \sum_{t=1}^T h_t^2 \EE[t-1]{\gamma_t^2} }
    { \EE{ \sum_{t=1}^T  h_t^2\gamma_t^2} } \xrightarrow[T \to \infty]{L_{p}} 1.
\end{align}

 This concludes the proof of Theorem \ref{theo:general_clt}.
\end{proof}


\subsection{General CLT for adaptively-weighted unbiased score differences}

\begin{proof}[Proof (Theorem \ref{theo:general_contrast})]

We will first show that for any $(t_{1}, t_{2}) \in \mathbb{R}^2$, the linear combination of auxiliary martingales $t_{1}\xi_{t, 1} + t_{2}\xi_{t, 2}$ defined in \eqref{eq:auxiliary_mds} is asymptotically normal. By the Cram\'{e}r-Wold theorem, this will imply that the pair $(\xi_{t, 1}, \xi_{t, 2})$ is asymptotically normal as well.

\paragraph{Variance convergence} We must show that the following converges to a constant,
\begin{equation}
  \label{eq:linear_combo_variance_square}
  \sum_{t=1}^T \EE[t-1]{(t_{1}\xi_{t, 1} + t_{2}\xi_{t, 2})^2} = 
    t_{1}^2 \sum_{t=1}^T \EE[t-1]{\xi_{t, 1}^2}
    + t_{2}^2  \sum_{t=1}^T \EE[t-1]{\xi_{t, 2}^2} 
    + 2 t_{1}t_{2} \sum_{t=1}^T \EE[t-1]{\xi_{t, 1}\xi_{t, 2}}.
\end{equation}

By the results proved in Theorem \ref{theo:arm_value_clt}, the asymptotic variance of the auxiliary martingales is one, so first two terms in \eqref{eq:linear_combo_variance_square} converge to $t_{1}^2$ and $t_{2}^2$.  We will prove that the cross terms converge to zero in probability, that is,
\begin{equation}
  \label{eq:contrast_cross_term}
  \abs{ \sum_{t=1}^T \EE[t-1]{\xi_{t, 1}\xi_{t, 2}} }
      = \frac{\sum_{t=1}^T h_{t, 1}h_{t, 2} \abs{ \EE[t-1]{ (\hGamma_{t, 1} - \psi_1(m))(\hGamma_{t, 2} - \psi_2(m)) }}}
          { \EE{ \sum_{t=1}^T h_{t, 1}^2 (\hGamma_{t, 1} - \psi_1(m))^2 }^{1/2} 
            \EE{ \sum_{t=1}^T h_{t, 2}^2 (\hGamma_{t, 2} - \psi_2(m))^2 }^{1/2}
          } 
      \xrightarrow[T \to \infty]{p} 0.
\end{equation}

Begin by expanding the covariance of scores,
\begin{equation}  
  \label{eq:score_covariance_first}
  \begin{aligned}
    &\EE[t-1]{(\hGamma_{t, 1} - \psi_1(m))(\hGamma_{t, 2} - \psi_2(m))} \\
    &= \EE[t-1]{ (\psi_1(\hm_{t,1} - m) + \gamma_{t,1}(Y_t - \hm_{t,1})) 
                 (\psi_2(\hm_{t,2} - m) + \gamma_{t,2}(Y_t - \hm_{t,2})} \\
    &= \EE[t-1]{\psi_1(\hm_{t,1} - m)\psi_2(\hm_{t,2} - m)}
      + \EE[t-1]{\psi_1(\hm_{t,1} - m)\gamma_{t,2}(Y_t - \hm_{t,2}) } \\
    &+ \EE[t-1]{\psi_2(\hm_{t,2} - m)\gamma_{t,1}(Y_t - \hm_{t,1}) }
      +  \EE[t-1]{\gamma_{t,1}\gamma_{t,2} (Y_t - \hm_{t,1})(Y_t - \hm_{t,2})  }.
\end{aligned}
\end{equation}

Since the $\psi_j(\hm_{t,j} - m)$ are $H^{t-1}$-measurable, they can be pulled out of the expectation; this is
\begin{equation}
  \label{eq:score_covariance_second}
  \begin{aligned}
    &= \psi_1(\hm_{t,1} - m)\psi_2(\hm_{t,2} - m)
      + \psi_1(\hm_{t,1} - m)\EE[t-1]{\gamma_{t,2}(Y_t - \hm_{t,2}) } \\
    &+ \psi_2(\hm_{t,2} - m)\EE[t-1]{\gamma_{t,1}(Y_t - \hm_{t,1}) }
      +  \EE[t-1]{\gamma_{t,1}\gamma_{t,2} (Y_t - \hm_{t,1})(Y_t - \hm_{t,2})  },
  \end{aligned}
\end{equation}

Noting that $\gamma_{t,j}(W_t)$ and $\hm_{t,j}(W_t)$ are measurable with respect to $(H^{t-1},W_t)$, by the law of iterated expectation $\EE[t-1]{\gamma_{t,j}(Y_t - \hm_{t,j}) } = \EE[t-1]{\gamma_{t,j}(m - \hm_{t,j}) }$. Moreover, by the Riesz representation property $\EE[t-1]{\gamma_{t,j}(m - \hm_{t,j}))} = \psi(m - \hm_{t,j})$. If we make these substitutions in \eqref{eq:score_covariance_second}, the first and second terms cancel out, leaving 
\begin{equation}
  \label{eq:score_covariance_third}
  \begin{aligned}
  &= -\psi_1(\hm_{t,1} - m)\psi_2(\hm_{t,2} - m)
    +  \EE[t-1]{\gamma_{t,1}\gamma_{t,2} (Y_t - \hm_{t,1})(Y_t - \hm_{t,2})  }.
  \end{aligned}
\end{equation}

Let's focus on the last term. Adding and subtracting $m$ from each $(Y_{t,j} - m)$ term, 
\begin{equation}
  \label{eq:score_covariance_cross_term}
  \begin{aligned}
  &\EE[t-1]{\gamma_{t,1}\gamma_{t,2} (Y_t - \hm_{t,1})(Y_t - \hm_{t,2})  } \\
  &= \EE[t-1]{\gamma_{t,1}\gamma_{t,2} (Y_t - m)^2 }
  + \EE[t-1]{\gamma_{t,1}\gamma_{t,2} (Y_t - m)(m - \hm_{t,1})  } \\
  &+ \EE[t-1]{\gamma_{t,1}\gamma_{t,2} (Y_t - m)(m - \hm_{t,2})  }
  + \EE[t-1]{\gamma_{t,1}\gamma_{t,2} (m - \hm_{t,1})(m - \hm_{t,2})  },
  \end{aligned}
\end{equation}
Applying the law of iterated expectations conditioning on $H^{t-1}$ and $W_{t}$ as above, the two middle terms vanish, and the first simplifies to $\E_{t-1}\gamma_{t,1}\gamma_{t,2}\sigma^2$. Plugging \eqref{eq:score_covariance_cross_term} back into \eqref{eq:score_covariance_third}, the covariance of scores \eqref{eq:score_covariance_first} becomes
\begin{equation}
  \label{eq:score_covariance_simplified}
  \begin{aligned}
    &\EE[t-1]{(\hGamma_{t, 1} - \psi_1(m))(\hGamma_{t, 2} - \psi_2(m))} \\
    &= \EE[t-1]{\gamma_{t,1}\gamma_{t,2}\sigma^2} 
     + \EE[t-1]{\gamma_{t,1}\gamma_{t,2} (\hm_{t,1} - m)(\hm_{t,2} - m )} 
     - \psi_1(\hm_{t,1} - m)\psi_2(\hm_{t,2} - m).
  \end{aligned}
\end{equation}

Now substitute \eqref{eq:score_covariance_simplified} into the numerator of \eqref{eq:contrast_cross_term} and bound it by three sums, each corresponding to a term in \eqref{eq:score_covariance_simplified}, via the triangle inequality. Furthermore, substitute the lower bound from Lemma \ref{lemm:score_variance}, $\EE[t-1]{h_{t, j}^2(\hGamma_{t, j} - \psi(m))^2} \gtrsim \EE[t-1]{h_{t, j}^2\gamma_{t, j}^2}$, for each such term in the denominator.
This gives a three-term upper bound on \eqref{eq:contrast_cross_term}, 
\begin{equation}
  \begin{aligned}
    \abs{ \sum_{t=1}^T \EE[t-1]{\xi_{t, 1}\xi_{t, 2}} }
        &\lesssim  
        \frac{\sum_{t=1}^T h_{t, 1}h_{t, 2} \EE[t-1]{\gamma_{t,1}\gamma_{t,2}\sigma^2} }{
          \EE{ \sum_{t=1}^Th_{t, 1}^2 \gamma_{t, 1}^2}^{1/2} 
          \EE{ \sum_{t=1}^Th_{t, 1}^2 \gamma_{t, 2}^2}^{1/2} } \\
        &+  \abs{\frac{\sum_{t=1}^T h_{t, 1}h_{t, 2} \EE[t-1]{\gamma_{t,1}\gamma_{t,2} (\hm_{t,1} - m)(\hm_{t,2} - m)} }{
            \EE{ \sum_{t=1}^Th_{t, 1}^2 \gamma_{t, 1}^2}^{1/2} 
            \EE{ \sum_{t=1}^Th_{t, 1}^2 \gamma_{t, 2}^2}^{1/2} }} \\
        &+ \abs{\frac{\sum_{t=1}^T h_{t, 1}h_{t, 2} \psi_1(\hm_{t,1} - m)\psi_2(\hm_{t,2} - m) }{
            \EE{ \sum_{t=1}^Th_{t, 1}^2 \gamma_{t, 1}^2}^{1/2} 
            \EE{ \sum_{t=1}^Th_{t, 1}^2 \gamma_{t, 2}^2}^{1/2} }}.
  \end{aligned}
\end{equation}
The first two terms in this bound go to zero in probability by assumption \eqref{eq:general_covariance_convergence} and H\"{o}lder's inequality, recalling that $\sigma^2$, $\hm_{t, j}$ and $m$ are bounded. By the Cauchy-Schwarz inequality, the last term is bounded by
\begin{equation}
  \label{eq:linear_combo_variance_cs}
  \begin{aligned}
      \p{\frac{ \sum_{t=1}^T h_{t, 1}^2 \psi_1(\hm_{t, 1} - m)^2 }
              {   \EE{ \sum_{t=1}^Th_{t, 1}^2 \gamma_{t, 1}^2} } }^{1/2}
              \p{\frac{ \sum_{t=1}^T h_{t, 2}^2 \psi_2(\hm_{t, 2} - m)^2 }
              {   \EE{ \sum_{t=1}^Th_{t, 2}^2 \gamma_{t, 2}^2} } }^{1/2}
  \end{aligned}
\end{equation}

Let's show that this product converges to zero in probability. Each factor can be written as
\begin{equation}
  \label{eq:linear_combo_factor1}
  \frac{ \sum_{t=1}^T h_{t, j}^2 \psi_j(\hm_{t, j} - m)^2 }
          {   \EE{ \sum_{t=1}^T h_{t, j}^2 \gamma_{t, j}^2} }
  = \frac{ \sum_{t=1}^T h_{t, j}^2 \EE[t-1]{\gamma_{t, j}^2} \p{\EE[t-1]{\gamma_{t, j}^2}^{-1}\psi_j(\hm_{t, j} - m)^2}}
          {   \EE{ \sum_{t=1}^T h_{t, j}^2 \gamma_{t, j}^2} },
\end{equation}

implying \eqref{eq:linear_combo_factor1} is a weighted average of $x_{t,j} := (\E_{t-1}\gamma_{t, j}^2)^{-1} \psi_j(\hm_{t, j} - m)^2$ with weights $a_{t,T,j} := (h_{t, j}^2 \E_{t-1}\gamma_{t, j}^2) / \sum_{t=1}^T \EE{h_{t, j}^2 \gamma_{t, j}^2}$. The variance convergence assumption \eqref{eq:general_variance_convergence} implies that the sum of these weights $a_{t,T,j}$ is bounded in probability, and Lemma \ref{lemm:negligible_weights} implies that the largest weight converges to zero in probability. So if in addition $x_{t,j}$ were to go to zero almost surely, Lemma \ref{lemm:weighted_average} would apply and \eqref{eq:linear_combo_factor1} would be $o_p(1)$. Let's show that this is true for at least one arm.

Suppose that the consistency-convergence assumption \eqref{eq:general_mu_alternative} is satisfied for arm $1$. Then 
\begin{equation}
  \label{eq:x_bound}
  \begin{aligned}
  x_{t,1} 
    &\lesssim 
      \frac{\psi_{1}(\hm_{t,1} - m_{\infty,1})^2}
             {\EE[t-1]{\gamma_{t, 1}^2}} 
    + \frac{\psi_{1}(m_\infty - m)^2}
           {\EE[t-1]{\gamma_{t, 1}^2}} \\
    &= \frac{\EE[t-1]{\gamma_{t, 1}(\hm_{t,1} - m_{\infty,1})}^2}
             {\EE[t-1]{\gamma_{t, 1}^2}} 
      + \frac{\psi_{1}(m_{\infty,1} - m)^2}
             {\EE[t-1]{\gamma_{t, 1}^2}} \\
    &\leq
        \EE[t-1]{(\hm_{t,1} - m_{\infty,1})^2}
      + \frac{\psi_{1}(m_{\infty,1} - m)^2}
             {\EE[t-1]{\gamma_{t, 1}^2}},
  \end{aligned}
\end{equation}
where in the first line we applied the quadratic inequality $(a + b)^2 \leq 2(a^2 + b^2)$ to the numerator, and in the second and third lines we applied to the left term first the Riesz representation property \eqref{eq:riesz} and then the Cauchy-Schwarz inequality. The resulting first term is bounded by $\norm{\hm_{t,1} - m_{\infty,1}}_{L_{\infty}(P_{t-1})}^2$, which goes to zero almost surely by assumption. Let's consider the second term. If the $\hm_{t}$ is consistent then its numerator is zero, and since its denominator is bounded away from zero we are done; otherwise, by \eqref{eq:general_mu_alternative} its denominator goes to infinity almost surely,
and its denominator is a nonzero constant. In either case the bound on $x_{t,1}$ goes to zero almost surely. 
Therefore, as we discussed in the previous paragraph, Lemma \ref{lemm:weighted_average} indeed applies for arm $1$ and \eqref{eq:linear_combo_factor1} is $o_p(1)$.

To deal with the remaining factor in \eqref{eq:linear_combo_variance_cs}, first apply the Riesz representation property to replace $\psi_{2}(\hm_{t,2} - m)$ by $\EE[t-1]{\gamma_{t,2}(\hm_{t,2} -m )}$, then apply Cauchy-Schwarz and use the fact that $\EE[t-1]{(\hm_{t,2} - m)^2}$ is bounded since  $\hm_t$ and $m$ are bounded:
\begin{equation}
  \label{eq:linear_combo_factor2}
  \frac{ \sum_{t=1}^T h_{t, 2}^2 \psi_2(\hm_{t, 2} - m)^2 }
          {   \EE{ \sum_{t=1}^T h_{t, j}^2 \gamma_{t, j}^2} }
  = \frac{ \sum_{t=1}^T h_{t, 2}^2 \EE[t-1]{\gamma_{t, 2}(\hm_{t,2} - m)}^2}
          {   \EE{ \sum_{t=1}^T h_{t, j}^2 \gamma_{t, j}^2} }
  \lesssim \frac{ \sum_{t=1}^T h_{t, 2}^2 \EE[t-1]{\gamma_{t, 2}^2}}
          {   \EE{ \sum_{t=1}^T h_{t, j}^2 \gamma_{t, j}^2} }.
\end{equation}
By the variance convergence condition \eqref{eq:general_variance_convergence}, \eqref{eq:linear_combo_factor2} is bounded in probability. Therefore, we have just shown that the product in \eqref{eq:linear_combo_variance_cs} has one $O_p(1)$ and one $o_p(1)$ factor, completing our argument that all terms in  \eqref{eq:x_bound} are negligible. This completes our proof of variance convergence.

\paragraph{Lyapunov condition} This follows immediately from previous results, since
\begin{equation}
  \sum_{t=1}^T \EE[t-1]{|t_{1}\xi_{t, 1} + t_{2}\xi_{t, 2}|^{2+\delta}} \lesssim
    |t_1|^{2+\delta} \sum_{t=1}^T \EE[t-1]{|\xi_{t, 1}|^{2+\delta}} 
   + |t_2|^{2+\delta} \sum_{t=1}^T \EE[t-1]{|\xi_{t, 2}|^{2+\delta}} 
\end{equation}
and we have shown that both sums on the right-hand side converge to zero asymptotically.

\paragraph{Joint normality} The previous two paragraphs have shown that any linear combination of auxiliary martingales $t_1 \xi_{T, 1} + t_2 \xi_{T, 2}$ satisfies a martingale central limit theorem with asymptotic variance $t_1^2 + t_2^2$. By the Cram\'{e}r-Wold theorem, this implies that the vector 
\begin{align}
  Z_T := 
    \p{
      \frac
        { \sum_{t=1}^T h_{t, 1}(\hGamma_{t, 1} - \psi_1(m))}
        {  \EE{ \sum_{t=1}^T h_{t, 1}^2(\hGamma_{t, 1} - \psi_1(m))^2}^{1/2} }, \quad
      \frac
        { \sum_{t=1}^T h_{t, 2}(\hGamma_{t, 2} - \psi_2(m))}
        { \EE{ \sum_{t=1}^T h_{t, 2}^2 (\hGamma_{t, 2} - \psi_2(m))^2}^{1/2} }
      }
\end{align}
converges in distribution to a jointly normal random variable $Z \sim \mathcal{N}(0, I)$ as $T \to \infty$.

Previous results have showed that the ratio between empirical and population variances converges to one. Therefore, by Slutsky's theorem, the vector of feasible studentized statistics
\begin{align}
    \label{eq:jointly_normal_empirical}
    \p{
      \frac
        { \sum_{t=1}^T h_{t, 1}(\hGamma_{t, 1} - \psi_1(m))}
        { \p{ \sum_{t=1}^T h_{t, 1}^2(\hGamma_{t, 1} - \psi_1(m))^2 }^{1/2} }, \quad
      \frac
        { \sum_{t=1}^T h_{t, 2}(\hGamma_{t, 2} - \psi_2(m))}
        { \p{ \sum_{t=1}^T h_{t, 2}^2 (\hGamma_{t, 2} - \psi_2(m))^2}^{1/2} }
      },
\end{align}
also converges in distribution to the same jointly normal random variable $Z \sim \mathcal{N}(0, I)$ as $T \to \infty$.

To get our estimator \eqref{eq:studentized_contrast}, premultiply the elements of the sequence in \eqref{eq:jointly_normal_empirical} by random vectors
\begin{align}
  \label{eq:nut}
  \nu_T := 
    \p{
      \frac{\hV_{T, 1}}{ \p{\hV_{T, 1} + \hV_{T, 2}}^{1/2}},
      \frac{ -\hV_{T, 2}}{ \p{\hV_{T, 1} + \hV_{T, 2`}}^{1/2}} 
      }
    = \p{
        \frac{1}{ \p{1 + \hV_{T, 2}/\hV_{T, 1}}^{1/2}},
        \frac{-1}{ \p{1 + \hV_{T, 1}/\hV_{T, 2}}^{1/2}},
        },
\end{align}
where $\hV_T$ was defined in \eqref{eq:studentized_psihat}. By assumption \eqref{eq:general_variance_estimate_convergence}, $\nu_T$ converges in probability to the vector 
\begin{align}
  \nu := 
    \p{
    \frac{1}{\p{1 + r^{-1}}^{1/2}},
    \frac{1}{\p{1 + r}^{1/2}}
    }.
\end{align}

To analyze the asymptotic distribution of $\nu_T'\xi_{T,t}$, first 
note that since $\nu_T$ converges to a constant, the pair $(\nu_T, Z_T)$ converges jointly in distribution. By Skorokhod's representation theorem there exists a sequence $(\tilde{\nu}_T, \tZ_T)$ whose elements are distributed as their corresponding elements on the sequence in $(\nu_T, Z_T)$, and in addition there exists a random variable $\tZ \sim \mathcal{N}(0, I)$ such that $\tZ_T \xrightarrow[T \to \infty]{a.s.} \tZ$, and $\tnu \xrightarrow[T \to \infty]{a.s.} \nu$. Therefore, $\langle \nu_T, Z_T \rangle$ has the same distribution as $\langle \tnu_T, \tZ_T \rangle$, which we decompose as
\begin{equation}
  \label{eq:z_nu}
  \langle \tnu_T, \tZ_T \rangle 
  = \langle \tnu_T - \nu, \tZ_T - \tZ \rangle 
  + \langle \nu, \tZ_T - \tZ \rangle 
  + \langle \tnu_T - \nu, \tZ \rangle 
  + \langle \nu, \tZ \rangle.
\end{equation}
By Slutsky's theorem, the first three terms in \eqref{eq:z_nu} vanish as $T \to \infty$ since $\tnu_T - \nu $ and $\tZ_T - Z$ go to zero almost-surely. The last term is normally distributed with unit variance since $\nu'\nu = 1$. 

\end{proof}

%
%

\subsection{Proof of Theorem \ref{theo:general_variance_stabilizing_clt}}

\begin{proof}[Proof (Theorem \ref{theo:general_variance_stabilizing_clt})]  

Let's show that an adaptively-weighted estimator with variance-stabilizing weights \eqref{eq:gamma} satisfies the infinite sampling \eqref{eq:general_infinite_sampling}, variance convergence \eqref{eq:general_variance_convergence} and Lyapunov \eqref{eq:general_lyapunov} conditions.

\paragraph{Variance convergence} For variance-stabilizing weights, the numerator of the variance converge condition \eqref{eq:general_variance_convergence}  is constant since, by the recursion \eqref{eq:gamma} and the fact that $\lambda_T = 1$,
\begin{equation}
  \label{eq:variances_sum_to_one}
  \begin{aligned}
    \sum_{t=1}^{T} h_t^2 \EE[t-1]{\gamma_t^2} 
      &= \sum_{t=1}^{T-1} h_t^2 \EE[t-1]{\gamma_t^2} +  h_T^2 \EE[T-1]{\gamma_T^2} \\
      &= \sum_{t=1}^{T-1} h_t^2 \EE[t-1]{\gamma_t^2} + \p{1 - \sum_{t=1}^{T-1} h_t^2 \EE[t-1]{\gamma_t^2} } \lambda_T\\
      &= 1.
  \end{aligned}
\end{equation}
Moreover, since the denominator of \eqref{eq:general_variance_convergence} is the unconditional expectation of \eqref{eq:variances_sum_to_one} the result follows.

For the remainder it will be convenient to write $h_t^2 \EE[t-1]{\gamma_t^2}$ in \eqref{eq:gamma} in terms of the allocation rates:
  \begin{equation}
    \label{eq:lambda_representation}
        h_t^2 \EE[t-1]{\gamma_t^2} = \lambda_t p_{t-1} \qquad \text{where} \qquad p_{t-1} := \prod_{s=1}^{t-1}(1-\lambda_s).
  \end{equation}  
  
We can show \eqref{eq:lambda_representation} in two steps. First, let's establish the auxiliary identity $\smash{p_{t} = 1 - \sum_{s=1}^{t-1} \lambda_s p_{s-1}}$. It is evidently true for the first period since then both sides of this equality are one. Supposing that it is true for all periods up to $k-1$, at the $k^{th}$ period we have
\begin{equation}
  \label{eq:lambda_aux}
  \begin{aligned}
    1 - \sum_{s=1}^{k} \lambda_s p_{s-1}
    = 1 - \sum_{s=1}^{k-1} \lambda_s p_{s-1} - \lambda_k p_{k-1} 
    = p_{k-1} - \lambda_k p_{k-1}
    = (1 - \lambda_k) p_{k-1}
    = p_{k},
  \end{aligned}
\end{equation}
where in the second equality we used the inductive hypothesis, and in the last equality we used the definition of $p_{k}$ from \eqref{eq:lambda_representation}. This proves our auxiliary identity. Now let's show \eqref{eq:lambda_representation}. It is true for the first period, since $h_1^2 \EE[0]{\gamma_1^2} = \lambda_1$ by \eqref{eq:gamma}. Supposing that is true for the first $k-1$ periods,
\begin{equation}
  \begin{aligned}
    h_k^2 \EE[k-1]{\gamma_k^2}
    = \lambda_{k} \p{1 - \sum_{s=1}^{k-1} h_s^2 \EE[s-1]{\gamma_s^2}} 
    = \lambda_k \p{1 - \sum_{s=1}^{k-1} \lambda_s p_{s-1} } 
    = \lambda_k p_{k-1},
  \end{aligned}
\end{equation}
where the first inequality is due to \eqref{eq:gamma}, the second inequality was due to the inductive hypothesis, and the third equality was due to the auxiliary identity \eqref{eq:lambda_aux}. This establishes \eqref{eq:lambda_representation}.

  \paragraph{Lyapunov} As we noted in \eqref{eq:variances_sum_to_one}, our variance-stabilizing weights ensure $\smash{\E[ \sum_{t=1}^T h_t^2 \gamma_t^2] = 1}$, so the denominator of the Lyapunov condition \eqref{eq:general_lyapunov} is constant, and we only need to show that its numerator converges to zero. Writing this condition in terms of the allocation rates as in \eqref{eq:lambda_representation},
  \begin{align}
    \label{eq:lyapunov_lambda}
    \sum_{t=1}^T |h_t|^{2+\delta} \EE[t-1]{|\gamma_t|^{2+\delta}} 
      = \sum_{t=1}^T (\lambda_t p_{t-1})^{1 + \delta/2} 
          \frac{\EE[t-1]{|\gamma_t|^{2+\delta}}}{\EE[t-1]{\gamma_t^2}^{1 + \delta/2} }.
  \end{align}

  Let's upper bound the product $p_{t-1}$. It is largest when the allocation rates $\lambda_t$ are smallest. Substituting the lower bound for $\lambda_t$ from \eqref{eq:general_allocation_rate_bounds} and writing out the product explicitly,
  \begin{equation}
    \label{eq:product_upper_bound}
    \begin{aligned}
        p_{t-1}
        &\leq \prod_{s=1}^{t-1}\p{ 1 - \frac{1}{1 + T - s}} \\
        &= \p{ \frac{T - 1}{T} }  \p{ \frac{T - 2}{T - 1}} \cdots  \p{ \frac{T - t + 1}{T - t + 2} }   \\
        &= 1 - \frac{t}{T} + \frac{1}{T}.
    \end{aligned}
  \end{equation}

Next, using \eqref{eq:product_upper_bound} and the upper bound assumed in \eqref{eq:general_allocation_rate_bounds}, the product $\lambda_t p_{t-1}$ is bounded as
\begin{equation}
  \begin{aligned}
    \label{eq:lambda_p}
    \lambda_t p_{t-1}
      &\lesssim \frac{1}{\EE[t-1]{\gamma_t^2}}\frac{ 1 - (t/T) + (1/T) }{t^{-\alpha} + T^{1-\alpha} - t^{1-\alpha}} \\
      &= \frac{1}{\EE[t-1]{\gamma_t^2}}\frac{1}{T^{1-\alpha}} 
          \frac{  1 - (t/T) + (1/T)}
               { 1 - (t/T)^{1-\alpha} + (1/T)(t/T)^{-\alpha}}.
  \end{aligned}  
\end{equation}

In a moment we will show that the last factor in \eqref{eq:lambda_p} is uniformly bounded by a finite constant. If we take this as fact for now, we can upper bound the summand in \eqref{eq:lyapunov_lambda} via \eqref{eq:lambda_p} to get
\begin{equation}
  \label{eq:lambda_p2}
  (\lambda_t p_{t-1})^{1+\delta/2}  
  \frac{\EE[t-1]{|\gamma_t|^{2+\delta}}}{\EE[t-1]{\gamma_t^2}^{1 + \delta/2} }
  \lesssim
      \frac{1}{T^{(1-\alpha)(1+\delta/2)}}
       \frac{\EE[t-1]{|\gamma_t|^{2+\delta}}}
         {\EE[t-1]{\gamma_t^2}^{2+\delta}}.
\end{equation}
By \eqref{eq:variance_growth_rate} the second factor in \eqref{eq:lambda_p2} also bounded. Summing up \eqref{eq:lambda_p2} over $t$,
\begin{equation}
  \label{eq:lyapunov_conclusion}
  \sum_{t=1}^T
  (\lambda_t p_{t-1})^{1+\delta/2}  
  \frac{\EE[t-1]{|\gamma_t|^{2+\delta}}}{\EE[t-1]{\gamma_t^2}^{1 + \delta/2} } 
  \lesssim \frac{T}{T^{(1-\alpha)(1+\delta/2)}},
\end{equation}
which converges to zero as $T \to \infty$ provided that $1 - (1-\alpha)(1+\delta/2) < 0$, or $\alpha < \delta/(2+\delta)$, proving that the Lyapunov condition is satisfied. 

To complete the proof, we prove that the last factor in \eqref{eq:lambda_p} is bounded. This factor is $f_{\alpha}(t/T, 1/T)$ for 
\begin{align}
  f_{\alpha}(x, c) := \frac{1 - x + c}{ 1 - x^{1-\alpha} + cx^{-\alpha}} \quad \text{ defined on the domain } \quad x \in [0, 1), \text{ and } c > 0. 
\end{align}

Let's show that $f_{\alpha}$ is uniformly bounded in its domain. Its partial derivative with respect to $c$ is
\begin{equation}
    \frac{\partial}{\partial c} f_{\alpha}(x, c) 
      = \frac{1 - x^{\alpha}}
             { (1 - x^{1-\alpha} + cx^{-\alpha})^2},
\end{equation}

which is always nonpositive for any $x, c$ in the domain of $f_{\alpha}$, implying that for any $c > 0$
\begin{equation}
  f_{\alpha}(x,c) = 
    \frac{1 - x + c}
         { 1 - x^{1-\alpha} + cx^{-\alpha}} 
    \leq \frac{1 - x}
              {1 - x^{1-\alpha}} 
    = f_{\alpha}(x, 0).
\end{equation}

The function $f_{\alpha}(x, 0) = (1-x)/(1-x^{1-\alpha})$ for $x \in \mathbb{R}$ has derivative equal to
\begin{equation}
  \label{eq:fzero_dx}
  \frac{\partial}{\partial x} f_{\alpha}(x, 0) 
    =  -\frac{x^\alpha ( x^\alpha -1 - \alpha(x-1))}{(x - x^\alpha)^2}.
\end{equation}

The sign of \eqref{eq:fzero_dx} depends only on the function $g(x, \alpha) := x^\alpha - 1 - \alpha (x-1)$, which is the remainder of the second order taylor expansion of $x^{\alpha}$ around $x=1$. By the mean value theorem, $g(x, \alpha) = (1/2)\alpha(\alpha - 1)z^{\alpha-2}(x - 1)^2$ for some $z$ between $x$ and $1$, which is nonpositive for $\alpha < 1$, implying that \eqref{eq:fzero_dx} is nonnegative for $\alpha < 1$. Therefore, $f_{\alpha}(x,0)$ is increasing at $x \in [0,1)$ for $\alpha < 1$. 
It follows that its value on the interval $[0, 1)$ is bounded by its limit as $x$ approaches one. By L'H\^{o}pital's rule, $\lim_{x \to 1}f_{\alpha}(x, 0) = (1-\alpha)^{-1}$, which is finite for any $\alpha < 1$.
 
\paragraph{Infinite sampling} Again recalling the argument in \eqref{eq:variances_sum_to_one}, when variance-stabilizing weights are used the denominator of the infinite sampling condition \eqref{eq:general_infinite_sampling} is constant. 
 In the notation of \eqref{eq:lambda_representation},  we only need to show that its numerator goes to zero, or equivalently,
\begin{equation}
  \begin{aligned}
    \label{eq:h_sum}
    \sum_{t=1}^T h_t 
      &= \sum_{t=1}^T \sqrt{\lambda_t p_{t-1} \EE[t-1]{\gamma_t^2}^{-1} } 
      \xrightarrow[T \to \infty]{p} \infty.
    \end{aligned}
\end{equation}

Let $T_{0} := \min \{ t : \sum_{s=1}^t \lambda_s > 1/2 \}$. Note that since $\lambda_T = 1$, there exists at least one period $t \leq T$ that satisfies this condition. Now consider the constrained minimization problem,
\begin{equation}
  \label{eq:constrained_minimization}
    p_{T_0 - 1} :=
    \min_{\lambda} \left\{ \prod_{s=1}^{T_{0}-1}(1- \lambda_s)
      \quad \text{s.t} \quad  
    \sum_{s=1}^{T_{0}-1} \lambda_s \leq \frac{1}{2} \text{ and } \lambda_s \geq 0 \right\}.
\end{equation}
The objective function in \eqref{eq:constrained_minimization} is log-concave and strictly decreasing in $\lambda_s$, therefore the problem has corner solutions of the form $\lambda_s = 1/2$ for some period $s$ and $\lambda_{s'} = 0$ for all $s' \neq s$. Moreover, note that $p_t$ is decreasing in $t$, so that
\begin{equation}
  \label{eq:product_lower_bound}
    p_{1} \geq \cdots \geq p_{T_{0}-1} \geq \tilde{p} = \frac{1}{2}.
\end{equation}

Now consider two cases. First, if $T_{0} \geq T/2$, then consider the sum only up to $T/2$. In this range, \eqref{eq:h_sum} can be lower bounded by replacing the allocation rate $\lambda_t$ with its lower bound from \eqref{eq:general_allocation_rate_bounds}, replacing
 $p_{t-1}$ by the lower bound $1/2$ from \eqref{eq:product_lower_bound}, 
and replacing \smash{$\EE[t-1]{\gamma_t^2}^{-1}$} with the lower bound $t^{-\alpha}/C'$ implied by \eqref{eq:variance_growth_rate},
\begin{equation}
    \label{eq:first_branch}
    \sum_{t=1}^T h_t 
      \gtrsim \sum_{t=1}^{T/2} \frac{t^{-\alpha/2}}{\sqrt{1 + T - t}}
      \geq \int_{0}^{T/2} \frac{t^{-\alpha/2}}{\sqrt{T}} 
      = T^{\frac{1-\alpha}{2}},
\end{equation}
where in the second inequality we lower bounded the numerator sum by an integral. Since \eqref{eq:first_branch} goes to infinity goes to infinity for $\alpha < 1$, \eqref{eq:h_sum} is satisfied.

Next, consider the case $T_{0} \leq T/2$. In this case we will sum only up to $T_{0}$. Again we replace $p_{t-1}$ by a constant following \eqref{eq:product_lower_bound}, and then use the inverse of the upper bound for $\lambda_t$ in \eqref{eq:general_allocation_rate_bounds},
\begin{equation}
  \begin{aligned}
    \label{eq:second_branch}
    \sum_{t=1}^{T/2} h_t 
      &\gtrsim \sum_{t=1}^{T_{0}} \lambda_t \sqrt{\frac{\EE[t-1]{\gamma_t^2}^{-1}}{\lambda_t}}  \\
      &\geq \sum_{t=1}^{T_{0}} \lambda_t \sqrt{T^{1-\alpha}- t^{1-\alpha}} \\
      &\geq \sqrt{T^{1-\alpha}- (T/2)^{1-\alpha}} \sum_{t=1}^{T_{0}} \lambda_t  \\
      &\gtrsim T^{\frac{1-\alpha}{2}} \sum_{t=1}^{T_{0}} \lambda_t.
  \end{aligned}
\end{equation}

Now, recalling the definition of $T_{0}$, $\sum_{t=1}^{T_{0}} \lambda_t$ is greater than $1/2$, so again the sum \eqref{eq:second_branch} goes to infinity for $\alpha < 1$. 

\end{proof}

\subsection{Asymptotic normality of alternative estimators}
\label{sec:alternative_estimators}

\newcommand{\Qhavg}{\hQ^{h\text{-avg}}_T}
\newcommand{\Vhavg}{\hV^{h\text{-avg}}_T}
\newcommand{\tGamma}{\tilde{\Gamma}}
\newcommand{\tV}{\tilde{V}}

In the Discussion section, we claimed that the alternative estimator $\Qhavg$ defined as
\begin{equation}
  \label{eq:weighted-avg-restated}
  \Qhavg = 
    \frac{\sum_{t=1}^T h_t \frac{ \ind{W_t = w} }{e_t} \p{Y_t - Q}}
          {\sum_{t=1}^T h_t \frac{\ind{W_t = w} }{e_t}},
\end{equation}

when appropriately studentized, has an asymptotically normal distribution under the same conditions as the adaptively-weighted estimator \eqref{eq:aw}. That is, 
\begin{equation}
  \label{eq:weighted-avg-clt}
  \frac{\Qhavg - Q}
        {(\hV_T^{\text{h-avg}})^{1/2}}  
       \xrightarrow[T \to \infty]{d} N(0, 1) 
  \qquad \text{where} \qquad
  \Vhavg :=  
  \frac{\sum_{t=1}^T h_t^2 \frac{ \ind{W_t = w} }{e_t^2} \, \p{Y_t - \Qhavg}^2}
       { \p{ \sum_{t=1}^T h_t \frac{\ind{W_t = w} }{e_t}}^2 }.
\end{equation}

Let's sketch a proof of this claim. We will show that the studentized statistic in \eqref{eq:weighted-avg-clt} asymptotically behaves like the studentized statistic of particular adaptively-weighted estimator with ``oracle'' plugin estimator $\hm_t(w) = Q(w)$:
\begin{align}
  \label{eq:qoracle}
  \tQ_T := 
    \frac{\sum_{t=1}^T h_t (\tGamma_t - Q)}
         {\sum_{t=1}^T h_t}
  \qquad \tGamma_t := Q + \frac{\ind{W_t = w}}{e_t}(Y_t - Q)
\end{align}
Since this choice of plugin is obviously consistent, under the conditions of Theorem \ref{theo:arm_value_clt},
\begin{equation}
  \label{eq:qoracleclt}
  \begin{split}
  \frac{\tQ_T - Q}
       {\tV_T^{1/2}}
       \xrightarrow[T \to \infty]{d} N(0, 1) 
  \qquad \text{where} \qquad
  \tV_T :=  
  \frac{\sum_{t=1}^T h_t^2 \p{\tGamma_t - \tQ_T}^2}
       { \p{ \sum_{t=1}^T h_t }^2 }.
\end{split}
\end{equation}

We begin by showing that $\smash{(\tQ_T - Q)}$ is asymptotically equivalent to $(\smash{\Qhavg - Q})$. Substituting the definition of $\tGamma$ from \eqref{eq:qoracleclt},
\begin{equation}
  \label{eq:numerator-decomp}
  \begin{aligned}
  (\tQ_T - Q) 
    = 
    \frac{\sum_{t=1}^T h_t \frac{\ind{W_t = w}}{e_t}(Y_t - Q)}
         {\sum_{t=1}^T h_t}
    = 
    (\Qhavg - Q)
    \frac{ \sum_{t=1}^T h_t \frac{\ind{W_t = w} }{e_t} }
          {\sum_{t=1}^T h_t },
\end{aligned}
\end{equation}
where in the last equality we multiplied and divided by $\smash{\sum_{t=1}^T h_t \frac{\ind{W_t = w}}{e_t}}$. We can show that the ratio of weights multiplying $(\Qhavg -  Q)$ in the last expression in \eqref{eq:numerator-decomp} converges to one, since its deviations from one can be decompose into two factors,
\begin{equation}
  \label{eq:hterm-decomp}
  \begin{aligned}
    \frac{\sum_{t=1}^T h_t \p{\frac{\ind{W_t = w} }{e_t} - 1}}
         { \sum_{t=1}^T h_t }
    = \frac{\sum_{t=1}^T h_t \p{\frac{\ind{W_t = w}}{e_t} - 1}}
            {\EE{\sum_{t=1}^T h_t^2 \p{\frac{\ind{W_t = w}}{e_t} - 1}^2 }^{1/2}}
      \frac{\EE{\sum_{t=1}^T h_t^2 \p{\frac{\ind{W_t = w}}{e_t} - 1}^2 }^{1/2}}
            {\sum_{t=1}^T h_t}.
\end{aligned}
\end{equation}
The mean square of the first factor is one, so it is $O_p(1)$;
the square of the second factor is less than $(\sum_{t=1}^T \E[h_t^2/e_t])/(\sum_{t=1}^T h_t)^2$, which is $o_p(1)$ 
under our infinite sampling assumption~\eqref{eq:infinite_sampling}. Therefore,  the numerators of the studentized statistics in \eqref{eq:weighted-avg-clt} and \eqref{eq:qoracleclt} are asymptotically equivalent.

Next we will show that the two variances $\Vhavg$ and $\tV_T$ are also asymptotically equivalent. Again substituting the definition of $\tGamma$ from \eqref{eq:qoracle} into the definition of the $\tV_T$ from \eqref{eq:qoracleclt} and expanding the squares in the numerator,
\begin{equation}
  \label{eq:weighted-variance-decomp}
  \begin{aligned}
    \tV_T 
    &= 
      \frac{\sum_{t=1}^T h_t^2\frac{\ind{W_t = w}}{e_t^2} \p{Y_t - Q}^2 }
           {\p{ \sum_{t=1}^T h_t}^2 }
      +
      \p{Q - \tQ_T}^2
      \frac{\sum_{t=1}^T h_t^2}
           {\p{\sum_{t=1}^T h_t}^2}
      +
      2\p{Q - \tQ_T}
      \frac{\sum_{t=1}^T h_t^2\frac{\ind{W_t = w}}{e_t} \p{Y_t - Q} }
           {\p{ \sum_{t=1}^T h_t}^2 }.
  \end{aligned}
\end{equation}

The second and third terms in \eqref{eq:weighted-variance-decomp} are $o_p(1)$. Since  $\tQ_T$ is consistent, we only need to show that the factors multiplying $(Q - \tQ_T)$ in each term are at most bounded in probability. The factor multiplying $\smash{(Q - \tQ_T)^2}$ in the second term is always smaller than one since the weights are nonnegative. 
 The factor multiplying $\smash{(Q - \tQ_T)}$ in the third term can be decomposed as
\begin{equation}
  \label{eq:decomp-tmp}
  \frac{\sum_{t=1}^T h_t^2\frac{\ind{W_t = w}}{e_t} \p{Y_t - Q} }
       { \EE{ \sum_{t=1}^T h_t^2/e_t } }
  \frac{ \EE{ \sum_{t=1}^T h_t^2/e_t } }
       {\p{ \sum_{t=1}^T h_t}^2 }.
\end{equation}

The expectation of the first factor in \eqref{eq:decomp-tmp} is bounded, since by the triangle inequality 
\begin{equation}
  \label{eq:decomp-tmp-bound}
  \frac{ \EE{\sum_{t=1}^T h_t^2/e_t \EE[t-1]{\ind{W_t = w}|Y_t - Q|}} }
        { \EE{ \sum_{t=1}^T h_t^2/e_t } }
  \lesssim
  \frac{\EE{\sum_{t=1}^{T} h_t^2}}
        { \EE{ \sum_{t=1}^T h_t^2/e_t } }
\end{equation}
where in the inequality we used the fact that $\EE[t-1]{\ind{W_t = w}|Y_t - Q|} = \EE{|Y_t(w) - Q|}e_t$ and $Y_t(w)$ has bounded absolute moments. By Markov's inequality this implies that the first factor in \eqref{eq:decomp-tmp} is $O_p(1)$. By the infinite sampling condition \eqref{eq:infinite_sampling} the second factor is $o_p(1)$. Therefore, \eqref{eq:decomp-tmp} is $o_p(1)$ and thus so is the the third factor in \eqref{eq:weighted-variance-decomp}.

The previous paragraph confirmed that only the first term in \eqref{eq:weighted-variance-decomp} is asymptotically non-negligible. Multiplying and dividing this term by $\p{\sum_{t=1}^T h_t \frac{\ind{W_t = w}}{e_t}}^2$ and rearranging,
\begin{equation}
  \label{eq:weighted-variance-decomp2}
  \begin{aligned}
    \tV_T 
     &= \frac{\sum_{t=1}^T h_t^2\frac{\ind{W_t = w}}{e_t^2} \p{Y_t - Q}^2 }
           {\p{ \sum_{t=1}^T h_t \frac{\ind{W_t = w} }{e_t}}^2  }
      \p{ \frac{ \sum_{t=1}^T h_t \frac{\ind{W_t = w} }{e_t}  }
            { \sum_{t=1}^T h_t } }^2  + o_p(1)
   \\
    &=
      \Vhavg\cdot
      \p{ \frac{ \sum_{t=1}^T h_t \frac{\ind{W_t = w} }{e_t}  }
            { \sum_{t=1}^T h_t } }^2
      +
      o_p(1).
  \end{aligned}
\end{equation}

As argued in \eqref{eq:hterm-decomp}, the ratio of weights multiplying $\Vhavg$ in \eqref{eq:weighted-variance-decomp2} converges to one in probability, so that $\tV_T = \Vhavg(1 + o_p(1))$.

The argument above implies that
\begin{equation}
  \frac{\tQ_T - Q}
       {\tV_T^{1/2}}
  =
  \frac{\Qhavg - Q}
        {(\hV_T^{\text{h-avg}})^{1/2}} 
  \p{1 + o_p(1)},
\end{equation}
therefore our central limit theorem claim \eqref{eq:weighted-avg-clt} follows from Slutsky's theorem and \eqref{eq:qoracleclt}.

\subsection{Two-point allocation bounds}
\label{sec:twopoint_allocation_bounds}

Here we show that the two-point allocation $\lambda_t^{\text{twopoint}}$ defined in \eqref{eq:two_point_allocation} indeed lies between the bounds stated in condition \eqref{eq:allocation_rate_bounds} of Theorem \ref{theo:variance_stabilizing_clt}. Recall that $\lambda_t^{\text{twopoint}}$ is a convex combination between two values, one associated with a scenario in which propensity scores will be always high, and one associated with the scenario in which they decay. We'll begin by showing that the two values satisfy the following inequality, 
\begin{equation}
  \label{eq:lambda_inequality}
    \frac{1}{1 + T - t} \leq
    \frac{t^{-\alpha}}{t^{-\alpha} + \frac{T^{1-\alpha} - t^{1-\alpha}}{1 - \alpha}}
     \qquad \text{for all} \quad \alpha, \, t, \text{ and } T.
\end{equation}

This immediately confirms the lower bound stated in \eqref{eq:allocation_rate_bounds}, since it coincides with the smaller value in \eqref{eq:lambda_inequality}. For upper bound, use the fact that $e_t \geq Ct^{-\alpha}$ and that $(1-\alpha)^{-1} \geq 1$ to confirm that the higher value is smaller than the upper bound required by the theorem,
\begin{equation}
  \label{eq:lambda_bad_upper_bound}
    \frac{t^{-a}}{t^{-\alpha} + \frac{T^{1-\alpha} - t^{1-\alpha}}{1 - \alpha}} 
    \leq \frac{e_t/C}{t^{-\alpha} + \frac{T^{1-\alpha} - t^{1-\alpha}}{1 - \alpha}} 
    \lesssim \frac{e_t}{t^{-\alpha} + T^{1-\alpha} - t^{1-\alpha}}.
\end{equation}

All that is left is to establish inequality \eqref{eq:lambda_inequality}. Rearranging, we see that \eqref{eq:lambda_inequality} is true if an only if 
\begin{equation}
  \label{eq:lambda_inequality_rearranged}
    (t/T)^{\alpha} - (1-\alpha) - \alpha(t/T) \leq 0  \qquad \text{for all} \quad \alpha, \, t, \text{ and } T.
\end{equation}

To show that inequality \eqref{eq:lambda_inequality_rearranged} is true, consider the function
\begin{equation*}
  f(x, \alpha)= x^{\alpha} - (1-\alpha) - \alpha x \qquad \text{for } x \in (0, 1] \text{ and } \alpha \in [0, 1).
\end{equation*}
This function is increasing in $x$ for all $\alpha$ on its domain, since $f'_x(x, \alpha) = \alpha x^{\alpha - 1} - \alpha > 0$. Therefore, it attains a maximum at $f(1, \alpha) = 0$ for any $\alpha$. This proves \eqref{eq:lambda_inequality_rearranged} since it equals $f(t/T, \alpha)$, and concludes the proof.

%
%

\section{Simulation details}
\label{sec:simdetails}

\subsection{Modified thompson sampling}
\label{sec:thompson_sampling}

We used the following modified Thompson Sampling algorithm. Set the prior mean and variance of each arm to $m_0(w) = 0$ and $\sigma_0(w) = 1$. Then, for each period $t$,
\begin{enumerate}
  \item Update the posterior probability distribution of the estimate of the mean assuming model with a normal prior and normal likelihood, get $\hat{m}_t(w), \hsigma_t^2(w)$.
  \item Draw $L$ times for each arm $w \in \{1, \cdots, K\}$ from the posterior distribution.
    \begin{align}
      \tilde{y}_{w}^{(\ell)} \sim \mathcal{N}(\hat{m}_t(w), \hsigma_t^2(w))
    \end{align}
  \item Compute ``raw'' Thompson Sampling probabilities.
    \begin{align}
        \bar{e}_t(w) = \frac{1}{L} \sum_{\ell=1}^{L} \ind{ w = \arg\max\{ y_{1}^{(\ell)}, \cdots, y_{K}^{(\ell)} \} }
    \end{align}
  \item Assign a probability floor of $x_t\%$ as follows. If an arm has $\bar{e}_t(w) < x_t$, let $e_t(w) = x_t$. Then, shrink all other arms by letting $e_t(w) = x_t + c(\bar{e}_t(w) - x_t)$, where $c$ is some constant that makes the sum of all assignment probabilities be one.
  \item Draw from Thompson Sampling probabilities with floor for every $t$ in this batch.
    \begin{align}
        W_t \sim \text{Multinomial}\{e_{t}(1), \cdots, e_t(K) \}
    \end{align}
  \item Store the vector of probabilities $e_t(\cdot)$, the selected arm $W_t$ and the observed reward $Y_t$.
\end{enumerate}

\subsection{Non-asymptotic confidence sequences}
\label{sec:nonasymptic_ci}

The confidence sequences around arm values denoted by ``Howard et al CI'' in Section \ref{sec:sims} were constructed via the empirical-Bernstein confidence sequence described in Theorem 4 of \cite{howard2021uniform} with the Gamma-Exponential mixture uniform boundary. In their notation, the chosen parameters are as follows. The scaling parameter is set to $c = 2$, the width of the support of the outcome variables $Y_t(w)$. The variance process is the sum of empirical variances, computed as 
$$\widehat{V}_{T, w} = \sum_{\substack{1 \leq t \leq T \\ t: W_t = w}} (Y_t - \widehat{Y}_{t-1, w})^2,$$

where $\widehat{Y}_{t-1, w}$ is the sample mean using past data, 
$$\widehat{Y}_{t-1, w} = \frac{1}{t-1} \sum_{\substack{1 \leq s \leq t-1 \\ s: W_s = w}} Y_{s}.$$

We optimize the bounds to be tighter for the ``bad" arm. Since in expectation this arm will be pulled $t_{opt} = \lfloor (1/K) \sum_{t=1}^{T} t^{-.7} \rfloor$ times, the optimized intrinsic time is set to $v_{opt, w} = Var(Y_{t}(w)) \cdot t_{opt}$. Both the outcome variance and its support are assumed to be known.

To construct 90\%-confidence intervals for the difference between two arms, we construct two-sided 95\% confidence intervals for each arm and then take a union bound. That is, if the 95\%-intervals for arms $1$ and $3$ are $[\ell_1, u_1]$ and $[\ell_3, u_3]$, then our 90\%-confidence interval for their difference is $[\ell_3 - u_1, u_3 - \ell_1]$.

\fi

\end{document}